\definecolor{OliveGreen}{rgb}{0,0.6,0}
\def\lV{\left\lVert }
\def\rV{\right\lVert }
\def\lv{\left\lvert}
\def\rv{\right\rvert}
\def\l{\left\langle}
\def\r{\right\rangle}
\def\H{\mathcal{H} }
\def\G{\mathcal{G} }
\def\I{\mathcal{I} }
\def\Ca{\mathcal{C}}
\def\P{\mathcal{P} }
\def\tr{\mathrm{tr} }
\def\W{\mathcal{W} }
\def\C{\mathbb{C} }
\def\bz{\bm{z} }
\def\bzs{\bm{z}_{\star} }
\def\bx{\bm{x} }
\def\bs{\bm{s}}
\def\bss{\bm{s}_{\star} }
\def\bf{\bm{f} }
\def\bv{\bm{v} }
\def\cO{\mathcal{O} }
\def\BA{{\bm{A}} }
\def\BB{{\bm{B}} }
\def\BL{{\bm{L}} }
\def\BR{{\bm{R}} }
\def\BU{{\bm{U}} }
\def\BV{{\bm{V}} }
\def\BSigma{\bm{\Sigma} }
\def\BSigmas{\bm{\Sigma}_{\star} }
\def\BQ{{\bm{Q}} }
\def\BI{{\bm{I}} }
\def\BX{{\bm{X}} }
\def\BY{{\bm{Y}} }
\def\BO{{\bm{O}} }
\def\BDeltaL{\bm{\Delta}_{\BL}}
\def\BDeltaR{\bm{\Delta}_{\BR}}
\def\BDeltaRk{\bm{\Delta}_{\BR_k}}
\def\BDeltaLk{\bm{\Delta}_{\BL_k}}
\def\BDelta{\bm{\Delta}}
\def\BLn{\bm{L}_{\natural}}
\def\BRn{\bm{R}_{\natural}}
\def\BLs{\bm{L}_{\star}}
\def\BRs{\bm{R}_{\star}}
\def\sigmas{\sigma^{\star}}
\def\SVD{\mathrm{SVD} }
\def\fro{\mathrm{F}}
\def\rank{\mathrm{rank}}
\def\supp{\mathrm{supp}}
\def\Re{\mathrm{Re}}
\DeclareMathOperator*{\minimize}{\mathrm{minimize}}
\DeclareMathOperator*{\subjectto}{\mathrm{subject~to}}
\newtheorem{theorem}{Theorem}
\newtheorem{assumption}{Assumption}
\newtheorem{claim}{Claim}
\newtheorem{lemma}{Lemma}
\newtheorem{proposition}{Proposition}
\newtheorem{corollary}{Corollary}
\newtheorem{remark}{Remark}
\Crefname{assumption}{Assumption}{Assumptions}
\Crefname{example}{Example}{Examples}
\Crefname{remark}{Remark}{Remarks}
\Crefname{claim}{Claim}{Claims}
\begin{document}

%\title{Efficient Robust Low-Rank Hankel Matrix Completion without Condition Number\thanks{\textbf{Funding:} The work of Juntao You is ... }}
\title{Accelerating Ill-conditioned Hankel Matrix Recovery via Structured Newton-like Descent%\thanks{Submitted to the editors DATE.
%\footnote{The work of H.Q.~Cai is partially supported by NSF DMS 2304489.}
}

\author{HanQin Cai\thanks{Department of Statistics and Data Science and Department of Computer Science, University of Central Florida, Orlando, FL 32816, USA. (email: \href{mailto:hqcai@ucf.edu}{hqcai@ucf.edu}) }
\and
Longxiu Huang\thanks{Department of Computational Mathematics, Science and Engineering and Department of Mathematics, Michigan State University, East Lansing, MI 48823, USA. (email:  \href{mailto:huangl3@msu.edu}{huangl3@msu.edu}).}
\and
Xiliang Lu\thanks{School of Mathematics and Statistics, and Hubei Key Laboratory of Computational Science, Wuhan University, Wuhan 430072, China.
(email: \href{mailto:xllv.math@whu.edu.cn}{xllv.math@whu.edu.cn})}
\and
 Juntao You\thanks{School of Artificial Intelligence, Wuhan University, National Center for Applied Mathematics in Hubei, and Hubei Key Laboratory of Computational Science,  Wuhan, China.(Corresponding author: \href{mailto:youjuntao@whu.edu.cn}{youjuntao@whu.edu.cn}).} $^{,}$\thanks{Institute for Advanced Study, Shenzhen University, Shenzhen 518000, China.  }
 }

%\headers{Accelerating Ill-conditioned Hankel Matrix Recovery}{HanQin Cai, Longxiu Huang, Xiliang Lu, and Juntao You}
\date{}

\maketitle

\begin{abstract}
This paper studies the robust Hankel recovery problem, which simultaneously removes the sparse outliers and fulfills missing entries from the partial observation. We propose a novel non-convex algorithm, coined Hankel Structured Newton-Like Descent (HSNLD), to tackle the robust Hankel recovery problem. HSNLD is highly efficient with linear convergence, and its convergence rate is independent of the condition number of the underlying Hankel matrix. The recovery guarantee has been established under some mild conditions. Numerical experiments on both synthetic and real datasets show the superior performance of HSNLD against state-of-the-art algorithms. \\

%\begin{keywords}
\noindent \textbf{Keywords:} Ill-conditioned Hankel matrix, robust matrix completion, Newton-like descent, precondition, direction of arrival, nuclear magnetic resonance.
%\end{keywords}
\end{abstract}

% \begin{MSCcodes}
% 15A29, 15A83, 47B35, 90C17, 90C26, 90C53  % Checked
% \end{MSCcodes}

\section{Introduction} The problem of low-rank Hankel matrix recovery has widely appeared in magnetic resonance imaging\cite{jin2016general,jacob2020structured}, nuclear magnetic resonance spectroscopy \cite{holland2011fast,qu2015accelerated}, seismic analysis \cite{wang2018hankel,chen2015robust}, system identification \cite{fazel2013hankel,smith2014frequency}, time series forecasting \cite{gillard2018structured,sun2021hankel,chen2024laplacian}, and many other real-world applications. Mathematically, 
a complex-valued Hankel matrix is structured as: 
\begin{align*} \label{eq:Hankel definition}
\bm{X} =
\begin{bmatrix}
x_1 & x_2  & \cdots & x_{n_2}\\
x_2 & x_3  & \cdots & x_{n_2+1}  \\
\vdots &  \vdots & \cdots  &\vdots\\
x_{n_1} & x_{n_1+1}  & \cdots   &x_{n}
\end{bmatrix}\in\C^{n_1\times n_2}.
\end{align*}
Since each skew-diagonal of a Hankel matrix has the same values, it contains only $n = n_1+n_2-1$ distinct values in total. In practice, keeping track of a memory-efficient reweighed vector 
\begin{equation*}
\bz = [x_1,\sqrt{2}x_2,\cdots,\sqrt{\varsigma_i}x_i,\cdots,\sqrt{2}x_{n-1},x_n]^\top\in\C^n
\end{equation*}
is sufficient to represent a large-scale Hankel matrix, where $\varsigma_i$ is the number of entries on the $i$-th antidiagonal of an $n_1\times n_2$ matrix. Note that $\|\bz\|_2=\|\bm{X}\|_\fro$ which is convenient for formulating the Hankel recovery problem in its equivalent vector form.
We also introduce a reweighed Hankel mapping $\G:\C^n\rightarrow\C^{n_1\times n_2}$ and a reweighting operator $\W:\C^n\rightarrow\C^n$ such that
\begin{equation} \label{eq:Hankelnotation}
    \G\bz=\bm{X} \quad\text{ and }\quad \W\bz = [x_1,x_2,\cdots,x_i,\cdots,x_{n-1},x_n]^\top.
\end{equation}
Note $\G^{*}\G=\I$ where $(\cdot)^*$ denotes the adjoint operator and $\I$ is the identity operator.

%\subsection{Problem formula}
In the applications of the low-rank Hankel matrix, we face three major challenges: (i) \textit{missing data}, (ii) \textit{sparse outliers}, and (iii) \textit{ill condition}. To address the first two challenges, we can formulate the Hankel matrix recovery problem:
\begin{equation} \label{eq:Hankel_objective_matrix}
\begin{split}
    \minimize_{\BX,\BO\in\C^{n_1\times n_2}} &~ \frac{1}{2p} \l\P_{\bm{H}_\Omega} \BY -\P_{\bm{H}_\Omega}(\BX+\BO), \BY -(\BX+\BO)\r \cr
    \subjectto & ~ \BX \textnormal{ is rank-$r$ Hankel matrix and } \BO \textnormal{ is sparse,} 
\end{split}
\end{equation}
where $p$ is the observation rate, $\BO$ is the sparse outlier matrix,  $\P_{\bm{H}_\Omega}: \C^{n_1\times n_2}\rightarrow\C^{n_1\times n_2}$ is the matrix observation operator, and $\P_{\bm{H}_\Omega} \BY$ is the partially observed data matrix with sparse corruptions. However, as discussed in the literature \cite{cai2023structured}, randomly positioned, non-structured missing entries and outliers can be easily recovered through the redundant skew-diagonal in the Hankel matrix. One can also Hankelize $\P_{\bm{H}_\Omega} \BY$ by averaging its skew-diagonal elements as $\BX$ is Hankel structured. Thus, we are only interested in the more challenging recovery problem with Hankel structured missing entries and outliers, i.e., some skew-diagonals are entirely missed or corrupted. Under this setting, all parties of \eqref{eq:Hankel_objective_matrix} are Hankel structured including the observation pattern $\bm{H}_\Omega$, thus we can rewrite the Hankel recovery problem in the equivalent vector form: 
\begin{equation} \label{eq:Hankel_objective_vector} %\label{eq:RHC1}
    \begin{split}
        \minimize_{\bz,\bs\in\C^n}\ & ~\frac{1}{2p}\l\Pi_{\Omega}\bf-\Pi_{\Omega}(\bz+\bs),\bf-(\bz+\bs) \r \cr
        \subjectto & ~ \rank\left(\G \bz\right)=r,  \quad \Pi_{\Omega}\bs \textnormal{ is }\alpha \textnormal{-sparse},
    \end{split}
\end{equation}
where $\Pi_\Omega: \C^n\rightarrow\C^n$ is the observation operator on $\C^n$ that corresponds to the Hankel structured sampling pattern, $\bf=\G^*\BY$ and $\bs=\G^*\BO$ are observed data and outlier matrices in their equivalent vector form, and $\alpha$ is a sparsity parameter defined later in \Cref{amp:sparsity}. Since only the observed outliers interfere with the recovery,  we put the sparsity constraint on $\Pi_\Omega \bs$ instead of the entire $\bs$.

Ill condition is one major challenge that remains unsolved for existing Hankel recovery literature, as the existing algorithms often struggle with convergence speed (detailed in \Cref{sec:related}). 
Unfortunately, ill-conditioned Hankel matrices often appear in many applications, specifically, when the frequency of a spectrally sparse signal is inseparable, the corresponding Hankel matrix has a large condition number \cite{li2020robust, li2021stable}. 
An example is Direction of Arrival (DOA) estimation which can be modeled as a low-rank Hankel recovery problem \cite{yang2018sparse}. When multiple sources arrive from close angles, the corresponding Hankel matrix has a large condition number, see \Cref{fig:DOA_explain} for illustration. 
DOA and many other applications of Hankel recovery, such as traffic anomaly detection \cite{chen2024laplacian} and seismic monitoring \cite{chen2015robust}, are very time-sensitive, thus developing highly efficient solvers is an eager task. This paper aims to study a provable non-convex algorithm for solving \eqref{eq:Hankel_objective_vector} that keeps high computational efficiency even if the condition number of the underlying Hankel matrix is large.

\begin{figure}
\vspace{-0.1in}
    \centering
    \includegraphics[width=0.66\linewidth]{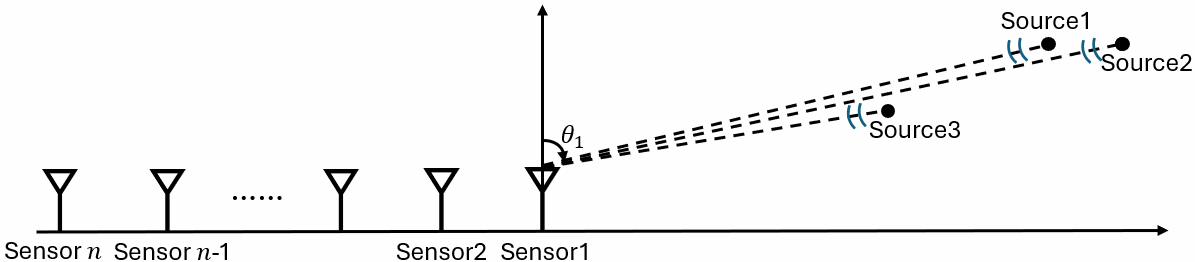}
    \hfill
    \includegraphics[width=0.32\linewidth]{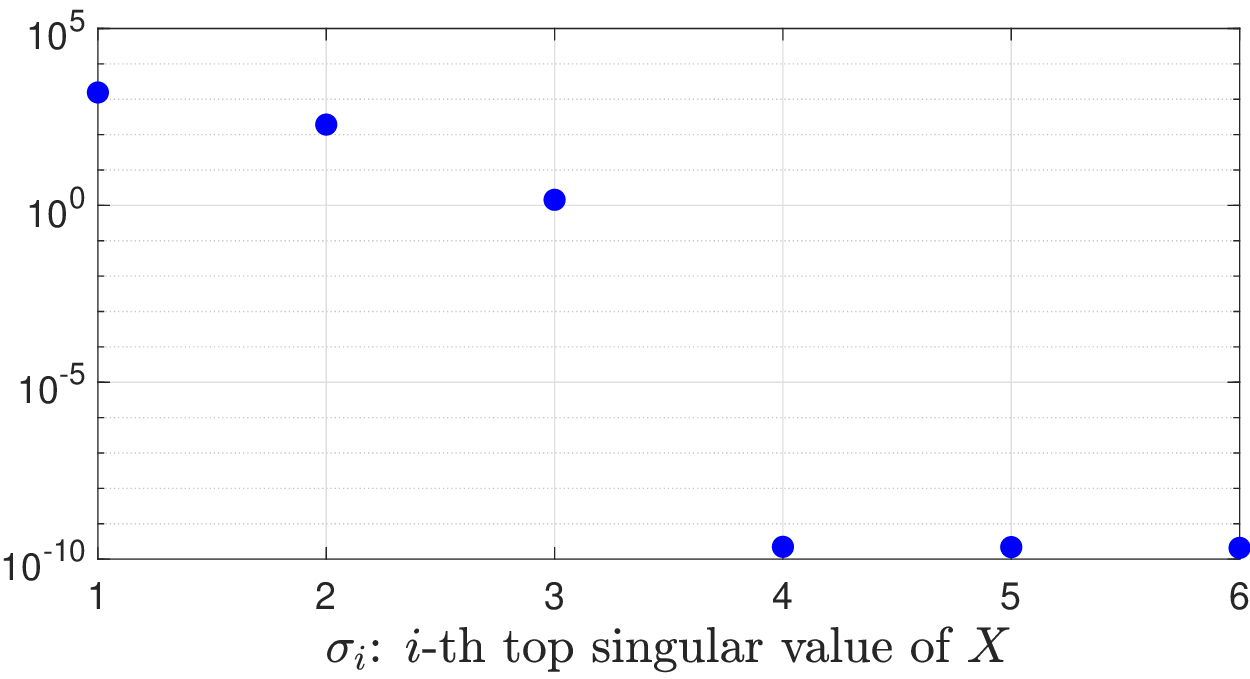}
    \caption{Left: DOA estimation for $r=3$ far-field sources from the directions $\theta= 87^{\circ}, 87.1^{\circ}, 87.3^{\circ}$, signal collected by a Uniform Linear Array with $n=2^{12}$ sensors. Right: Top six singular values of the corresponding square Hankel matrix. For a rank-3 approximation, the condition number $\kappa=\sigma_1/\sigma_3\approx 1083$. See \Cref{subsec:DOA} for more details of the DOA experiment.}
    %with a . Three random sources .  to a vector signal collected by $n=512$ sensors.  The condition
    %where $\bx$ is the received signal of the ULA with number of sensors $n=512$ and three random source signals from the directions $\theta= 84^{\circ}, 85^{\circ}, 86^{\circ}$, respectively, see \Cref{subsec:DOA} for more details. For top-3 truncated SVD of $\bm{X}$, the condition number $\kappa:=\sigma_1/\sigma_3$ is approximately $10^3$.}   
    \label{fig:DOA_explain}
    \vspace{-0.2in}
\end{figure}

% Since the applications of Hankel recovery, such as seismic monitoring \cite{chen2015robust} and traffic anomaly detection \cite{chen2024laplacian}, are often time-sensitive, thus developing efficient solvers is a crucial task. Although many fast non-convex algorithms have been proposed in recent years, they often struggle with ill-conditioned problems (detailed in \Cref{sec:related}) which is one of the aforementioned major challenges in applications. 
% This paper aims to study a provable non-convex algorithm for solving \eqref{eq:Hankel_objective_vector} that keeps high efficiency even if the condition number of the underlying Hankel matrix is large.  

\subsection{Assumptions} Denote the vector-form groundtruth as $\bzs$ and $\bss$ for low-rank Hankel and sparse outlier respectively. 
To address the Hankel recovery problem \eqref{eq:Hankel_objective_vector}, we introduce the following standard assumptions on data sampling, Hankel matrix incoherence, and outlier sparsity.

\begin{assumption}[Uniform sampling with replacement] \label{amp:Replacement}
% On the vector space $\C^n$, the observation index set $\Omega=\{a_k\}_{k=1}^m$ is sampled uniformly  with replacement from $[n]$. The observation rate is defined as $p:=|\Omega|/n=m/n$. 
Let the observation index set $\Omega=\{a_k\}_{k=1}^m$ be sampled uniformly  with replacement from $[n]$. The observation rate is   $p:=|\Omega|/n=m/n$. 
\end{assumption}
%\HQ{what this sampling means in DOA} \JT{Should be explained in section 4.2}
In this context, the sampling operator is defined as $\Pi_\Omega\bv=\sum_{i\in\Omega} v_i \bm{e}_i$ and allows repeated samples, thus $\l \Pi_\Omega\bv,\bv \r\neq \|\Pi_\Omega\bv\|_2^2$.

%Note that the entries on an antidiagonal are repeated. Thus, for sampling efficiency, we will sample no more than once on each of the antidiagonals. Hence, our sampling model is presented on the vector that consists of the distinct elements of the Hankel matrix. In practice, the true probability $p$ is usually unknown and it is common to take $p=|\Omega|/n$.
%For the reader's interest, we highlight that a similar method and theorem can also be developed for the uniform sampling model, with no extra difficulty. %The Bernoulli sampling is only chosen for the ease of presentation and We emphasize that nothing is special about this sampling model.

\begin{assumption}[$\mu$-incoherence] \label{amp:incoherence}
Let $\BU_{\star}\BSigmas\BV_{\star}^{*}$ be the compact singular value decomposition (SVD) of the underlying rank-$r$ Hankel matrix $\G \bzs\in\C^{n_1\times n_2}$. Then $\G \bzs$ is said $\mu$-incoherent if there exists a positive constant $\mu$ such that
\begin{equation*}
\| \BU_{\star} \|_{2,\infty}\le \sqrt{\mu c_s r/n} \quad \textnormal{ and } \quad \| \BV_{\star} \|_{2,\infty}\le \sqrt{\mu c_s r/n},
\end{equation*}
where $c_s=\max\{{n}/{n_1},{n}/{n_2}\}$ measures how square the Hankel matrix is.%, and $\lV\,\cdot\,\rV_{2,\infty}$ stands for the largest $\ell_2$-norm of the rows.
\end{assumption}

\begin{assumption}[$\alpha p$-sparsity] \label{amp:sparsity}
Given a sampling model (i.e., \Cref{amp:Replacement}), the observed outliers are at most $\alpha p$-sparse. That is,  $  \|\Pi_\Omega \bss\|_0\leq \alpha |\Omega| = \alpha p n$.
% \begin{equation*}
%     \|\Pi_\Omega \bss\|_0\leq \alpha |\Omega| = \alpha p n.
% \end{equation*}
% In case of full observation, this assumption converts to
% \begin{equation*}
%     \| \bss\|_0 = \|\Pi_{[n]} \bss\|_0\leq \alpha n.
% \end{equation*}
\end{assumption}

\Cref{amp:Replacement,amp:incoherence,amp:sparsity} have been widely used in matrix completion \cite{candes2009exact,recht2011simpler,wei2020guarantees,cai2023ccs}, robust principal component analysis \cite{candes2011robust,netrapalli2014non,cai2019accaltproj,cai2021lrpca}, and low-rank Hankel recovery literature \cite{chen2014robust,cai2018spectral,cai2019fast,cai2021asap}, ensuring that \eqref{eq:Hankel_objective_vector} is a well-posed problem \cite{chandrasekaran2011rank}. 
 \Cref{amp:sparsity} states that only a $\alpha$ proportion of observed data are corrupted, without specifying the randomness of the corruption pattern. This condition holds with high probability as long as $\bss$ is $\frac{\alpha}{2}$-sparse \cite{cai2021rcur}. Moreover, the sparsity of $\bss$ (resp.~$\Pi_\Omega\bss$) implies the sparsity of $\G\bss$ (resp.~$\G\Pi_\Omega\bss$) in each of its rows and columns, provided $n_1\approx n_2$. 

\subsection{Notation} \label{sec:notation}
Throughout the paper, we use regular lowercase letters for scalars (e.g., $n$), bold lowercase letters for vectors (e.g., $\bv$), bold capital letters for matrices (e.g., $\bm{M}$), and calligraphic letters for operators (e.g., $\G$). $[n]$ denotes the set $\{1,2,\cdots,n\}$ for any positive integer $n$. Given a vector, $\|\bm{v}\|_0$, $\|\bm{v}\|_2$, and $\|\bm{v}\|_\infty$ denote its $\ell_0$-, $\ell_2$-, and $\ell_\infty$-norms respectively. $|\bm{v}|$ is the entrywise absolute value of $\bm{v}$. Given a matrix $\bm{M}$, $\sigma_i(\bm{M})$ denotes the $i$-th singular value, $\|\bm{M}\|_{2,\infty}$ denotes the largest row-wise $\ell_2$-norm, $\|\bm{M}\|_\infty$ denotes the largest entry-wise magnitude, $\|\bm{M}\|_2$ and $\|\bm{M}\|_\fro$ denote the spectral and Frobenius norms respectively. $\|\G\|$ denotes the operator norm of a linear operator. For all, $\overline{(\cdot)}$, $(\cdot)^\top$, $(\cdot)^*$, and $\langle\cdot,\cdot\rangle$ denote conjugate, transpose, conjugate transpose, and inner product respectively. 
Moreover, for ease of notation, we denote $\sigmas_i$ for the $i$-th singular value of the underlying Hankel matrix $\G\bzs$ and $\kappa=\sigmas_1/\sigmas_r$ for its condition number. Finally, $a\lesssim \cO(b)$ means there exists an absolute numerical constant $c > 0$ such that $a$
can be upper bounded by $cb$..
%Later in \Cref{sec:proofs}, we also introduce some additional notation that used only in the analysis.

\subsection{Related work and contributions} \label{sec:related}
The low-rank Hankel recovery problem has been extensively studied in the past decade. A naive approach is to directly apply robust matrix recovery algorithms \cite{chen2013low,yi2016fast,cherapanamjeri2017nearly,cai2021ircur,cai2021rcur,hamm2022RieCUR,cai2024rtcur,cai2024rccs} on the large $n_1\times n_2$ Hankel matrix which suffers from high computational complexities (i.e., $\cO(n^2r)$ to $\cO(n^3)$ per iteration) and inferior recoverability since the Hankel structure is ignored. 
Lately, many dedicated Hankel recovery algorithms have been proposed. Robust-EMaC \cite{chen2014robust} proposes a convex relaxation for the non-convex problem \eqref{eq:Hankel_objective_matrix} and solves it with semidefinite programming which is still computationally and memory too expensive. 
\cite{cai2018spectral,cai2019fast,tong2021accelerating} propose three algorithms with linear convergence that solve the non-convex Hankel completion problem in $\cO(r^2n +rn\log n)$ flops per iteration; however, none of them handles outliers and \cite{tong2021accelerating} also lacks theoretical guarantee for Hankel recovery problem. 
SAP \cite{zhang2019correction} proposes to effectively solve \eqref{eq:Hankel_objective_vector} via alternating projections. The recovery guarantee is established with $m \gtrsim\cO(c_s^2\mu^2r^3\log^2 n)$ and $\alpha\lesssim \cO(1/(c_s \mu r))$. 
The overall computational complexity of SAP is $\cO(r^2n\log n\log(1/\varepsilon))$ with a large hidden constant due to a truncated SVD used in every iteration. The accelerated version of SAP, namely ASAP \cite{cai2021asap}, improves the complexity to $\cO((r^2n +rn\log n)\log(1/\varepsilon))$ flops. However, ASAP studies only the fully observed cases. 
HSGD \cite{cai2023structured} solves \eqref{eq:Hankel_objective_vector} via projected gradient descent in $\cO(r^2n +rn\log n)$ flops per iteration, with a small constant. Although HSGD has a guaranteed linear convergence,  its iteration complexity is $\cO(\kappa \log(1/\varepsilon))$, which is problematic for heavily ill-conditioned problems. The recovery guarantee of HSGD requires $m \gtrsim \cO(\max\{c_s^2\mu^2 r^2\log n, c_s\mu \kappa^3 r^2\log n\})$ and $ \alpha \lesssim \cO( 1/(\max\{c_s \mu\kappa^{3/2}r^{3/2}, c_s \mu r\kappa^2\}))$.  

%\subsection{Main contributions} \label{sec:contributions}
In this paper, our main contributions are three-fold:
\begin{enumerate}[(i),leftmargin=0.45in]
    \item We propose a highly efficient algorithm for heavily ill-conditioned robust Hankel recovery problems, coined Hankel Structured Newton-Like Descent (HSNLD). Its overall computational cost is $\cO((r^2 n+rn\log n)\log(1/\varepsilon))$ flops which beats all state-of-the-art algorithms. Especially when compared with HSGD, our convergence rate is independent of the condition number $\kappa$; when compared with SAP, our complexity is $\cO(\min\{\log(n),r\})$ lower and the hidden constant is much smaller since no iterative truncated SVD involved.
    \item The recovery guarantee, with linear convergence, has been established for HSNLD under some mild conditions. The sample complexity and outlier tolerance are $m\gtrsim \cO(c_s^2\mu^2 r^2\kappa^4\log n)$ and $\alpha\lesssim \cO(1/( c_s\mu r\kappa^2))$ respectively. Compared to HSGD, the balance regularization is omitted from the objective, a new distance measurement is employed and a series of new technical lemmas has been developed accordingly, which will benefit future studies in the field.
    \item We verify the empirical advantages of HSNLD with synthetic datasets, as well as DOA and NMR signals. HSNLD achieves the best recoverability and computational efficiency in all ill-conditioned test problems. It gains more advantages when the condition numbers become even worse. 
\end{enumerate}

\section{Proposed method} 
Hankel Structured Gradient Descent (HSGD) \cite{cai2023structured} is a state-of-the-art non-convex algorithm for Hankel matrix recovery problems. It has shown superior performance for well-conditioned Hankel matrices, both theoretically and empirically. However, when the underlying Hankel matrix is ill-conditioned, the convergence of HSGD is significantly slower in theory. In particular, HSGD requires $\cO(\kappa \log\varepsilon^{-1})$ iterations to find a $\varepsilon$-solution. This phenomenon is empirically verified in \Cref{fig:HSGD_kappas}, where we observe that HSGD suffers from the larger condition number of the underlying Hankel matrix. To overcome this major weakness of HSGD, we propose a novel non-convex algorithm inspired by recent developments on Newton-like Gradient Descent algorithms \cite{tong2021accelerating}. The proposed algorithm, dubbed Hankel Structured Newton-like Descent (HSNLD), is highly efficient even for ill-conditioned problems. 

\begin{figure}
    \centering
    \includegraphics[width=.36\linewidth]{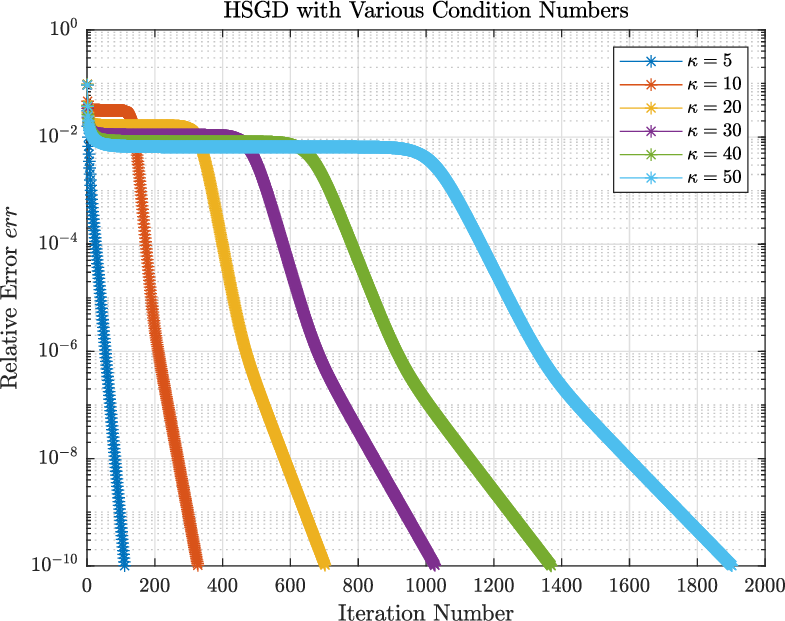}
    \vspace{-0.12in}
    \caption{The convergence performance of HSGD \cite{cai2023structured} with different condition numbers $\kappa$.}
    \label{fig:HSGD_kappas}
    \vspace{-0.2in}
\end{figure}

To recover the low-rank Hankel matrix from missing data and sparse outliers, we aim at a similar objective function as used in \cite{cai2023structured}: 
\begin{equation} \label{eq:def-L}
    \resizebox{0.9\textwidth}{!}{$\ell\left(\BL,\BR;\bs\right)=\frac{1}{2p}\l \Pi_{\Omega}\left(\G^{*}\left(\BL \BR^{*}\right)+\bs-\bm{f}\right),\G^{*}\left(\BL \BR^{*}\right)+\bs-\bm{f}\r +\frac{1}{2}\lV\left(\I-\G\G^{*}\right)\left(\BL \BR^{*}\right)\rV_\fro^2$},
\end{equation}
where $\bz=\G^*\BX\in\C^n$ and we parameterize %the low-rank Hankel matrix 
$\BX\in\C^{n_1\times n_2}$ as a product of  $\BL\in\C^{n_1\times r}$ and $\BR\in\C^{n_2\times r}$, thus the low-rank constraint is implicitly encoded. Moreover, a regularization term $\lV\left(\I-\G\G^{*}\right)\left(\BL \BR^{*}\right)\rV_\fro^2$ is added to enforce the Hankel structure of $\BL\BR^*$. Targeting \eqref{eq:def-L}, the proposed HSNLD (summarized in \Cref{algo:HankelscaledGD}) will be discussed step by step in the following paragraphs. For the sake of presentation, the iterative steps will be discussed first, and then initialization.

\begin{remark}
    Compared to the objective posted in \cite{cai2023structured}, our objective \eqref{eq:def-L} omits the balance regularization term $\|\BL^*\BL-\BR^*\BR\|_\fro^2$. The balance regularization is not empirically needed for the original HSGD algorithm but is required for its convergence analysis. In this paper, we drop the balance regularization for the proposed HSNLD with an improved convergence guarantee.
\end{remark}

\begin{algorithm}[h]
\caption{\textbf{H}ankel \textbf{S}tructured \textbf{N}ewton-\textbf{L}ike \textbf{D}escent (HSNLD)} \label{algo:HankelscaledGD}
\begin{algorithmic}[1]
\State \textbf{Input:} %$\H(\Pi_\Omega\by)=\G(\Pi_\Omega\bf)$
$\Pi_\Omega\bf$: partial observation on the corrupted Hankel matrix in reweighted vector form; $r$: the rank of underlying Hankel matrix; $p$: observation rate; $\alpha$: outlier density; $\{\gamma_k\}$: parameters for sparsification operator; $\eta$: step size; $K$: maximum number of iterations. 
%\State \textcolor{OliveGreen}{// Initialization:}
% \State  $\bs_{0}=\D\Gamma_{\alpha p}\left(\W\Pi_\Omega\bf\right)$
% \State $[\BU_{0},\BSigma_{0},\BV_{0}]=\SVD_r(\frac{1}{p}\G(\Pi_{\Omega}\bf-\bs_{0}))$
% \State $\begin{bmatrix} \BL_{0}\\ \BR_{0} \end{bmatrix}
%     =\P_{\Ca}\begin{bmatrix}\BU_{0}\BSigma_{0}^{1/2}\\ \BV_{0}\BSigma_{0}^{1/2}\end{bmatrix}$
\State \textbf{Initialization:} Let $\BL_{0}, \BR_{0}$ be produced by \eqref{ini:L0R0}.
%\State \textcolor{OliveGreen}{// Iterative updates:}
\For{ $k=0,1,\ldots,K-1$ }
    \State $\bz_{k+1}=\G^{*}(\BL_{k}\BR_{k}^*)$ \hfill %\textcolor{OliveGreen}{// Compute only over the observation set $\Omega$}
    %\State  $\begin{bmatrix} \widetilde{\BL}_{k}\\ \widetilde{\BR}_{k} \end{bmatrix}
    %=\P_{\Ca}\begin{bmatrix}\BL_{k}\\ \BR_{k}\end{bmatrix}$  
    \State $\bs_{k+1}=\Gamma_{\gamma_k\alpha p}(\Pi_{\Omega}(\bf-\bz_{k+1}))$
    \State $\begin{bmatrix} \BL_{k+1}\\ \BR_{k+1} \end{bmatrix}
    =\P_{\Ca}\begin{bmatrix}\BL_{k}-\eta\nabla_{\BL} \ell(\BL_{k},\BR_{k};\bs_{k+1})\left(\BR^{*}_k\BR_k\right)^{-1} \\
       \BR_{k}-\eta\nabla_{\BR} \ell(\BL_{k},\BR_{k};\bs_{k+1})\left(\BL^*_k\BL_k\right)^{-1}\end{bmatrix}$ 
\EndFor
\State \textbf{Output:} $\bz_{\mathrm{output}}=\G^{*}(\BL_{K}\BR_K^{*})$
: recovered Hankel matrix in vector form.
%$\H(\bx_K)=\G(\bz_K)=\BL^{(K)}\BR^{(K)*}$: the recovered low-rank Hankel matrix.
\end{algorithmic}
\end{algorithm}

\vspace{0.03in}
\noindent\textbf{Iterative updates on $\bs$.} For the sake of computational efficiency, one should avoid forming the full-size Hankel matrix and process the calculation in the corresponding vector form. Hence, we first update $\bz$ with the current low-rank components, i.e.,  $ \bz_{k+1}=\G^{*}(\BL_{k}\BR_{k}^{*})$. 
% \begin{equation} \label{eq:update z_k}
%     \bz_{k+1}=\G^{*}(\BL_{k}\BR_{k}^{*}).
% \end{equation}
By the definition of $\G^{*}$, the $t$-th entry of $\bz$ can be written as
\begin{equation*}
    [\G^{*}(\BL\BR^{*})]_t= \sum_{j=1}^r [\G^{*}(\BL_{:,j}\BR_{:,j}^*)]_t = \sum_{j=1}^r \frac{1}{\sqrt{\varsigma_t}}\sum_{i_1+i_2=t+1} L_{i_1,j} \overline{R}_{i_2,j}.
\end{equation*}
Thus, $z_{k+1}$ %\eqref{eq:update z_k} 
can be obtained via $r$ fast convolutions, with a computational cost in $\cO(rn \log n)$ flops.

To detect the outliers, we compute the residue $\Pi_{\Omega}(\bf-\bz_{k+1})$ over the observation and project it onto the space of sparse vectors via a sparsification operator. That is, $\bs_{k+1}=\Gamma_{\gamma_k\alpha p}\Pi_{\Omega}\big(\bf-\bz_{k+1}\big)$,
% \begin{equation} \label{eq:update_s}
% \bs_{k+1}=\Gamma_{\gamma_k\alpha p}\Pi_{\Omega}\big(\bf-\bz_{k+1}\big),
% \end{equation}
where $\Gamma_{\gamma_k\alpha p}:\C^n\rightarrow\C^n$ is the vector sparsification operator that keeps the $\gamma_k\alpha p n$ largest-in-magnitude entries unchanged and vanishes the rest to $0$. 
%When paired with the sampling operator, $\Gamma_{\gamma_k\alpha}\Pi_{\Omega}$ results at most $\gamma_k\alpha |\Omega|$ non-zero entries. 
Therein, $\gamma_k > 1$ is a parameter that tolerates slight overestimation on the outlier sparsity as the true $\alpha$ is usually unavailable to the user, and it also provides redundant robustness in outlier detection. We observe that taking $\gamma_k\rightarrow 1$ as $k\rightarrow \infty$ provides balanced performance in robustness and efficiency; however, $\gamma_k$ should always be larger than $1$. With off-the-shelf fast partial sorting algorithm, the computational cost of $s_{k+1}$ %\eqref{eq:update_s}
is $\cO(n\log n)$ flops or better.

\vspace{0.03in}
\noindent\textbf{Iterative updates on $\BL$ and $\BR$.} 
The state-of-the-art HSGD \cite{cai2023structured} updates the low-rank Hankel components via iterative gradient descent. As discussed, its convergence is hindered by ill-conditioning.  %the convergence of HSGD suffers from ill-conditioned problems. 
Inspired by recent work %the recent development in  
\cite{tong2021accelerating,cai2021lrpca,giampouras2024guarantees}, we utilize an additional Newton-like preconditioner in the iterative updates of $\BL$ and $\BR$ to properly address the ill-conditioned Hankel matrices. We have 
\begin{align} \label{eq:update_Hankel}
\begin{bmatrix} \BL_{k+1}\\ \BR_{k+1} \end{bmatrix}
    =\P_{\Ca}\begin{bmatrix}\BL_{k}-\eta\nabla_{\BL} \ell(\BL_{k},\BR_{k};\bs_{k+1})\left(\BR^{*}_k\BR_k\right)^{-1} \\
       \BR_{k}-\eta\nabla_{\BR} \ell(\BL_{k},\BR_{k};\bs_{k+1})\left(\BL^*_k\BL_k\right)^{-1}\end{bmatrix}
\end{align}
where $\eta>0$ is the step size and $\P_{\Ca}$ is a projection operator that enforces the weak incoherence condition, i.e., $\max\{\|\BL\BR^{*}\|_{2,\infty}, \|\BR\BL^{*}\|_{2,\infty}\}\le   C$,
%\begin{equation*} \label{e:weak incoherence}
%    \max\{\lV\BL\BR^{*}\rV_{2,\infty},\lV\BR\BL^{*}\rV_{2,\infty}\}\le   C
%\end{equation*}
on the low-rank Hankel matrix. In particular, modified from \cite{tong2021accelerating} we define

\begin{align*}
\P_{\Ca}\begin{bmatrix}  \widetilde{\BL}\\ \widetilde{\BR}\end{bmatrix}
    =\begin{bmatrix}\BL \\ \BR\end{bmatrix},~\text{where}~\BL_{i,:}&=\min\left(1,{C}/{\big\|\widetilde{\BL}_{i,:}(\widetilde{\BR}^*\widetilde{\BR})^{\frac{1}{2}}\big\|_2}\right)\widetilde{\BL}_{i,:},\quad \forall i\in[n_1],\cr
    \BR_{j,:}&=\min\left(1,{C}/{\big\|\widetilde{\BR}_{j,:}(\widetilde{\BL}^*\widetilde{\BL})^{\frac{1}{2}}\big\|_2}\right)\widetilde{\BR}_{j,:},\quad \forall j\in[n_2],
\end{align*}
for some numerical constant $C>0$. The choice of $C$ will be specified in \Cref{sec:Theoretical results} and the local non-expansiveness of $\P_{\Ca}$ is verified in \Cref{PC:Non-expansiveness}.

To efficiently update $\BL$ and $\BR$, we notice that
\begin{align*}
     \nabla_\BL \ell(\BL,\BR;\bs)(\BR^{*}\BR)^{-1}     
    &=\resizebox{0.68\textwidth}{!}{$\left[\G\big(\frac{1}{p}\Pi_\Omega\left(\G^{*}\left(\BL\BR^{*}\right)+\bs-\bf\right)\big)\BR+\left(\BI-\G\G^{*}\right)(\BL\BR^{*})\BR\right](\BR^{*}\BR)^{-1}$}  \cr
    &= \G\big(\frac{1}{p}\Pi_\Omega\left(\G^{*}\left(\BL\BR^{*}\right)+\bs-\bf\right)-\G^{*}(\BL\BR^{*})\big)\BR(\BR^{*}\BR)^{-1}  + \BL.
\end{align*}
One can compute $\bz:=\G^{*}(\BL\BR^{*})$ via $r$ fast convolutions which cost $\cO(rn \log n)$ flops. Computing $\frac{1}{p}\Pi_\Omega\left(\bz+\bs-\bf\right)-\bz$ costs $\cO(n)$ flops or better. Since $\G(\cdot)\BR$ can be viewed as $r$ convolutions, it costs $\cO(rn \log n)$ flops. Computing $\BR(\BR^{*}\BR)^{-1}$ costs $\cO(r^2n)$ flops, so does the projection $\P_{\mathcal{C}}$. Thus, the total computational cost of updating $\BL$ (and similarly for $\BR$) is $\cO(rn\log n+r^2 n)$ 

\vspace{0.03in}
\noindent\textbf{Initialization.}
A tight initialization is important to the success of HSNLD. We adopt the modified Hankel spectral initialization from \cite{cai2023structured}:
\begin{equation}\label{ini:L0R0}
\begin{split}
\bs_{0}=\W^{-1} \Gamma_{\alpha p}\left(\W\Pi_\Omega\bf\right), 
\BU_{0}\BSigma_{0}\BV_{0}^{*} = \SVD_r(\frac{1}{p}\G(\Pi_{\Omega}\bf-\bs_{0})), 
\begin{bmatrix} \BL_{0}\\ \BR_{0} \end{bmatrix}
    &=\P_{\Ca}\begin{bmatrix}\BU_{0}\BSigma_{0}^{1/2}\\ \BV_{0}\BSigma_{0}^{1/2}\end{bmatrix}.
\end{split}
\end{equation}
This initialization has shown promising performance and a low computational complexity of $\cO(rn\log n)$. Note that the truncated SVD of a Hankel matrix costs as low as $\cO(rn\log n)$ flops since all matrix-vector multiplications therein can be viewed as convolutions.

Therefore, the overall computational complexity of the proposed algorithm is $\cO(rn\log n+r^2n)$ which is tied to the current state-of-the-art HSGD \cite{cai2023structured}. In addition, HSNLD does not require forming  the entire Hankel matrix, and only tracking the $n$ distinct elements in the corresponding vector form is sufficient. Thus, the proposed HSNLD is computationally and memory efficient.

\section{Theoretical results} \label{sec:Theoretical results}
We present the theoretical results of the proposed HSNLD in this section. We will first define some convenient notations and metrics for the presentation. Denote $\BLs:=\BU_{\star}\BSigmas^{1/2}$ and $\BRs:=\BV_{\star}\BSigmas^{1/2}$ where $\BU_{\star}\BSigmas\BV_{\star}^{*}=\G\bzs$ is the compact SVD of the underlying Hankel matrix.
Let $(\BL_{k},\BR_{k})$ denote the update at the $k$-th iteration in \Cref{algo:HankelscaledGD}. Define the distance between $(\BL_{k},\BR_{k})$ and $(\BLs,\BRs)$ to be %\cite{tong2021accelerating}:
\begin{equation}\label{def-error}
  %\resizebox{0.9\textwidth}{!}{$
  d_k^2:=\mathrm{dist}^2(\BL_{k},\BR_{k};\BLs,\BRs) 
:=\inf_{\BQ\in\mathrm{GL}(r,\C)}\big\| \left(\BL_{k}\BQ-\BLs\right)\BSigmas^{1/2}  \big\|_\fro^2+\big\| \left(\BR_{k}\BQ^{-*}-\BRs\right)\BSigmas^{1/2} \big\|_\fro^2.
%$}
\end{equation}
Here, $\mathrm{GL}(r,\C)$ denotes the general linear group of degree $r$ over $\C$ and the invertible matrix $\BQ$ is the best alignment between $(\BL_{k}, \BR_{k})$ and $(\BLs,\BRs)$. {\cite[Lemma~22]{tong2021accelerating}} has shown that $\BQ$ exists if $d_k$ is sufficiently small. In that case, the infimum in \eqref{def-error} can be replaced by the minimum, and the recovery error $\lV\BL_{k}\BR_{k}^*-\G\bzs\rV_{\fro}$ can be bounded by $d_k$ as stated in the following proposition.
\begin{proposition}\label{prop:dist}
 For any $\varepsilon_0\in (0,1)$, provided $d_k< \varepsilon_0\sigmas_r$, then it holds that
\begin{equation*}
\lV\BL_{k}\BR_{k}^*-\G\bzs\rV_{\fro}\le \sqrt{2}(1+\varepsilon_0/2) d_k.
\end{equation*}
\end{proposition}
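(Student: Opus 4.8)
The plan is to exploit the factorization identity $\BL_k\BR_k^* - \G\bzs = \BL_k\BR_k^* - \BLs\BRs^*$ (recalling $\BLs\BRs^* = \BU_\star\BSigmas\BV_\star^* = \G\bzs$) and then write this difference in terms of the optimally-aligned deviations. Let $\BQ$ be the minimizer in \eqref{def-error}, which exists by \cite[Lemma~22]{tong2021accelerating} since $d_k < \varepsilon_0\sigmas_r$ is sufficiently small. Write $\BDeltaL := \BL_k\BQ - \BLs$ and $\BDeltaR := \BR_k\BQ^{-*} - \BRs$, so that $d_k^2 = \|\BDeltaL\BSigmas^{1/2}\|_\fro^2 + \|\BDeltaR\BSigmas^{1/2}\|_\fro^2$. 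Since $\BL_k\BR_k^* = (\BL_k\BQ)(\BR_k\BQ^{-*})^* = (\BLs + \BDeltaL)(\BRs + \BDeltaR)^*$, expanding gives
\begin{equation*}
\BL_k\BR_k^* - \BLs\BRs^* = \BDeltaL\BRs^* + \BLs\BDeltaR^* + \BDeltaL\BDeltaR^*.
\end{equation*}

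Next I would bound each of the three terms in Frobenius norm. For the first term, insert $\BSigmas^{1/2}\BSigmas^{-1/2}$: $\|\BDeltaL\BRs^*\|_\fro = \|\BDeltaL\BSigmas^{1/2}\BSigmas^{-1/2}\BRs^*\|_\fro \le \|\BDeltaL\BSigmas^{1/2}\|_\fro\,\|\BSigmas^{-1/2}\BRs^*\|_2$. Since $\BRs = \BV_\star\BSigmas^{1/2}$, we have $\BSigmas^{-1/2}\BRs^* = \BSigmas^{-1/2}\BSigmas^{1/2}\BV_\star^* = \BV_\star^*$, which has spectral norm $1$; hence $\|\BDeltaL\BRs^*\|_\fro \le \|\BDeltaL\BSigmas^{1/2}\|_\fro$. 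Symmetrically $\|\BLs\BDeltaR^*\|_\fro \le \|\BDeltaR\BSigmas^{1/2}\|_\fro$. For the cross term, write $\BDeltaL\BDeltaR^* = \BDeltaL\BSigmas^{1/2}\BSigmas^{-1}\BSigmas^{1/2}\BDeltaR^* $, giving $\|\BDeltaL\BDeltaR^*\|_\fro \le \|\BSigmas^{-1}\|_2\,\|\BDeltaL\BSigmas^{1/2}\|_\fro\,\|\BDeltaR\BSigmas^{1/2}\|_\fro = \sigmas_r^{-1}\|\BDeltaL\BSigmas^{1/2}\|_\fro\,\|\BDeltaR\BSigmas^{1/2}\|_\fro$. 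Using $\|\BDeltaL\BSigmas^{1/2}\|_\fro, \|\BDeltaR\BSigmas^{1/2}\|_\fro \le d_k$ and then $d_k < \varepsilon_0\sigmas_r$, the cross term is at most $(\varepsilon_0/\sigmas_r)\cdot\|\BDeltaL\BSigmas^{1/2}\|_\fro\cdot d_k \cdot \text{(something)}$; more carefully, $\|\BDeltaL\BDeltaR^*\|_\fro \le \varepsilon_0 \min\{\|\BDeltaL\BSigmas^{1/2}\|_\fro, \|\BDeltaR\BSigmas^{1/2}\|_\fro\}$.

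Finally I would combine. By the triangle inequality,
\begin{equation*}
\|\BL_k\BR_k^* - \G\bzs\|_\fro \le \|\BDeltaL\BSigmas^{1/2}\|_\fro + \|\BDeltaR\BSigmas^{1/2}\|_\fro + \varepsilon_0\min\{\|\BDeltaL\BSigmas^{1/2}\|_\fro, \|\BDeltaR\BSigmas^{1/2}\|_\fro\}.
\end{equation*}
Writing $a = \|\BDeltaL\BSigmas^{1/2}\|_\fro$ and $b = \|\BDeltaR\BSigmas^{1/2}\|_\fro$ with $a^2 + b^2 = d_k^2$, the right-hand side is $a + b + \varepsilon_0\min\{a,b\} \le a + b + (\varepsilon_0/2)(a+b) = (1+\varepsilon_0/2)(a+b)$, and by Cauchy–Schwarz $a + b \le \sqrt{2}\sqrt{a^2+b^2} = \sqrt{2}\,d_k$. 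This yields $\|\BL_k\BR_k^* - \G\bzs\|_\fro \le \sqrt{2}(1+\varepsilon_0/2)d_k$, as claimed.

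The only genuinely delicate point is justifying that the infimum in \eqref{def-error} is attained, so that a minimizing $\BQ$ genuinely exists and $\BDeltaL, \BDeltaR$ are well-defined with $a^2+b^2 = d_k^2$ exactly; this is precisely where the hypothesis $d_k < \varepsilon_0\sigmas_r < \sigmas_r$ is used, via \cite[Lemma~22]{tong2021accelerating}. Everything else is a routine chain of submultiplicativity of Frobenius-vs-spectral norm, the identities $\BSigmas^{-1/2}\BRs^* = \BV_\star^*$ and $\BSigmas^{-1/2}\BLs^* = \BU_\star^*$, and the elementary inequalities $\min\{a,b\}\le(a+b)/2$ and $a+b\le\sqrt{2}\sqrt{a^2+b^2}$. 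I would present the bound on the cross term slightly carefully to make sure the factor multiplying $d_k$ comes out as exactly $\varepsilon_0/2$ after the $\min$/average step, so that the final constant matches $\sqrt{2}(1+\varepsilon_0/2)$ rather than something looser.
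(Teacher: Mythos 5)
Your proposal is correct and follows essentially the same route as the paper: the decomposition $\BL_k\BR_k^*-\G\bzs=\BDeltaLk\BRs^*+\BLs\BDeltaRk^*+\BDeltaLk\BDeltaRk^*$, unit spectral norm of $\BU_\star,\BV_\star$ for the linear terms, the hypothesis $d_k<\varepsilon_0\sigmas_r$ to bound the cross term by $\varepsilon_0\min\{a,b\}\le(\varepsilon_0/2)(a+b)$, and $a+b\le\sqrt{2}\,d_k$. The only cosmetic difference is that the paper routes the cross-term bound through $\lV\BDeltaLk\BSigmas^{-1/2}\rV_2\le\varepsilon_0$ while you use $\sigmas_r^{-1}ab\le\varepsilon_0\min\{a,b\}$ directly, which is equivalent.
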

\begin{proof}
The proof of this proposition is deferred to \Cref{subsec:proofdist}.
\end{proof}
The following \Cref{thm:Initialization} guarantees that our initialization \eqref{ini:L0R0} will start HSNLD sufficiently close to the ground truth, under some mild conditions. Thus, it is sufficient to bound $d_k$ for the recovery guarantee. In addition, our initial guess also satisfies the weak incoherence condition, which will be used and inherited iteratively in the convergence analysis. For ease of presentation, we denote the basin of attraction as: $\mathcal{E}(\delta,C):=\{(\BL,\BR)~|~\mathrm{dist}(\BL,\BR;\BLs,\BRs)\le \delta
~~\textnormal{and}~~
\max\{\lV\BL\BR^{*}\rV_{2,\infty},\lV\BR\BL^{*}\rV_{2,\infty}\}\le C\}$. 
% \begin{equation*}
% \mathcal{E}(\zeta):=\{(\BL,\BR)~|~\mathrm{dist}(\BL,\BR;\BLs,\BRs)\le \delta
% ~~\textnormal{and}~~
% \max\{\lV\BL\BR^{*}\rV_{2,\infty},\lV\BR\BL^{*}\rV_{2,\infty}\}\le   C\}.
% \end{equation*}

% Therefore, it is sufficient to bound $d_k$ for the recovery guarantee.  Next, we present the local linear convergence of HSNLD in \Cref{thm:convergence}, provided a sufficiently close initial guess. Then, the initialization guarantee that matches the requirements is presented in \Cref{thm:Initialization}. 
% More specifically, the basin of attraction $\mathcal{E}(\delta,C)$ is the intersection of a local ball near the ground truth and the set of weakly incoherent matrices: 
% $$\mathcal{E}(\delta,C):=\{(\BL,\BR)~|~\mathrm{dist}(\BL,\BR;\BLs,\BRs)\le \delta
% ~~\textnormal{and}~~
% \max\{\lV\BL\BR^{*}\rV_{2,\infty},\lV\BR\BL^{*}\rV_{2,\infty}\}\le   C\}.$$

\begin{theorem}\label{thm:Initialization}
For any $\varepsilon_0\in (0,1)$, let $C=c\sqrt{c_s\mu r/n}\,\sigmas_1$ with $c\ge 1+\varepsilon_0$, and suppose \Cref{amp:Replacement,amp:incoherence,amp:sparsity} hold with 
$m\geq c_0\varepsilon_0^{-2} c_s\mu \kappa^2 r^2\log n$ and $\alpha\le \varepsilon_0/(50c_s \mu r^{1.5} \kappa)$, where $c_0$ is some universal constant.
Then, the initialization step \eqref{ini:L0R0} satisfies $(\BL_0,\BR_0)\in \mathcal{E}(\varepsilon_0\sigmas_{r},C)$,
% \begin{equation*}
% (\BL_0,\BR_0)\in \mathcal{E}(\varepsilon_0\sigmas_{r},C),
% \end{equation*}
with probability at least $1-2n^{-2}$.  In the case of full observation, probability rises to $1$.
\end{theorem}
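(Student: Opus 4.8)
The plan is to verify the two conditions in the definition of $\mathcal{E}(\varepsilon_0\sigmas_r,C)$ separately: the weak-incoherence bound $\max\{\|\BL_0\BR_0^*\|_{2,\infty},\|\BR_0\BL_0^*\|_{2,\infty}\}\le C$ and the distance bound $d_0\le\varepsilon_0\sigmas_r$. The incoherence bound is essentially free from the construction of $\P_{\Ca}$ and uses neither randomness nor the assumptions: writing $\widetilde{\BL}=\BU_0\BSigma_0^{1/2}$, $\widetilde{\BR}=\BV_0\BSigma_0^{1/2}$ and $\BR_0=\bm{D}\widetilde{\BR}$ with $\bm{D}$ the diagonal row-shrinkage of $\P_{\Ca}$ ($\|\bm{D}\|_2\le1$), the definition forces $\|(\BL_0)_{i,:}(\widetilde{\BR}^*\widetilde{\BR})^{1/2}\|_2\le C$, hence $\|(\BL_0\BR_0^*)_{i,:}\|_2=\|(\BL_0)_{i,:}\widetilde{\BR}^*\bm{D}\|_2\le\|(\BL_0)_{i,:}(\widetilde{\BR}^*\widetilde{\BR})^{1/2}\|_2\le C$ for every $i$, and symmetrically for $\BR_0\BL_0^*$. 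So all the substance lies in the distance bound.

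For $d_0$, I would first analyze the rank-$r$ matrix $\bm{M}:=\tfrac1p\G(\Pi_\Omega\bf-\bs_0)$ whose truncated SVD is $\BU_0\BSigma_0\BV_0^*$. On $\Omega$ the model reads $\bf=\bzs+\bss$, so
\begin{equation*}
\bm{M}-\G\bzs=\G\Big(\tfrac1p\Pi_\Omega-\I\Big)\bzs+\tfrac1p\G\big(\Pi_\Omega\bss-\bs_0\big)=:\bm{E}_1+\bm{E}_2.
\end{equation*}
The term $\bm{E}_1$ I would control with a matrix-Bernstein / sampling-concentration argument under \Cref{amp:Replacement,amp:incoherence}: $\mu$-incoherence bounds the per-summand spectral norm by $\cO(\mu c_s r\sigmas_1/m)$ and the matrix variance by $\cO(\mu c_s r\sigmas_1^2/m)$, giving $\|\bm{E}_1\|_\fro\le\tfrac1{10}\varepsilon_0\sigmas_r$ on an event of probability at least $1-2n^{-2}$ once $m\ge c_0\varepsilon_0^{-2}c_s\mu\kappa^2 r^2\log n$. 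The term $\bm{E}_2$ I would handle deterministically from the realized sample: since $\Pi_\Omega\bss$ is $\alpha pn$-sparse (\Cref{amp:sparsity}) while $\Gamma_{\alpha p}$ retains the $\alpha pn$ largest-magnitude coordinates, a hard-thresholding comparison, together with the incoherence bound $\|\G\bzs\|_\infty\le\mu c_s r\sigmas_1/n$ and the norm-preservation of $\G$, yields $\|\bm{E}_2\|_\fro\le\tfrac1{10}\varepsilon_0\sigmas_r$ under $\alpha\le\varepsilon_0/(50c_s\mu r^{1.5}\kappa)$ and the same lower bound on $m$. Combining, $\|\bm{M}-\G\bzs\|_\fro\le\tfrac15\varepsilon_0\sigmas_r$.

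It then remains to transfer this matrix error to the factored distance and push it through $\P_{\Ca}$. Since $\G\bzs$ has rank exactly $r$, $\BU_0\BSigma_0\BV_0^*=\SVD_r(\bm{M})$ is a best rank-$r$ approximation of $\bm{M}$ and Eckart--Young gives $\|\BU_0\BSigma_0\BV_0^*-\G\bzs\|_\fro\le2\|\bm{M}-\G\bzs\|_\fro$; a standard factored-distance lemma (cf. \cite{tong2021accelerating}) then converts this into $\mathrm{dist}(\widetilde{\BL},\widetilde{\BR};\BLs,\BRs)\lesssim\|\BU_0\BSigma_0\BV_0^*-\G\bzs\|_\fro$, valid because the perturbation is $\ll\sigmas_r$. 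Finally, $\mu$-incoherence of $\G\bzs=\BU_\star\BSigmas\BV_\star^*$ gives $\|(\BLs)_{i,:}\BSigmas^{1/2}\|_2\le\sqrt{c_s\mu r/n}\,\sigmas_1\le C$ (using $c\ge1+\varepsilon_0\ge1$) and likewise for $\BRs$, so $(\BLs,\BRs)$ lies in the constraint set of $\P_{\Ca}$; the local non-expansiveness of $\P_{\Ca}$ around $(\BLs,\BRs)$ in \Cref{PC:Non-expansiveness} then yields $d_0=\mathrm{dist}\big(\P_{\Ca}(\widetilde{\BL},\widetilde{\BR});\BLs,\BRs\big)\le\mathrm{dist}(\widetilde{\BL},\widetilde{\BR};\BLs,\BRs)\le\varepsilon_0\sigmas_r$ after absorbing absolute constants into $c_0$ and the numerical constant $50$. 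In the fully observed case $\Pi_\Omega=\I$, so $\bm{E}_1\equiv0$ and the whole argument becomes deterministic, raising the probability to $1$.

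The step I expect to be the main obstacle is obtaining the stated $\cO(c_s\mu\kappa^2 r^2\log n)$ sample complexity in the bound on $\bm{E}_1$. A direct matrix-Bernstein estimate naturally controls $\|\bm{E}_1\|_2$, and passing to $\|\bm{E}_1\|_\fro$ through $\|\bm{E}_1\|_\fro\le\sqrt{2r}\,\|\bm{E}_1\|_2$ already spends one factor of $r$; to keep the exponent of $r$ at $2$ one should instead split $\bm{E}_1$ into its components along and off the tangent space of $\G\bzs$ and bound the tangent part in a nearly dimension-free way via incoherence --- this is presumably one of the ``new technical lemmas'' mentioned in the introduction. A secondary delicate point is making sure that both the factored-distance lemma and the non-expansiveness of $\P_{\Ca}$ are stated for the $\BSigmas^{1/2}$-weighted metric of \eqref{def-error} and localized at $(\BLs,\BRs)$, and that their closeness hypotheses are met by the bounds established above.
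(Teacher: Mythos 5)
Your overall architecture (decompose $\bm{M}_0-\G\bzs$ into a sampling term and an outlier term, apply Eckart--Young, a factored-distance lemma from \cite{tong2021accelerating}, then the non-expansiveness of $\P_{\Ca}$, with the $2{,}\infty$ bound coming for free from the projection) matches the paper's, but the quantitative core of your plan fails: you cannot bound $\lV\bm{E}_1\rV_\fro=\lV\G\big(\tfrac1p\Pi_\Omega-\I\big)\bzs\rV_\fro$ by $\tfrac1{10}\varepsilon_0\sigmas_r$ at the stated sample size. Since $\G$ is an isometry, $\lV\bm{E}_1\rV_\fro=\lV\big(\tfrac1p\Pi_\Omega-\I\big)\bzs\rV_2$, and under sampling with replacement each coordinate has variance $\approx\tfrac nm\lv z_i^\star\rv^2$, so $\mathbb{E}\lV\bm{E}_1\rV_\fro^2\approx\tfrac nm\lV\G\bzs\rV_\fro^2\ge\tfrac nm\,\kappa^2\sigma_r^{\star 2}$; forcing this below $\varepsilon_0^2\sigma_r^{\star 2}$ would require $m\gtrsim\varepsilon_0^{-2}\kappa^2 n$, not $m\gtrsim\varepsilon_0^{-2}c_s\mu\kappa^2r^2\log n$. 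Incoherence does not rescue this (it caps $\lV\G\bzs\rV_\infty$, not the total variance), and neither does your proposed tangent-space splitting, because the off-tangent component of $\bm{E}_1$ carries essentially the same Frobenius mass. Your aside that $\lV\bm E_1\rV_\fro\le\sqrt{2r}\lV\bm E_1\rV_2$ is also false as stated: $\bm{E}_1$ is a full Hankel-structured perturbation, not a rank-$2r$ matrix; the low-rank structure only appears after comparing the two rank-$r$ matrices $\bm M_0^{(r)}$ and $\G\bzs$.

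The resolution---and what the paper actually does---is to never bound the error in Frobenius norm at the level of $\bm M_0$. The paper bounds $\lV\bm M_0-\G\bzs\rV_2$ in \emph{spectral} norm (the sampling term via \Cref{projectionerror}; the outlier term via the $2\alpha m$-sparsity of $\W(\Pi_\Omega\bss-\bs_0)$, \Cref{error:Gs} with $\bz=\bm 0$, and $\lV\G\bzs\rV_\infty\le\mu c_s r\sigmas_1/n$), applies Eckart--Young in spectral norm to get $\lV\bm M_0^{(r)}-\G\bzs\rV_2\le2\lV\bm M_0-\G\bzs\rV_2$, and only then converts to Frobenius on the rank-$\le 2r$ difference $\bm M_0^{(r)}-\G\bzs$, paying the $\sqrt{2r}$ factor before invoking \cite[Lemma~24]{tong2021accelerating}. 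That $\sqrt r$ loss, combined with $\sigmas_1=\kappa\sigmas_r$, is precisely why the theorem asks for $m\gtrsim\varepsilon_0^{-2}c_s\mu\kappa^2r^2\log n$ and $\alpha\lesssim\varepsilon_0/(c_s\mu r^{1.5}\kappa)$; no dimension-free Frobenius estimate is needed, nor is one available. Your handling of the weak-incoherence condition and the final appeal to \Cref{PC:Non-expansiveness} are fine and agree with the paper, but as written the distance bound---the substantive part of the theorem---has a genuine gap.
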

\begin{proof}
The proof of this theorem is deferred to \Cref{subsec:proofini}.
\end{proof}

\subsection{Full observation}
We start the analysis with the simpler full observation case, i.e., $\Omega=[n]$. \Cref{thm:convergence_full} presents the local linear convergence of HSNLD.

\begin{theorem}\label{thm:convergence_full}
Suppose \Cref{amp:incoherence,amp:sparsity} hold with $\alpha\lesssim \cO(1/( c_s\mu r\kappa^2))$. Choose parameters $C=c\sqrt{c_s\mu r/n}\sigmas_1$ with $c\ge 1.01$, $\gamma_k\in [1 + 1/b_0,2]$ with some fixed $b_0\geq 1$, and $\eta\in (0,1]$. Under full observation, for any $(\BL_{k},\BR_{k})\in \mathcal{E}(0.01\sigmas_r,C)$, it holds that %\HQ{probability is 1?}
\begin{equation*}
    d_{k+1}\le (1-0.6\eta)d_k
\qquad
    \textnormal{and} 
\qquad
(\BL_{k+1},\BR_{k+1})\in \mathcal{E}(0.01\sigmas_r,C).
\end{equation*}
\end{theorem}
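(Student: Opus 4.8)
The goal is to establish a one-step contraction for the scaled Newton-like update together with the preservation of the weak incoherence condition, for the full-observation case. I would work with the best alignment matrix $\BQ$ from \eqref{def-error}, whose existence is guaranteed since $d_k < 0.01\sigmas_r$ by {\cite[Lemma~22]{tong2021accelerating}}; write $\BL := \BL_k\BQ$, $\BR := \BR_k\BQ^{-*}$, and set $\BDeltaL := (\BL - \BLs)\BSigmas^{1/2}$, $\BDeltaR := (\BR - \BRs)\BSigmas^{1/2}$ so that $d_k^2 = \|\BDeltaL\|_\fro^2 + \|\BDeltaR\|_\fro^2$. The central observation is that the preconditioned-gradient update is invariant (up to the same $\BQ$-action) under this change of variables, so the iteration can be analyzed directly in the aligned frame. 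First I would split the update into the ``ideal'' part coming from the exact low-rank factorization and an error term driven by the outlier residual $\bs_{k+1} - \bss$; the regularization term $(\I - \G\G^*)(\BL\BR^*)$ vanishes on the ground-truth component but must be controlled on $\BL\BR^* - \BLs\BRs^*$.

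Second, I would bound the outlier-induced error. Under full observation $\Pi_\Omega = \I$ and $p = 1$, and $\bs_{k+1} = \Gamma_{\gamma_k\alpha p}(\bf - \bz_{k+1})$ with $\bf = \bzs + \bss$; a standard sparsification lemma (the observed support of $\bss$ is $\alpha$-sparse, $\Gamma$ keeps at most $\gamma_k\alpha n \le 2\alpha n$ entries, and $\gamma_k > 1$ guarantees the true support is captured) gives $\|\bs_{k+1} - \bss\|_\infty \lesssim \|\bz_{k+1} - \bzs\|_\infty$ and a support of size $\lesssim \alpha n$. Pushing this through $\G$ and using \Cref{amp:incoherence} plus the weak incoherence of $(\BL_k,\BR_k)$ (hence of $\BL\BR^*$), one controls $\|\G(\bs_{k+1}-\bss)\|_2$ and the relevant projected quantities by $\cO(\sqrt{c_s\mu r}\,\alpha)\cdot(\text{something like }\sigmas_1)$ times the relevant factor norms; with $\alpha \lesssim 1/(c_s\mu r\kappa^2)$ this is a small multiple of $\sigmas_r$ and gets absorbed. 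Third, for the ideal part I would invoke the now-standard scaled-gradient-descent analysis in the spirit of \cite{tong2021accelerating}: expand $d_{k+1}^2$, use $\|\BSigmas^{1/2}(\BR^*\BR)^{-1}\BSigmas^{1/2}\| = 1 + \cO(\varepsilon_0)$ and analogously for $\BL$ (which follows from $d_k \le 0.01\sigmas_r$), and show the linearized decrease dominates the second-order terms, yielding $d_{k+1}^2 \le (1 - 1.3\eta)d_k^2 + (\text{outlier term})^2 + (\text{cross terms})$, which collapses to $d_{k+1} \le (1 - 0.6\eta)d_k$ after choosing the constants so that the outlier and higher-order contributions are below $0.1\eta\, d_k$. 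The non-expansiveness of $\P_{\Ca}$ (\Cref{PC:Non-expansiveness}) then ensures the contraction survives the projection, since $(\BLs,\BRs)$ is a fixed point of $\P_{\Ca}$ when $C \ge (1+\varepsilon_0)\sqrt{c_s\mu r/n}\,\sigmas_1$.

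Fourth, for the invariant-set claim $(\BL_{k+1},\BR_{k+1}) \in \mathcal{E}(0.01\sigmas_r,C)$: the distance bound is immediate from the contraction $d_{k+1} \le (1-0.6\eta)d_k \le d_k \le 0.01\sigmas_r$; the weak incoherence bound $\max\{\|\BL_{k+1}\BR_{k+1}^*\|_{2,\infty}, \|\BR_{k+1}\BL_{k+1}^*\|_{2,\infty}\} \le C$ is enforced directly by the definition of $\P_{\Ca}$, which is exactly why the projection is in the algorithm — so this part is essentially by construction, modulo checking that $\P_{\Ca}$ indeed outputs something satisfying the stated bound (a short computation from its defining formula).

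\textbf{Main obstacle.} I expect the technical heart to be the $\ell_{2,\infty}$/incoherence bookkeeping that makes the outlier term genuinely of order $\alpha \cdot c_s\mu r \cdot \sigmas_1 \cdot d_k/\sigmas_r$ (so that $\alpha \lesssim 1/(c_s\mu r\kappa^2)$ closes the loop): one must relate $\|\bz_{k+1} - \bzs\|_\infty = \|\G^*(\BL_k\BR_k^* - \BLs\BRs^*)\|_\infty$ to $d_k$ using incoherence, then bound $\|\G(\text{sparse vector})\BR(\BR^*\BR)^{-1}\BSigmas^{1/2}\|_\fro$ and its $\BL$-counterpart — these require care because $\G$ of a sparse vector is only row/column sparse, not entrywise small, and because the preconditioner $(\BR^*\BR)^{-1}$ can amplify by $\kappa$ unless paired correctly with $\BSigmas^{1/2}$. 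Getting the powers of $\kappa$ exactly right here, together with decoupling the $\BQ$-alignment from the $\P_{\Ca}$-projection, is where a new distance-measurement-specific lemma (alluded to in contribution (ii)) will be needed; the ideal-part contraction, by contrast, should be a fairly direct adaptation of existing scaled-GD arguments.
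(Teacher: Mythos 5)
Your high-level route coincides with the paper's: align with the optimal $\BQ_k$, split the preconditioned update into an ideal part and an outlier-driven part (the paper's $\bm{T}_1$ and $\bm{T}_2$), handle the ideal part by the scaled-GD argument of \cite{tong2021accelerating}, and finish with the non-expansiveness of $\P_{\Ca}$ (\Cref{PC:Non-expansiveness}), which indeed delivers both the post-projection contraction and the $\ell_{2,\infty}$ bound. One small correction: under full observation the Hankel regularizer needs no separate control on $\BL\BR^*-\BLs\BRs^*$; its gradient merges with the data-fit term so the $\G\G^*$ pieces cancel exactly and the update reduces to $\BL_k-\eta\,\G(\bs_{k+1}-\bf)\BR_k(\BR_k^*\BR_k)^{-1}-\eta\BL_k$, which is what makes the two-term decomposition clean.

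The genuine gap is in the outlier-term control, which is the technical core and which you defer as the ``main obstacle.'' Your proposed mechanism---the $\ell_\infty$ sparsification bound $\|\bs_{k+1}-\bss\|_\infty\lesssim\|\bz_{k+1}-\bzs\|_\infty$ plus a support of size $\lesssim\alpha n$, pushed through $\G$ via incoherence---does not close the argument: inside the basin, $\|\BDeltaLk\BSigmas^{1/2}\|_{2,\infty}$ is only controlled at the level $\cO(\sqrt{c_s\mu r/n}\,\sigmas_1)$ (see \eqref{ineq:DeltaLl2inf}), so an $\ell_\infty$-plus-support-counting estimate of $\|\G(\bs_{k+1}-\bss)\|$ yields either an additive error of order $\alpha c_s\mu r\kappa\,\sigmas_1$ that is \emph{not} proportional to $d_k$ (hence no contraction, only boundedness as in the initialization analysis of \Cref{thm:Initialization}), or, if one forces a $d_k$-proportional bound through Frobenius-level row norms, a spurious dimension factor of order $\sqrt{n}$ that destroys the dimension-free sparsity condition. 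The paper avoids both problems with \Cref{err:sUV}: entries of $\bss$ discarded by $\Gamma_{\gamma_k\alpha p}$ are dominated by the $(\gamma_k-1)\alpha m$-th order statistic of $|\Pi_\Omega\G^*\BDelta_k|$ (this is precisely where $\gamma_k\ge 1+1/b_0>1$ enters), hence by $\big(\langle\Pi_\Omega\G^*\BDelta_k,\G^*\BDelta_k\rangle/((\gamma_k-1)\alpha m)\big)^{1/2}$, and everything is aggregated through the weighted Cauchy--Schwarz bounds of \Cref{PZUZV-M,projectioner:Gdelta}, which keep $\|\BDeltaLk\BSigmas^{1/2}\|_\fro+\|\BDeltaRk\BSigmas^{1/2}\|_\fro$, i.e.\ $d_k$, in the estimate. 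This also corrects your sizing: the dominant contribution to $\|\bm{T}_2\|_\fro$ scales like $\sqrt{\alpha}\,\kappa\sqrt{c_s\mu r}\,d_k$ (the $\sqrt{(\gamma+1)\alpha/(\gamma-1)}$ piece of $\xi$ times the $\kappa\sqrt{c_s\mu r}$ prefactor from $\BRn\BSigmas^{-1/2}$ and the preconditioner), not linearly in $\alpha$, and it is exactly this $\sqrt{\alpha}\,\kappa$ scaling that necessitates $\alpha\lesssim 1/(c_s\mu r\kappa^2)$. The same machinery is also required for the cross term $\langle\bm{T}_1,\bm{T}_2\rangle$ (\Cref{claim:Tboundfull}), which your sketch does not address.
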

\begin{proof}
The proof of this theorem is deferred to \Cref{subsec:prooflocalcon_full}.
\end{proof}

Combining \Cref{prop:dist,thm:Initialization,thm:convergence_full}, we have the recovery guarantee, with a linear convergence, for the proposed HSNLD under full observation.

\begin{corollary}[Recovery guarantee with full observation] \label{thm:main theorem_full}
Suppose \Cref{amp:incoherence,amp:sparsity} hold with $\alpha\lesssim \cO(1/( c_s\mu r\kappa^2))$. Choose parameters $C=c\sqrt{c_s\mu r/n}\sigmas_1$ with $c\ge 1.01$, $\gamma_k\in [1 + 1/b_0,2]$ with some fixed $b_0\geq 1$, and $\eta\in (0,1]$. Then, under full observation, in $K=\cO(\log(1/\varepsilon))$ iterations, the outputs of \Cref{algo:HankelscaledGD} satisfy 
 \begin{equation*}
 \lV\BL_{K}\BR_{K}^*-\G\bzs\rV_{\fro}
 \leq 0.02 (1-0.6\eta)^K \sigmas_r\leq\varepsilon.
\end{equation*}
\end{corollary}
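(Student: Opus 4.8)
The plan is to derive \Cref{thm:main theorem_full} as a direct consequence of the three preceding results by a short chaining argument. First I would invoke \Cref{thm:Initialization} with some fixed choice $\varepsilon_0 \le 0.01$ (e.g. $\varepsilon_0 = 0.01$), so that under the stated sample/sparsity assumptions the initialization \eqref{ini:L0R0} lands in the basin $\mathcal{E}(0.01\sigmas_r, C)$ — in the full-observation case with probability $1$. Note that the hypotheses on $\alpha$ and $C$ in \Cref{thm:main theorem_full} match those required by \Cref{thm:convergence_full}, and the choice $c \ge 1.01$ is compatible with $c \ge 1+\varepsilon_0$ when $\varepsilon_0 = 0.01$, so everything is consistent; this compatibility check is the one thing worth stating explicitly.

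Next I would apply \Cref{thm:convergence_full} inductively. The base case $(\BL_0,\BR_0)\in\mathcal{E}(0.01\sigmas_r,C)$ is exactly the output of \Cref{thm:Initialization}. For the inductive step, if $(\BL_k,\BR_k)\in\mathcal{E}(0.01\sigmas_r,C)$, then \Cref{thm:convergence_full} gives both $d_{k+1}\le(1-0.6\eta)d_k$ and $(\BL_{k+1},\BR_{k+1})\in\mathcal{E}(0.01\sigmas_r,C)$, so the invariant is preserved and the induction runs for all $k$. Unrolling the contraction yields $d_K \le (1-0.6\eta)^K d_0 \le (1-0.6\eta)^K \cdot 0.01\,\sigmas_r$, using $d_0 \le 0.01\sigmas_r$ from membership in the basin.

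Finally I would convert the distance bound on $d_K$ into the claimed Frobenius-norm recovery bound via \Cref{prop:dist}. Since $d_K \le 0.01\sigmas_r < \varepsilon_0 \sigmas_r$ for any $\varepsilon_0 \in (0.01, 1)$ — or one can just take $\varepsilon_0$ slightly above $0.01$, or note $d_K \le d_0 < \sigmas_r$ suffices — \Cref{prop:dist} applies and gives $\lV \BL_K\BR_K^* - \G\bzs\rV_\fro \le \sqrt{2}(1+\varepsilon_0/2) d_K$. Plugging in $d_K \le (1-0.6\eta)^K \cdot 0.01\,\sigmas_r$ and bounding $\sqrt{2}(1+\varepsilon_0/2) \le 2$ for small $\varepsilon_0$ yields $\lV \BL_K\BR_K^* - \G\bzs\rV_\fro \le 0.02\,(1-0.6\eta)^K \sigmas_r$. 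For the last inequality $\le \varepsilon$, since $1-0.6\eta < 1$ (as $\eta \in (0,1]$), the right-hand side decays geometrically, so choosing $K \ge \log(0.02\,\sigmas_r/\varepsilon)/\log\big(1/(1-0.6\eta)\big) = \cO(\log(1/\varepsilon))$ suffices.

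I do not anticipate a genuine obstacle here — the corollary is a packaging statement. The only points requiring a little care are: (i) checking that the constants in the three cited results are mutually compatible (the $\varepsilon_0$ in \Cref{thm:Initialization} versus the hardcoded $0.01$ in \Cref{thm:convergence_full}), and (ii) making the induction on the basin-membership invariant explicit rather than implicit, since \Cref{thm:convergence_full} is only a one-step (local) statement and its repeated application relies precisely on the fact that it also propagates the incoherence/distance constraints forward.
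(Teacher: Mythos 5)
Your proposal is correct and matches the paper's own treatment: the paper proves this corollary simply by combining \Cref{prop:dist,thm:Initialization,thm:convergence_full} exactly as you do (initialization into $\mathcal{E}(0.01\sigmas_r,C)$ with probability $1$ under full observation, induction on the one-step contraction and basin invariance, then conversion of $d_K$ to the Frobenius error via \Cref{prop:dist} with $\sqrt{2}(1+\varepsilon_0/2)\le 2$ and $K=\cO(\log(1/\varepsilon))$). Your explicit attention to the compatibility of $\varepsilon_0=0.01$ with $c\ge 1.01$ and to making the induction explicit is, if anything, slightly more careful than the paper's one-line justification.
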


%\HQ{Note that this projection operator is not theoretically needed in the case of full observation.} 

\subsection{Partial observation}
Now we will process the partial observation cases. One major challenge is the dependency among the entries on the antidiagonals of the Hankel matrix, which causes difficulty in uniformly bounding the subsampling error (see \Cref{projectionerr2}). Now, we present the local convergence under partial observation.

\begin{theorem}\label{thm:convergence}
Suppose \Cref{amp:Replacement,amp:incoherence,amp:sparsity} hold with $m\gtrsim \cO(c_s^2\mu^2 r^2\kappa^4\log n)$ and  $\alpha\lesssim \cO(1/( c_s\mu r\kappa^2))$. Choose the parameters $C=c\sqrt{c_s\mu r/n}\sigmas_1$ with $c\ge 1.01$, $\gamma_k\in [1 + 1/b_0,2]$ with some fixed $b_0\geq 1$, and $\eta\in (0,0.6]$. For any fixed $(\BL_{k},\BR_{k})\in \mathcal{E}(0.01\sigmas_r,C)$,
with probability at least $1-8n^{-2}$ the following results hold $ d_{k+1}\le (1-0.6\eta)d_k
~
    \textnormal{and} 
 ~
(\BL_{k+1},\BR_{k+1})\in \mathcal{E}(0.01\sigmas_r,C)$. 
% \begin{equation*}
%     d_{k+1}\le (1-0.6\eta)d_k
% \qquad
%     \textnormal{and} 
% \qquad
% (\BL_{k+1},\BR_{k+1})\in \mathcal{E}(0.01\sigmas_r,C).
% \end{equation*}
\end{theorem}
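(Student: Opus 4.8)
### Proof plan for Theorem~\ref{thm:convergence}

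The plan is to mirror the structure of the full-observation proof (Theorem~\ref{thm:convergence_full}), but replace every exact identity involving $\frac1p\Pi_\Omega$ with a concentration bound that holds on the event $m\gtrsim \cO(c_s^2\mu^2 r^2\kappa^4\log n)$. Fix $(\BL_k,\BR_k)\in\mathcal E(0.01\sigmas_r,C)$ and let $\BQ$ be the optimal alignment matrix from the definition of $d_k$; write $\BDeltaL=\BL_k\BQ-\BLs$ and $\BDeltaR=\BR_k\BQ^{-*}-\BRs$, so that $d_k^2=\|\BDeltaL\BSigmas^{1/2}\|_\fro^2+\|\BDeltaR\BSigmas^{1/2}\|_\fro^2$. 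First I would record the three structural ingredients that are inherited from $\mathcal E(0.01\sigmas_r,C)$: (a) the weak incoherence $\max\{\|\BL_k\BR_k^*\|_{2,\infty},\|\BR_k\BL_k^*\|_{2,\infty}\}\le C$, which also transfers to $\BLs\BRs^*$ by Assumption~\ref{amp:incoherence} with the constant $c\ge1.01$; (b) the fact that $d_k<0.01\sigmas_r$ forces $\BL_k,\BR_k$ to be well-conditioned, so $(\BR_k^*\BR_k)^{-1}$ and $(\BL_k^*\BL_k)^{-1}$ in the preconditioner are controlled; (c) the sparsity pattern of $\bs_{k+1}=\Gamma_{\gamma_k\alpha p}\Pi_\Omega(\bf-\bz_{k+1})$, which, combined with Assumption~\ref{amp:sparsity} and $\gamma_k\le 2$, is supported on at most $(1+2\alpha)pn$ antidiagonals, so $\G(\bs_{k+1}-\bss)$ is row-/column-sparse; I would quote the outlier-estimation lemma (the partial-observation analogue of the bound used in the full-observation proof) to get $\|\bs_{k+1}-\Pi_\Omega\bss\|\lesssim \alpha p\cdot(\text{local error})$ in the appropriate weighted norm.

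Next I would expand the one-step recursion. Writing the preconditioned update and using $\nabla_\BL\ell(\cdot)(\BR_k^*\BR_k)^{-1}=\G\big(\frac1p\Pi_\Omega(\bz_{k+1}+\bs_{k+1}-\bf)-\bz_{k+1}\big)\BR_k(\BR_k^*\BR_k)^{-1}+\BL_k$ (the identity already derived in the text), the pre-projection iterate becomes
\begin{equation*}
\widetilde{\BL}_{k+1}\BQ-\BLs=(1-\eta)\BDeltaL-\eta\,\G\!\Big(\tfrac1p\Pi_\Omega\big(\G^*(\BL_k\BR_k^*)+\bs_{k+1}-\bf\big)-\G^*(\BL_k\BR_k^*)\Big)\BR_k(\BR_k^*\BR_k)^{-1}\BQ+\eta\,\text{(cross terms)},
\end{equation*}
and symmetrically for $\BR$. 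Then I would decompose the bracketed operator into (i) the ``population'' part one gets by replacing $\frac1p\Pi_\Omega$ with $\I$ — this reproduces exactly the full-observation contraction, giving a $(1-0.7\eta)d_k$-type bound after the standard scaled-gradient algebra (balancing the $\BL$ and $\BR$ sides via the alignment $\BQ$, using $\|\BDeltaL\BSigmas^{1/2}\|_\fro,\|\BDeltaR\BSigmas^{1/2}\|_\fro\le d_k\le 0.01\sigmas_r$, and the smallness of the quadratic term $\G\G^*(\BDeltaL\BDeltaR^*)$); (ii) the ``deviation'' part $\G\big((\I-\frac1p\Pi_\Omega)(\cdot)\big)\BR_k(\BR_k^*\BR_k)^{-1}\BQ$, which I bound using the restricted-isometry-type estimate for the reweighted Hankel sampling operator (cf.\ \Cref{projectionerr2}): on the given event, $\|\G((\I-\frac1p\Pi_\Omega)\bv)\|\lesssim \sqrt{\frac{c_s\mu r\log n}{m}}\,\|\G\bv\|$ for $\bv$ in the relevant low-rank-plus-sparse incoherent subspace, and here $\bv=\G^*(\BL_k\BR_k^*-\BLs\BRs^*)+(\bs_{k+1}-\Pi_\Omega\bss)+\Pi_\Omega\bss-\ldots$ telescopes into something of weighted norm $\lesssim d_k$; the $\sqrt{c_s\mu r^2\kappa^4\log n/m}$ factor is what the sample-complexity hypothesis is designed to make $\le 0.1$, absorbing this term into the contraction to yield $d_{k+1}'\le(1-0.6\eta)d_k$ for the unprojected iterate. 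Finally, invoking the non-expansiveness of $\P_{\Ca}$ (\Cref{PC:Non-expansiveness}) — valid because $\BLs\BRs^*$ satisfies $\|\BLs\BRs^*\|_{2,\infty}\le C/c\le C$ — gives $d_{k+1}\le d_{k+1}'\le(1-0.6\eta)d_k$, and the projection definition directly enforces $\max\{\|\BL_{k+1}\BR_{k+1}^*\|_{2,\infty},\|\BR_{k+1}\BL_{k+1}^*\|_{2,\infty}\}\le C$, so $(\BL_{k+1},\BR_{k+1})\in\mathcal E(0.01\sigmas_r,C)$. The failure probability $8n^{-2}$ comes from a union bound over the (constantly many) concentration events invoked: the Hankel RIP event, the outlier-support event, two events controlling $\|(\I-\frac1p\Pi_\Omega)\|$ restricted to the $\BL$- and $\BR$-tangent directions, and the events needed for the $\ell_\infty$/incoherence propagation inside $\P_{\Ca}$.

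The main obstacle, as the text flags, is the antidiagonal dependency: the reweighted Hankel sampling operator $\frac1p\Pi_\Omega\G^*$ does not act on independent coordinates, so off-the-shelf matrix-Bernstein applied entrywise is too lossy. I expect the crux to be proving the restricted concentration bound of the form $\|(\G^*\G - \frac1p\G^*\Pi_\Omega\G)(\BL_k\BR_k^*-\BLs\BRs^*)\|$-type estimate uniformly over the basin $\mathcal E(0.01\sigmas_r,C)$ — this is where the $\kappa^4$ (rather than $\kappa^2$ or $\kappa^3$) enters, since one must pay $\kappa$ twice to convert between the unweighted error $\|\BL_k\BR_k^*-\BLs\BRs^*\|_\fro$ and the $\BSigmas^{1/2}$-weighted distance $d_k$, and once more in the preconditioner inverses $(\BR_k^*\BR_k)^{-1}$, and a final $\kappa$ to close the loop after the $\P_{\Ca}$ step. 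Getting the incoherence to propagate through the preconditioned-and-projected update (so that the bound can be iterated) is the second delicate point; I would handle it by a leave-one-out-style argument on the rows of $\widetilde{\BL}_{k+1}$ or, more cheaply, by exploiting that $\P_{\Ca}$ is defined precisely to clip the weighted row norms, so incoherence is reinstated by fiat and only the distance bound needs the concentration machinery — which is exactly why the authors put $\P_{\Ca}$ in the iteration.
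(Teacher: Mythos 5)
Your outline follows the same route the paper takes: split the preconditioned update into the population part (which reproduces the full-observation contraction of \Cref{thm:convergence_full}), a sampling-deviation part controlled by the Bernstein-type Hankel estimate (\Cref{projectionerr2}, together with \Cref{projectionerr1,PZUZV-M}), and an outlier part, then finish with the non-expansiveness of $\P_{\Ca}$ (\Cref{PC:Non-expansiveness}) and a union bound for the $1-8n^{-2}$ probability. In outline this is the paper's proof.

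However, the step you single out as the crux — proving the restricted concentration bound \emph{uniformly over the basin} $\mathcal{E}(0.01\sigmas_r,C)$, possibly via a leave-one-out argument — is not what the paper does, and pursuing it would stall you. \Cref{projectionerr2} only holds for test matrices \emph{independent of} $\Omega$; this is exactly why \Cref{thm:convergence} is stated for a \emph{fixed} $(\BL_k,\BR_k)$ and why the partial-observation corollary resamples $\Omega$ independently at every iteration. The antidiagonal-dependence obstacle is thus sidestepped by the fixed-iterate formulation plus resampling, not overcome by a uniform bound; a uniform (or leave-one-out) version is both unnecessary for the statement as given and substantially harder, likely costing extra factors in $m$. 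Two further points where your sketch is too coarse to close the argument: (i) the outlier term cannot be handled by a norm bound of the form $\|\bs_{k+1}-\Pi_\Omega\bss\|\lesssim \alpha p\cdot(\text{local error})$; the paper needs the inner-product estimate of \Cref{err:sUV}, which pairs the entrywise consequence of the sparsification (as in \Cref{error:Gs}) with incoherent rank-$r$ test matrices and produces the $\sqrt{\alpha}$-type factor $\xi$ — this finer structure is what yields the tolerance $\alpha\lesssim 1/(c_s\mu r\kappa^2)$; and (ii) your accounting of the $\kappa^4$ in the sample complexity is off: it does not come from the preconditioner inverses or from the $\P_{\Ca}$ step (those only cost $(1-\varepsilon_0)^{-1}$ factors), but from the cross terms involving two row-norm factors $\lV\BLn\BSigmas^{-1/2}\rV_{2,\infty},\lV\BRn\BSigmas^{-1/2}\rV_{2,\infty}\lesssim \kappa\sqrt{c_s\mu r/n}$ (see \Cref{lemma:boundnorms} and the bound on the term $\bm{T}_4$ and its inner product with $\bm{T}_1$ in the paper's proof), which put $\kappa^2$ outside the square root and hence $\kappa^4$ inside the requirement on $m$. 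Your cheaper option for incoherence propagation — letting $\P_{\Ca}$ reinstate the weak incoherence by construction — is indeed the paper's choice and is the right call.
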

\begin{proof}
The proof of this theorem is deferred to \Cref{proofthm:convergence_partial}.
\end{proof}

Combining \Cref{prop:dist,thm:Initialization,thm:convergence}, we have the recovery guarantee,
with a linear convergence,  for the proposed HSNLD under partial observation.

\begin{corollary}[Recovery guarantee with partial observation] 
Suppose \Cref{amp:Replacement,amp:incoherence,amp:sparsity} hold with $m\gtrsim \cO(c_s^2\mu^2 r^2\kappa^4\log n)$ and $\alpha\lesssim \cO(1/( c_s\mu r\kappa^2))$. Let the observation set $\Omega$ be resampled independently every iteration. Choose parameters $C=c\sqrt{c_s\mu r/n}\sigmas_1$ with $c\ge 1.01$, $\gamma_k\in [1 + 1/b_0,2]$ with some fixed $b_0\geq 1$, and $\eta\in (0,0.6]$. Then, in $K=\cO(\log(1/\varepsilon))$ iterations, the outputs of \Cref{algo:HankelscaledGD} satisfy  $\lV\BL_{K}\BR_{K}^*-\G\bzs\rV_{\fro}
\leq 0.02 (1-0.6\eta)^K \sigmas_r\leq\varepsilon$
% \begin{equation*}
% \lV\BL_{K}\BR_{K}^*-\G\bzs\rV_{\fro}
% \leq 0.02 (1-0.6\eta)^K \sigmas_r\leq\varepsilon
% \end{equation*}
 with probability at least $1-\cO(n^{-1})$, provided $\log(1/\varepsilon)\ll n$. %$1-\cO(n^{-2}\log(1/\varepsilon))$.
\end{corollary}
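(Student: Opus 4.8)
The plan is to chain together the three ingredients already established: the initialization guarantee (\Cref{thm:Initialization}), the one-step local contraction under partial observation (\Cref{thm:convergence}), and the translation from the distance $d_k$ to the Frobenius recovery error (\Cref{prop:dist}). First I would invoke \Cref{thm:Initialization} with $\varepsilon_0 = 0.01$: under the stated sample complexity $m\gtrsim \cO(c_s^2\mu^2 r^2\kappa^4\log n)$ and outlier bound $\alpha\lesssim\cO(1/(c_s\mu r\kappa^2))$, the initialization \eqref{ini:L0R0} lands in the basin of attraction, i.e. $(\BL_0,\BR_0)\in\mathcal{E}(0.01\sigmas_r,C)$, with probability at least $1-2n^{-2}$. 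In particular $d_0\le 0.01\sigmas_r$. Note the parameter choices ($C=c\sqrt{c_s\mu r/n}\sigmas_1$ with $c\ge 1.01$, $\gamma_k\in[1+1/b_0,2]$, $\eta\in(0,0.6]$) are exactly those required by both \Cref{thm:Initialization} and \Cref{thm:convergence}, so the hypotheses are consistent.

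Next I would iterate \Cref{thm:convergence}. Because the observation set $\Omega$ is resampled independently at every iteration, the randomness used in the $k$-th step is independent of $(\BL_k,\BR_k)$, so \Cref{thm:convergence} applies at each step conditionally on the (deterministic, in the appropriate filtration) event $(\BL_k,\BR_k)\in\mathcal{E}(0.01\sigmas_r,C)$. Thus, on the intersection of the initialization event and the $K$ per-iteration events, an induction on $k$ gives simultaneously $(\BL_k,\BR_k)\in\mathcal{E}(0.01\sigmas_r,C)$ for all $k\le K$ and the geometric decay $d_k\le(1-0.6\eta)^k d_0\le 0.01(1-0.6\eta)^k\sigmas_r$. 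A union bound over the $K$ steps (each failing with probability at most $8n^{-2}$) plus the $2n^{-2}$ initialization failure gives total failure probability at most $2n^{-2}+8Kn^{-2}$; since $K=\cO(\log(1/\varepsilon))$ and we assume $\log(1/\varepsilon)\ll n$, this is $\cO(n^{-1})$, matching the claimed success probability.

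Finally, I would apply \Cref{prop:dist} with $\varepsilon_0=0.01$ to the terminal iterate: since $d_K\le 0.01\sigmas_r<0.01\sigmas_r$ (strictly, one can run the bound with a slightly larger $\varepsilon_0<1$, or simply note $d_K\le 0.01(1-0.6\eta)^K\sigmas_r$ strictly decreases), we get
\begin{equation*}
\lV\BL_K\BR_K^*-\G\bzs\rV_\fro \le \sqrt{2}(1+\varepsilon_0/2)\,d_K \le \sqrt{2}\cdot 1.005\cdot 0.01(1-0.6\eta)^K\sigmas_r \le 0.02(1-0.6\eta)^K\sigmas_r,
\end{equation*}
and choosing $K=\lceil \log(0.02\sigmas_r/\varepsilon)/\log(1/(1-0.6\eta))\rceil = \cO(\log(1/\varepsilon))$ forces the right-hand side below $\varepsilon$.

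The only genuinely delicate point is the probabilistic bookkeeping across iterations: one must be careful that \Cref{thm:convergence} is stated for a \emph{fixed} $(\BL_k,\BR_k)$ in the basin, so to chain it one needs the fresh-resampling hypothesis (stated in the corollary) to decouple the current iterate from the current batch's randomness — otherwise the $(\BL_k,\BR_k)$ produced by earlier iterations would be correlated with a reused $\Omega$ and the per-step guarantee could not be invoked. Given that hypothesis, the induction and the union bound are routine; everything else is just plugging the constants from \Cref{thm:Initialization,thm:convergence,prop:dist} together.
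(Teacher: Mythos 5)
Your proposal is correct and follows exactly the route the paper intends: the corollary is obtained by combining \Cref{thm:Initialization} (with $\varepsilon_0=0.01$), iterating \Cref{thm:convergence} using the fresh resampling to decouple each iterate from its observation set with a union bound over the $K=\cO(\log(1/\varepsilon))$ steps, and finally converting $d_K$ to the Frobenius error via \Cref{prop:dist}. Your explicit probabilistic bookkeeping ($2n^{-2}+8Kn^{-2}=\cO(n^{-1})$ under $\log(1/\varepsilon)\ll n$) and the constant check $\sqrt{2}\cdot 1.005\cdot 0.01\le 0.02$ match the paper's stated bound, so there is no gap.
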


\begin{remark}
    Compared to the state-of-the-art HSGD which runs $\cO(\kappa \log(1/\varepsilon))$ iterations for a $\varepsilon$-optimizer, the convergence rate of HSNLD is independent of the condition number $\kappa$. This makes HSNLD a powerful tool for heavily ill-conditioned Hankel applications, e.g., spectrally sparse signals with inseparable frequencies \cite{liao2016music}.
\end{remark}

% ===========================================

% \HQ{change proof section order?}

% Now that all the ingredients are ready, we present the recovery guarantee for the proposed HSNLD as \Cref{thm:main theorem}.

% \begin{corollary} \label{thm:main theorem}
%    \JT{The  HSNLD with partition (\Cref{algo:scaledGDresam}) finds an $\epsilon$-optimal solution in $K= \cO(\log\frac{1}{\epsilon})$ iterations with probability at least $1-10K n^{-2} $. (In fact, $1-10n^{-2}\log\frac{1}{\epsilon} \ge 1-10n^{-1}$ provided $\epsilon\ge e^{-n}$. Thus, for $T\le \cO(n)$, it finds an $10^{-T}$-optimal solution in $K= \cO(T)$ iterations with probability at least $1-10n^{-1}$.)}

% \end{corollary}

\section{Numerical experiments}
In this section, we will empirically demonstrate the performance of the proposed HSNLD with synthetic and real datasets. We compare our algorithm with HSGD \cite{cai2023structured}, SAP \cite{zhang2019correction}, and RobustEMaC \cite{chen2014robust}. Parameters are set according to the original paper and further hand-tuned for best performance when applicable. Particularly, both HSNLD and HSGD use the same outlier sparsification parameters: $\gamma_k= 1.05+0.45\times 0.95^{k}$ as suggested in \cite{cai2023structured}, and the step size $\eta =0.5$. Moreover, we estimate the incoherence parameter $\mu$ via one-step Cadzow method \cite{cadzow1988signal} for all algorithms, and set the projection parameter $C$ according to \Cref{thm:convergence_full} for HSNLD. The resampling is not implemented for HSNLD as it is just a proof technique and is not needed in practice. All results were conducted on Matlab with Intel i9-12950HX and 64GB RAM. 
The code is available at \url{https://github.com/caesarcai/HSNLD}.

\subsection{Synthetic dataset} 
We generate an ill-conditioned Hankal matrix in its vector form $\bz\in\C^n$ with $z_t=\sqrt{\varsigma_t}\sum^r_{j=1} a_j e^{2\pi\imath f_j t},$ where $\imath$ denotes the imaginary unit, $\{f_j\}\in\mathbb{R}$ is a randomly selected set of normalized off-grid frequencies, and $\{a_j\}\in\mathbb{R}$ is a set of $r$ evenly spaced numbers over $[1/\kappa,1]$. Thus, $\G\bz\in\C^{n_1\times n_2}$ is exactly rank-$r$ with condition number $\kappa$. Then, $m$ observations are sampled uniformly without replacement from $\bz$, and therein $\alpha m$ entries are randomly chosen for corruption.  We add complex outliers to the corrupted entries whose real and imaginary parts are drawn uniformly over the intervals $[-10\mathbb{E}(|\mathrm{Re}(z_t)|),10\mathbb{E}(|\mathrm{Re}(z_t)|)]$ and $[-10\mathbb{E}(|\mathrm{Im}(z_t)|),10\mathbb{E}(|\mathrm{Im}(z_t)|)]$.

\vspace{0.03in}
\noindent\textbf{Phase transition.} 
We demonstrate the recoverability and robustness of HSNLD via empirical phase transition under various settings. When an algorithm stops at $\|\G\Pi_{\Omega}\bz_k+\G\Pi_{\Omega}\bs_k-\G\Pi_{\Omega}\bf \|_\fro/\|\G\Pi_{\Omega}\bf\|_\fro\leq 10^{-5}$, we considered it a successful recovery if $\|\G\bz_k-\G\bzs\|_\fro/\|\G\bzs\|_\fro\leq10^{-3}$. For every problem setting, each algorithm runs for $20$ trials and reports the result as a pixel on the phase transition, where a white pixel means all $20$ trials were recovered and a black pixel means all $20$ trials were failed. 
The phase transition results are reported in \Cref{fig:sample_rate_vs_outlier,fig:rank_vs_sample_rate,fig:rank_vs_outlier} where one can see HSNLD delivers the best recoverability and robustness for ill-conditioned problems. Note that we only run the phase transition with a reasonably large $\kappa=10$ since the very ill-conditioned cases, such as $\kappa=2,000$, will take HSGD a forbidding long runtime.

\begin{figure}[t]
    \centering
    \includegraphics[width = 0.24\linewidth]{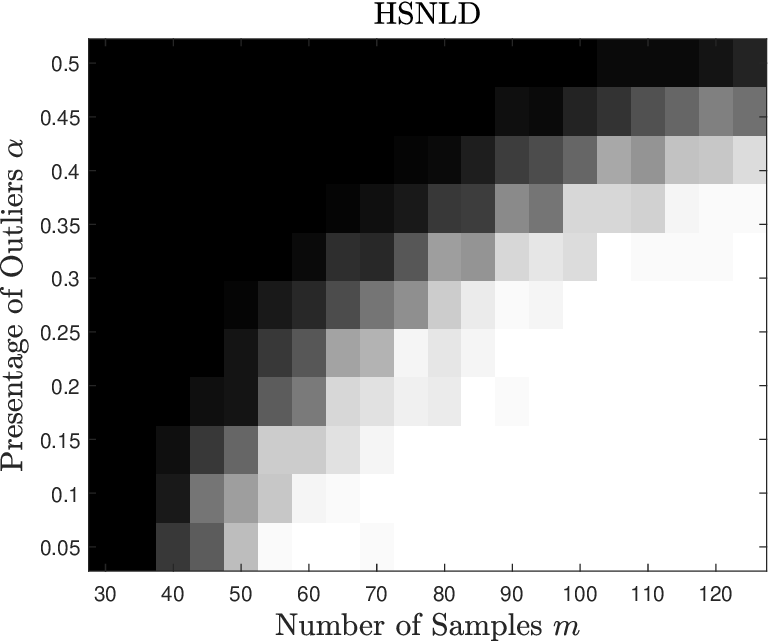}
    \hfill
    \includegraphics[width = 0.24\linewidth]{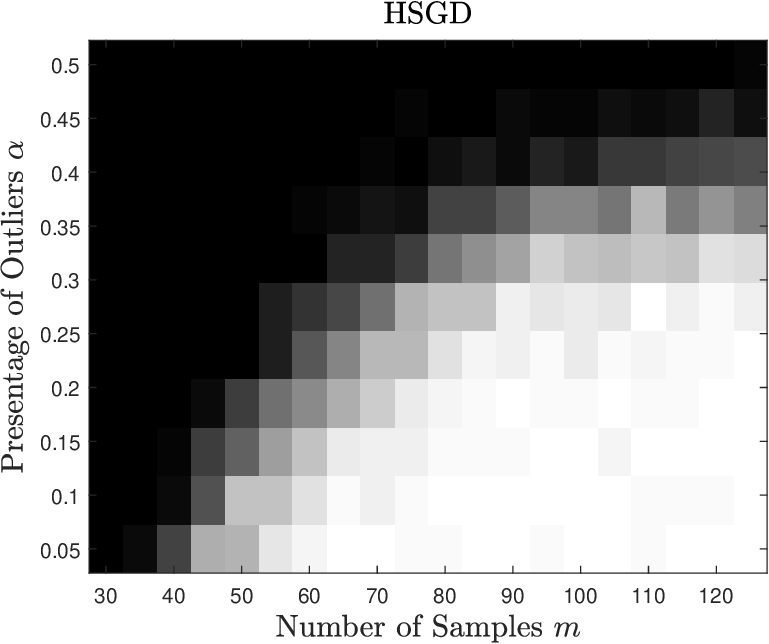}
    \hfill
    \includegraphics[width = 0.24\linewidth]{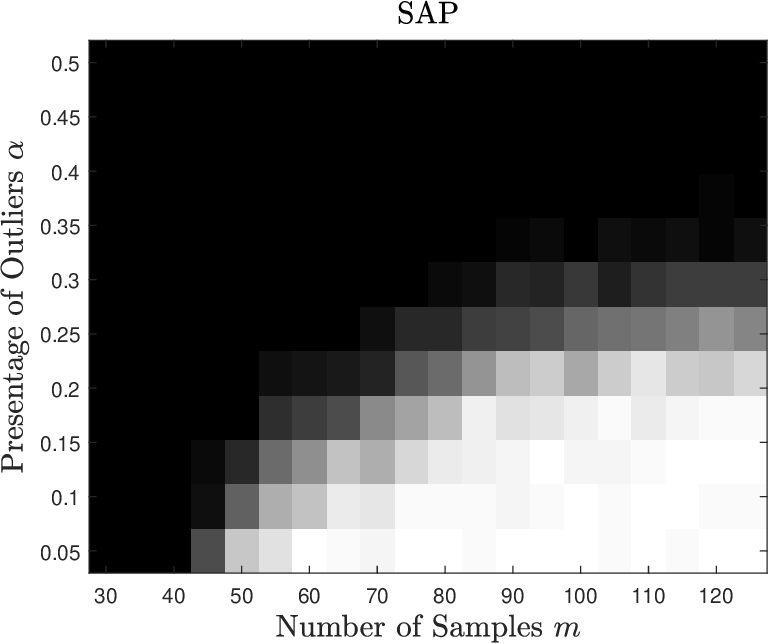}
    \hfill
    \includegraphics[width = 0.24\linewidth]{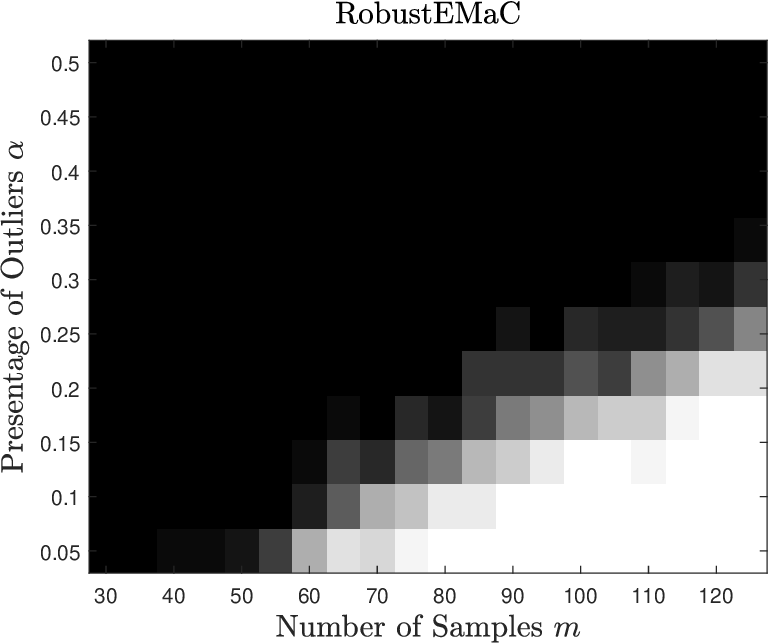}
    \vspace{-0.1in}
    \caption{Phase transition: Number of samples \textit{vs.}~rate of outliers. Problem dimension $n=125$, condition number $\kappa=10$, and rank $r=10$ for every problem.}
    \label{fig:sample_rate_vs_outlier}
\end{figure}

\begin{figure}[t]
    \centering
    \includegraphics[width = 0.24\linewidth]{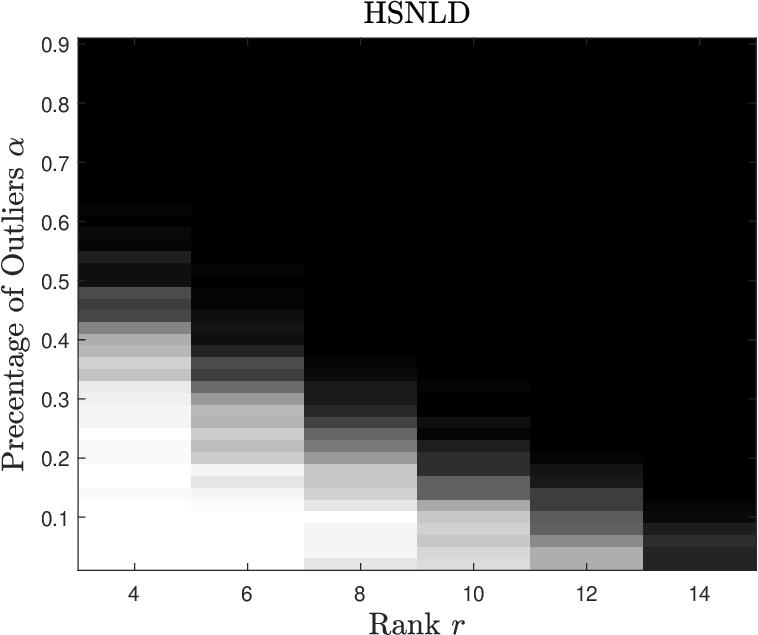}
    \hfill
    \includegraphics[width = 0.24\linewidth]{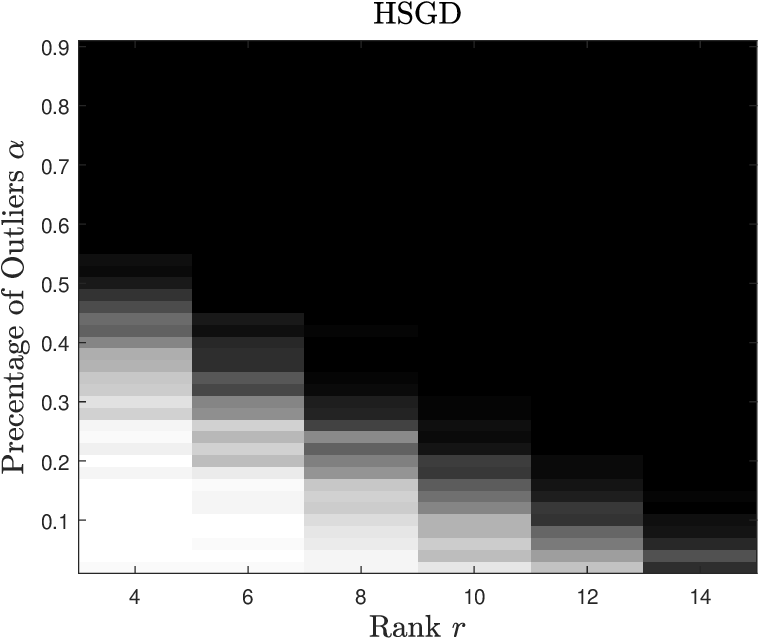}
    \hfill
    \includegraphics[width = 0.24\linewidth]{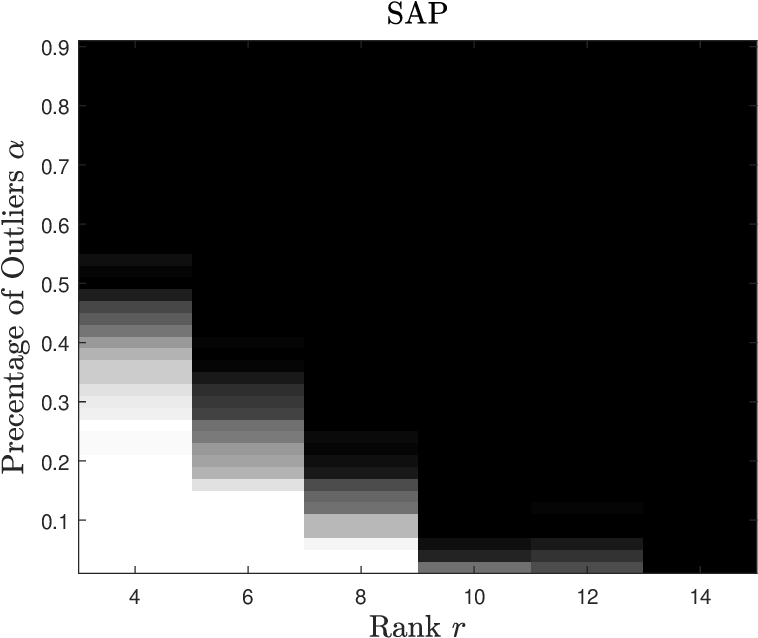}
    \hfill
    \includegraphics[width = 0.24\linewidth]{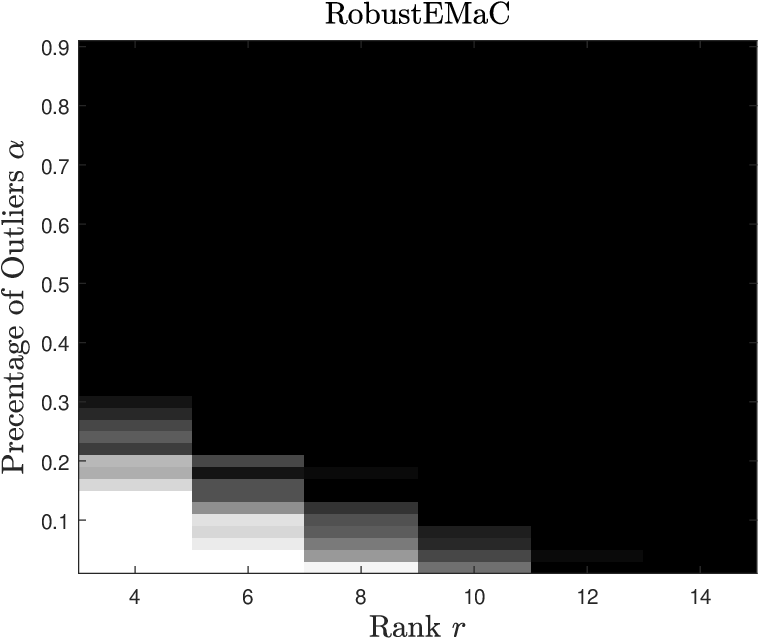}
    \vspace{-0.1in}
    \caption{Phase transition: Rank \textit{vs.}~rate of outliers. Problem dimension $n=125$, condition number $\kappa=10$, and $m=50$ entries are sampled in every problem.}
    \label{fig:rank_vs_outlier}
\end{figure}

\begin{figure}[t]
    \centering
    \includegraphics[width = 0.24\linewidth]{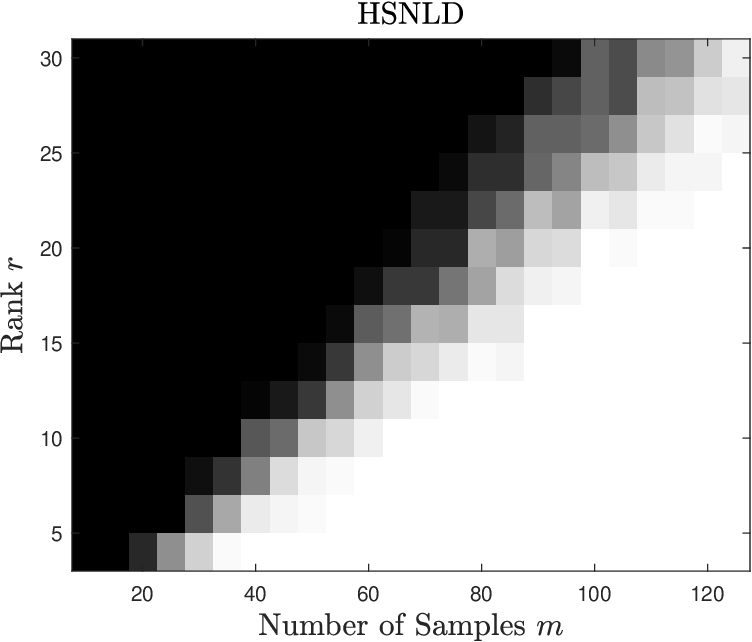}
    \hfill
     \includegraphics[width = 0.24\linewidth]{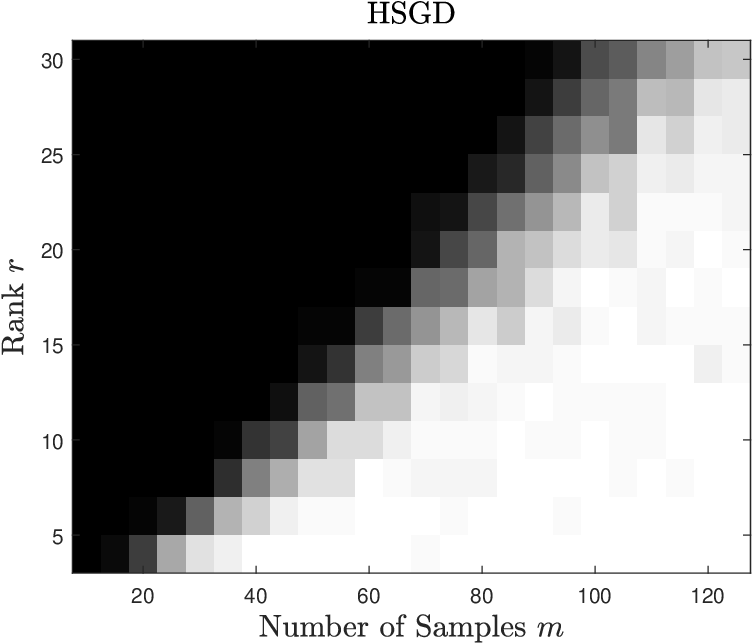}
    \hfill
    \includegraphics[width = 0.24\linewidth]{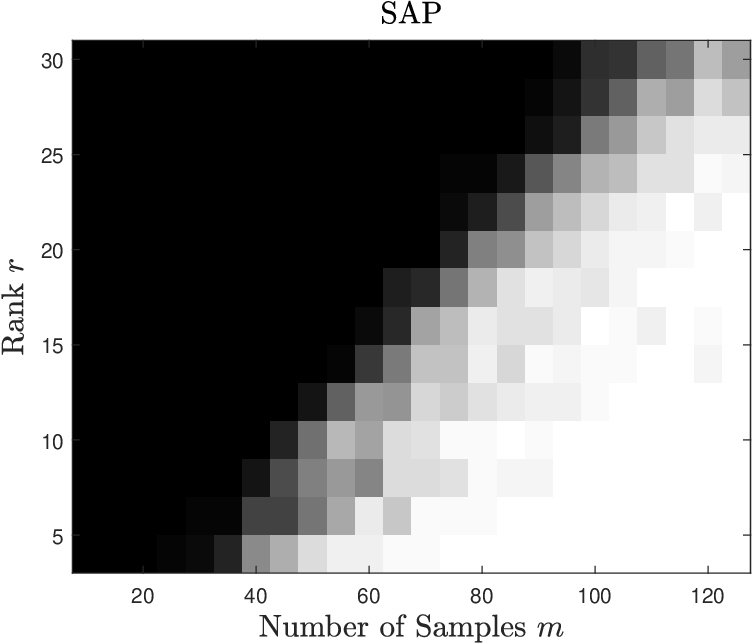}
    \hfill
    \includegraphics[width = 0.24\linewidth]{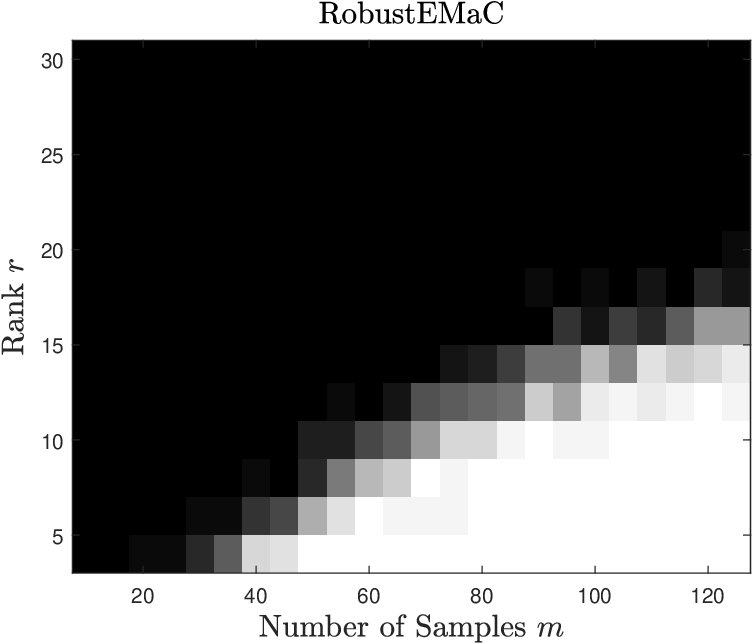}
    \vspace{-0.1in}
    \caption{Phase transition: Number of samples \textit{vs.}~rank. Problem dimension $n=125$, condition number $\kappa=10$, and outlier rate $\alpha=10\%$ for every problem.}
    \label{fig:rank_vs_sample_rate}
\end{figure}

\vspace{0.03in}
\noindent\textbf{Convergence performance.}
In \Cref{fig:speed_tests}, we plot the  relative error v.s. runtime for the tested algorithms with different condition numbers. All problems have dimension $n=2^{16}-1$ and the results are averaged over 5 trials. When the condition number is perfect, i.e., $\kappa=1$, the HSNLD has tied convergence performance with HSGD. When the condition number grows, HSGD's performance drops dramatically while HSNLD takes no impact---this matches our theoretical results. Generally speaking, SAP's convergence performance is not impacted by the condition number either; however, the truncated SVD it applies every iteration may fail to function with vast condition number. RobustEMaC is not tested here because its sublinear convergence is too slow for the comparison.

\begin{figure}[t]
    \centering
    \includegraphics[width = 0.24\linewidth]{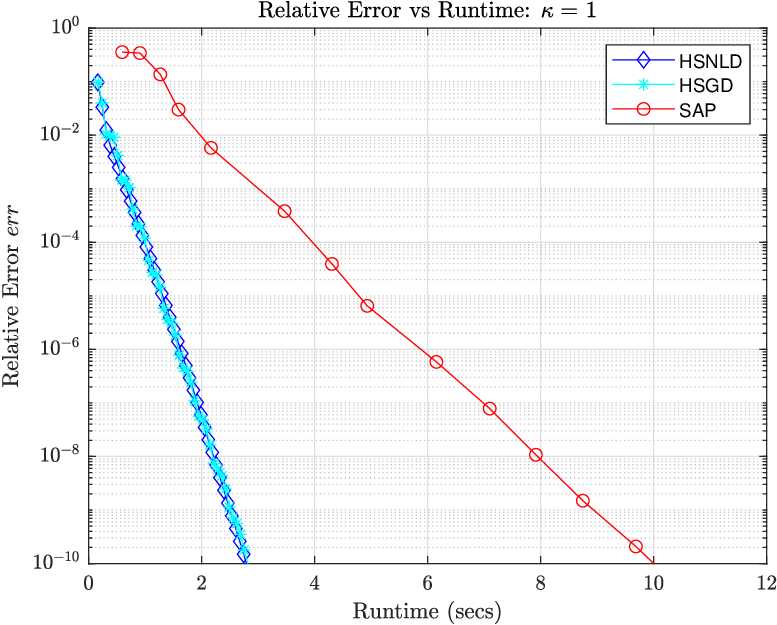}
    \hfill
    \includegraphics[width = 0.24\linewidth]{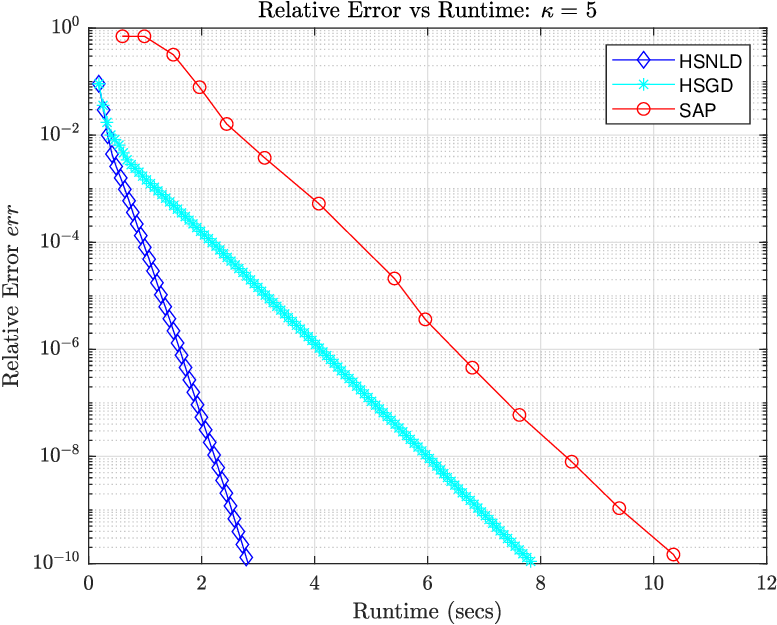}
    \hfill
    \includegraphics[width = 0.24\linewidth]{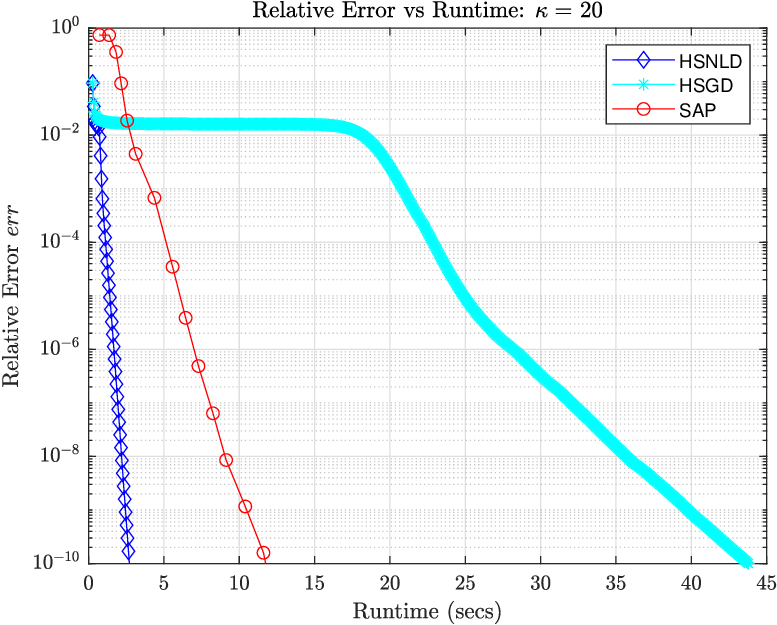}
    \hfill
    \includegraphics[width = 0.24\linewidth]{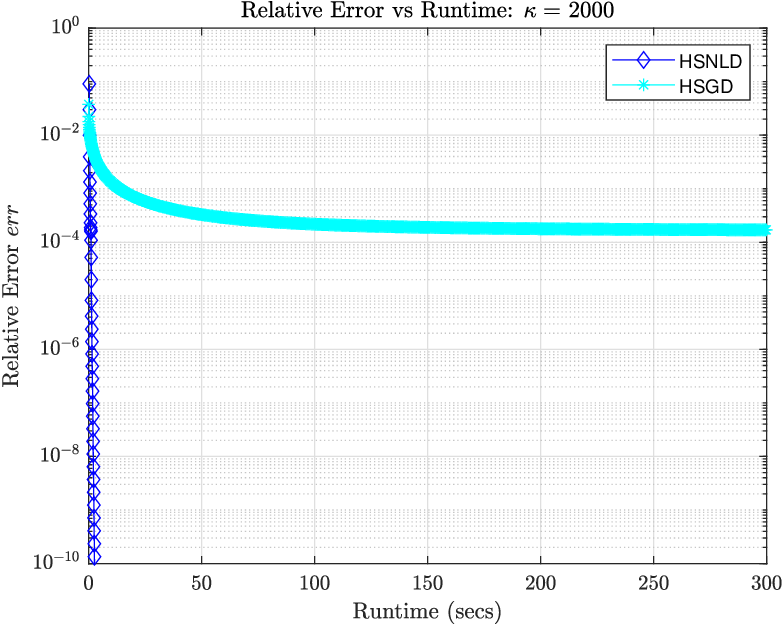}
    \vspace{-0.1in}
    \caption{Convergence performance: Relative error \textit{v.s.}~runtime with condition numbers $\kappa=1,5,20,2000$. Note that the singular values of complex-valued SVD failed to stay real with vast condition number $\kappa = 2000$, thus no convergence result can be reported for SAP.  }
    \label{fig:speed_tests}
\end{figure}

\subsection{Direction of arrival estimation}\label{subsec:DOA} 
Consider $r$ narrowband far-field source signals $\{\tilde{g}_i\}_{i=1}^r\subseteq\mathbb{C}$  impinging on an array of omnidirectional sensors from distinct yet close directions as shown in \Cref{fig:DOA_explain}. Let the DOAs corresponding to these sources be denoted by $\{ \theta_i \}_{i=1}^r$. The noiseless received signal $x_j$ at the $j$-th sensor of a Uniform Linear Array, with a half-wavelength spacing between adjacent sensors, is given by:
\vspace{-0.1in}
\begin{equation*}
x_j = \sum_{i=1}^r \tilde{g}_i e^{-\pi \imath (j-1) \sin \theta_i}, \quad j = 1, 2, \dots, n,
\vspace{-0.05in}
\end{equation*}
where $\imath = \sqrt{-1}$.  Consider the scenario where only a subset $\Omega \subseteq [n]$ of the sensors is available, and the received signals are partially corrupted. The resulting observations are of the form $y_j = x_j+o_j,~j\in \Omega $, where $o_j$ represents the outlier at the $j$-th sensor. The problem of recovering $ \bx= [x_1,x_2,\cdots,x_n]^\top\in\C^n$ from observed data on $\Omega$ (i.e., $\{y_j\}_{j\in \Omega}$) is equivalent to recover $\bz=\mathcal{W}^{-1}\bx$ as defined in \cref{eq:Hankelnotation} and can be formulated as a robust rank-$r$ Hankel matrix completion problem. For the interested reader, please see \cite[Section~6.3]{yang2018sparse} for further details. 

We test HSNLD against the state-of-the-art with DOA data generated from $n = 2^{12}$ sensors and $r = 3$ far-field sources, with directions $\theta = 87^\circ, 87.1^\circ, 87.3^\circ$ as shown in \Cref{fig:DOA_explain}. The underlying square Hankel matrix corresponding to the DOA data has a condition number $\kappa \approx 1083$. In this experiment, only $1.5\%$ sensors are activated (or deployed), and $10\%$ of the collected data are corrupted by complex outliers  whose components are uniformly drawn from the intervals $[ -\sum_{j=1}^n \mathrm{Re}(x_j)/n, \sum_{j=1}^n \mathrm{Re}(x_j)/n ]$ and $[ -\sum_{j=1}^n \mathrm{Im}(x_j)/n, \sum_{j=1}^n \mathrm{Im}(x_j)/n ]$. All test algorithms are terminated at $\|z_k-z\|/\|z\|\leq 10^{-5}$. %$\|\Pi_{\Omega}\bf-\bz-\bs\|/\|\Pi_{\Omega}\bf\|\leq10^{-6}$
%with the definition in \cref{eq:Hankelnotation}. 
\Cref{fig:doa} plots the signal recovered by HSNLD, compared to the original signal and corrupted partial observations. HSNLD successfully recovers the ground truth. The state-of-the-art algorithms, HSGD and SAP, also successfully recover the DOA signal. Due to space constraints, their plots are omitted. However, the runtime reported in \Cref{tab:doa} shows the significant speed advantage of HSNLD; it achieves nearly $200\times$ speedup to HSGD and $10\times$ speedup to SAP.

\begin{figure}[h]
    \centering
    \includegraphics[width = 0.36\linewidth]{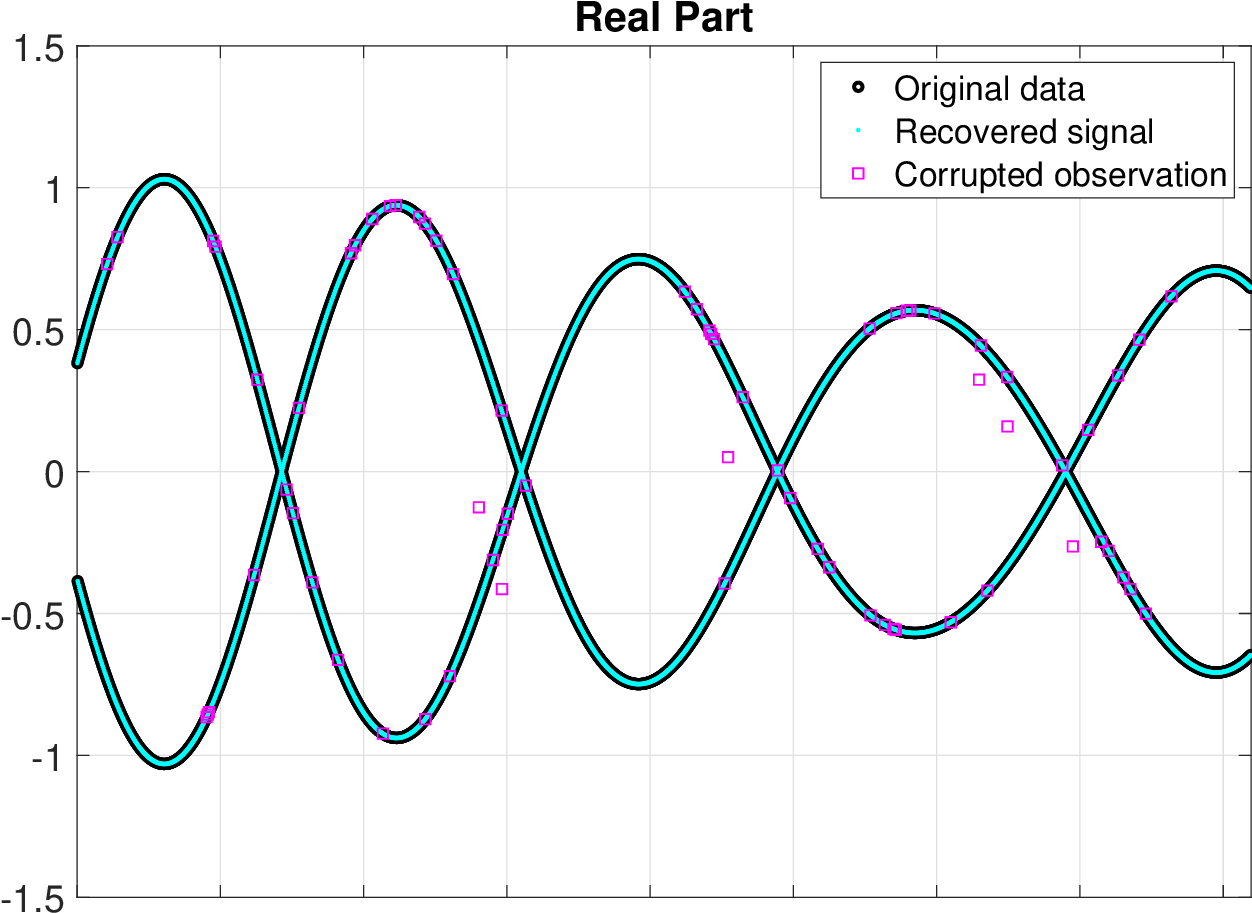}
    \qquad\quad
    \includegraphics[width = 0.36\linewidth]{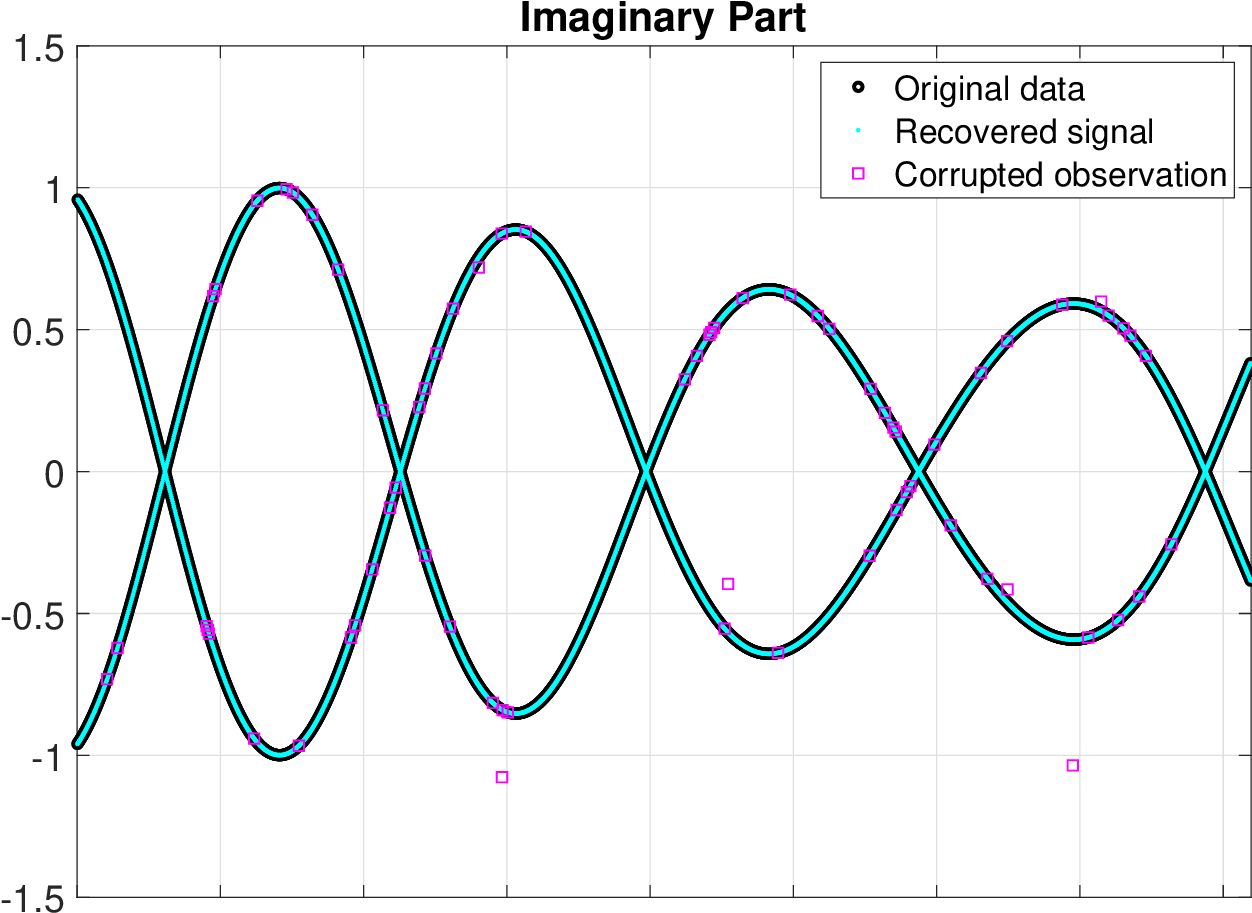}
    \caption{DOA data recover by HSNLD. Only $1.5\%$ of the sensors are activated/deployed and $10\%$ of the observations are corrupted by outliers. Left and right: Real and imaginary parts of the data.}
    \label{fig:doa}
\end{figure}
\begin{table}[h]
    \caption{The number of iterations and runtime for DOA data recovery} \label{tab:doa}
    \centering
    \begin{small}
    \begin{tabular}{c|c c c}
    \toprule
    \textsc{Method}  & HSNLD & HSGD  & SAP  \\
    \midrule
    \textsc{Iteration}  & 72 & 29186 & \textbf{32}\\
    \textsc{Runtime (sec)} & \textbf{0.236} & 38.611  & 2.040\\
    %\textsc{Relative error} &  &   & ~\\
    \bottomrule
    \end{tabular}
    \end{small}%
%\end{varwidth}
%\vspace{-0.1in}
\end{table}

\subsection{Nuclear magnetic resonance} 
We test HSNLD with nuclear magnetic resonance (NMR) signal recovery problem, a classic application of low-rank Hankel recovery. The 1D NMR signal being tested has a length of $32,768$ with an approximate rank of $40$ and condition number $30$. The observation rate is set to be $30\%$ and $10\%$ outliers are added. All algorithms stop at $|\mathrm{err}_{k+1}-\mathrm{err}_{k}|/\mathrm{err}_{k}<10^{-4}$. The recovery results are reported in \Cref{fig:nmr,tab:nmr}. One can see that all three tested algorithms recovered the signal successfully while HSNLD is significantly faster.

\begin{figure}[h]
    \centering
    \includegraphics[width = 0.36\linewidth]{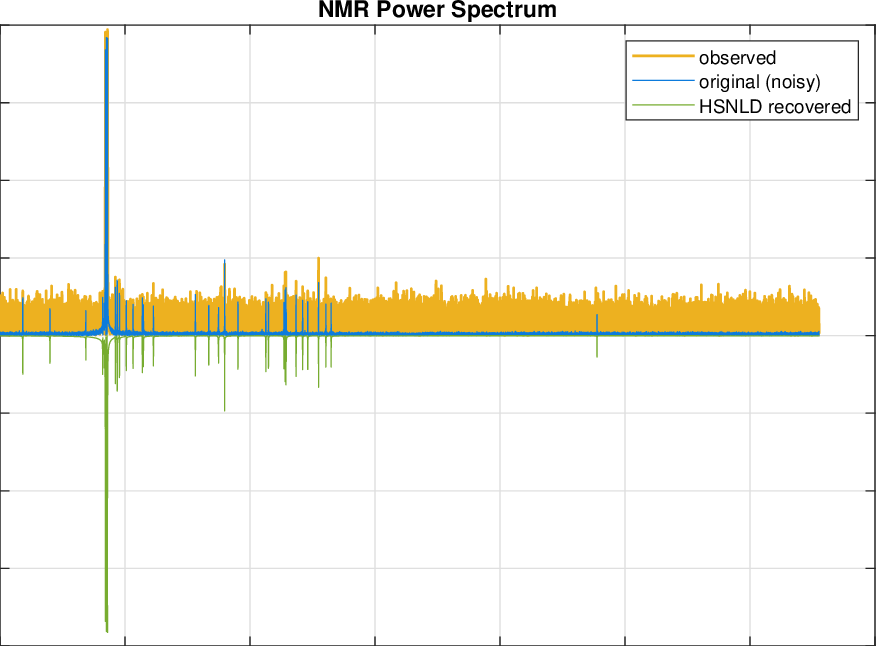}
    \qquad\quad
    \includegraphics[width = 0.36\linewidth]{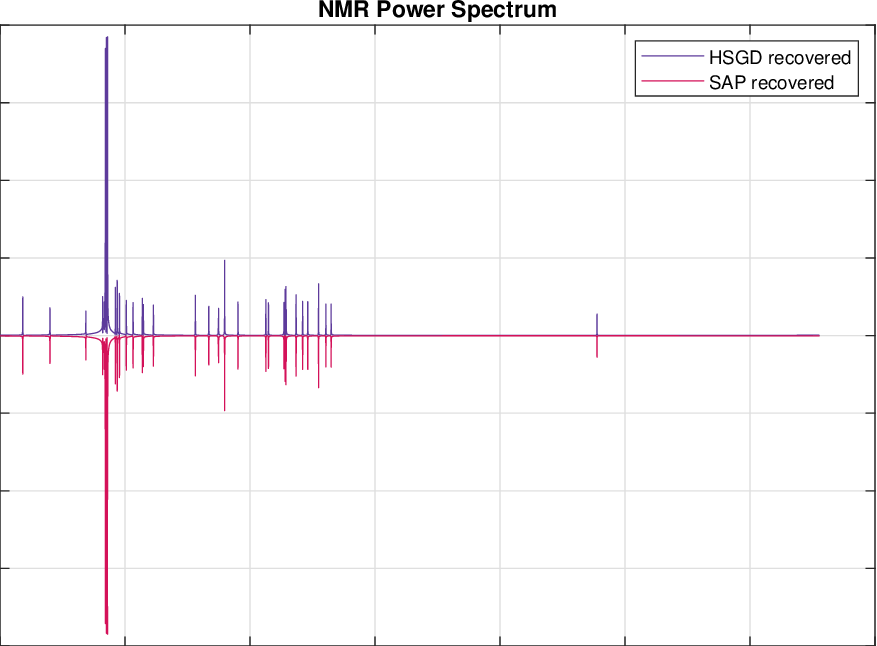}
    \vspace{-0.1in}
    \caption{NMR recovery results in the form of power spectrum.}
    \label{fig:nmr}
\end{figure}
\begin{table}[h]
    \caption{The number of iterations and runtime for NMR recovery.} \label{tab:nmr}
    \centering
    \begin{small}
    \begin{tabular}{c|c c c}
    \toprule
    \textsc{Method}  & HSNLD & HSGD  & SAP  \\
    \midrule
    \textsc{Iteration}  & 43 & 314 & \textbf{40}\\
    \textsc{Runtime (sec)} & \textbf{5.22} & 28.82  & 79.68\\
    \bottomrule
    \end{tabular}
    \end{small}%
%\end{varwidth}
\end{table}

\section{Proofs} \label{sec:proofs}
%\section{Preliminary for the proofs}
In this section, we provide the analysis of the claimed theoretical results.
All the proofs are under \Cref{amp:Replacement,amp:incoherence,amp:sparsity}. We start with introducing some additional notations used in the analysis.

For ease of presentation, in addition to \Cref{sec:notation}, we denote the following slightly abused notation for the proofs. We let $\Omega$ be any subset of $[n]$ that satisfies \Cref{amp:Replacement} with $m:=\lvert\Omega\rvert$ and $p:=m/n$. We denote the tangent space of rank-$r$ matrix manifold at $\G \bzs$ by
\begin{equation*}
T:=\left\{\bm{X}~|~ \bm{X}=\BU_{\star} \bm{C}^*+\bm{D} \BV_{\star}^{*}\ \text{for any}\ \bm{C}\in\mathbb{C}^{n_1\times r},~ \bm{D}\in\mathbb{C}^{n_2\times r}\right\}.
\end{equation*}
Let $\Omega_a$ denote the index set of the $a$-th antidiagonal of $n_1\times n_2$ matrix, i.e.,
\begin{align}\label{def:Omegaa}
\Omega_a : = \{(i,j):i+j=a+1, 1\le i\le n_1, 1\le j\le n_2\}, ~a=1,2\cdots,n.
\end{align}
We also introduce the following notations for the ease of presentation: Let $\BQ_k$ be the best rotation matrix that aligns $(\BL_{k}, \BR_{k})$ and $(\BLs,\BRs)$, and according to {\cite[Lemma~22]{tong2021accelerating}} we know $\BQ_k$ always exist if $d_k<\sigmas_r$. Denote
\begin{equation}\label{eq:defdelta}
 \BLn := \BL_{k}\BQ_{k},~\BRn := \BR_{k}\BQ_{k}^{-*},~\BDeltaLk := \BLn - \BLs,~\BDeltaRk := \BRn - \BRs.   
\end{equation}
%$$\BLn := \BL_{k}\BQ_{k},~\BRn := \BR_{k}\BQ_{k}^{-*},~\BDeltaLk := \BLn - \BLs,~\BDeltaRk := \BRn - \BRs. $$
%With a slight abuse of notation, we let $\P_{\Omega}$ be the projection operator onto the space which can be represented by an orthonormal basis of Hankel matrices. That is, for any matrix $\bm{Z}\in \mathbb{C}^{n_1\times n_2}$,
%\begin{equation*}
%\P_{\Omega}\left(\bm{Z}\right)=\sum_{a\in\Omega}\l \bm{Z}, \bm{H}_a\r \bm{H}_a,
%\end{equation*}
%We let $\{\bm{H}_a\}_{a=1}^n$ be the orthonormal basis of Hankel matrices, defined by
%\begin{equation}\label{def:Ha}
%\HQ{\bm{H}_a:=\frac{1}{\sqrt{\varsigma_a}}\H \bm{e}_a,}  \quad \textnormal{where } \bm{e}_a~\text{is the} ~a\text{-th standard basis vector of } \mathbb{R}^n.
%\end{equation}
%By this definition, for any Hankel matrix $\G\bz$, we have %$\P_{\Omega}\G\bz=\G\Pi_{\Omega}\bz$.

\subsection{Technical lemmas} \label{sec:tech_lemma}
In this subsection, we mainly introduce some supporting lemmas, including bounds estimated from probability inequalities.
\begin{lemma}[{\cite[Lemma~2]{cai2019fast}},~{\cite[Lemma~9]{cai2023structured}}]\label{projectionerror}
It holds with probability at least $1-2n^{-2}$ that
\begin{equation}\label{event:RIP3}
\lV \left(\frac{1}{p}\G \Pi_\Omega -\G\right) \bzs\rV_2\le \tilde{c}_0\sqrt{(\mu c_s r\log n)/m}\lV \G \bzs\rV_2
\end{equation}
 provided $m\ge \mu c_s r\log n$. $\tilde{c}_0$ is a universal constant.
\end{lemma}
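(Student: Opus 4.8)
The final statement to be proven is \Cref{projectionerror}, which is a probabilistic bound on $\lV(\frac{1}{p}\G\Pi_\Omega - \G)\bzs\rV_2$. Since the excerpt explicitly cites this as coming from prior work (\cite[Lemma~2]{cai2019fast} and \cite[Lemma~9]{cai2023structured}), the "proof" is really a re-derivation via matrix concentration. Let me write a plan for proving this.

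\textbf{The plan is to} express the operator $\frac{1}{p}\G\Pi_\Omega - \G$ acting on $\bzs$ as a sum of independent zero-mean random matrices, one for each sample $a_k \in \Omega$, and then apply a matrix Bernstein inequality. Concretely, since $\Pi_\Omega \bv = \sum_{k=1}^m v_{a_k}\be_{a_k}$ with the $a_k$ drawn i.i.d.\ uniformly from $[n]$, we can write $\frac{1}{p}\G\Pi_\Omega\bzs = \frac{n}{m}\sum_{k=1}^m (\bzs)_{a_k}\G\be_{a_k} = \frac{n}{m}\sum_{k=1}^m z^\star_{a_k}\G\be_{a_k}$. Taking expectation over a single uniform draw $a$ gives $\mathbb{E}[z^\star_a \G\be_a] = \frac{1}{n}\sum_{t=1}^n z^\star_t\G\be_t = \frac{1}{n}\G\bzs$, so the random matrices $\BX_k := \frac{n}{m}z^\star_{a_k}\G\be_{a_k} - \frac{1}{m}\G\bzs$ are i.i.d., zero-mean, and $\sum_k \BX_k = (\frac{1}{p}\G\Pi_\Omega - \G)\bzs$.

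\textbf{Next I would} bound the two ingredients needed for matrix Bernstein: the per-summand spectral norm and the matrix variance. For the norm: $\lV\BX_k\rV_2 \le \frac{n}{m}|z^\star_{a_k}|\lV\G\be_{a_k}\rV_2 + \frac{1}{m}\lV\G\bzs\rV_2$. Here $\lV\G\be_t\rV_2 = \lV\G\be_t\rV_\fro = 1$ since $\G$ is an isometry on the reweighted vector (each column of $\G\be_t$ is a scaled standard basis vector and the reweighting makes $\|\G\be_t\|_\fro = \|\be_t\|_2 = 1$). The key quantity is $\max_t |z^\star_t| = \lV\bzs\rV_\infty$, which the incoherence \Cref{amp:incoherence} controls: from $\G\bzs = \BU_\star\BSigmas\BV_\star^*$ and the incoherence bounds on $\lV\BU_\star\rV_{2,\infty}, \lV\BV_\star\rV_{2,\infty}$, one gets $\lV\G\bzs\rV_\infty \le \lV\BU_\star\rV_{2,\infty}\lV\BSigmas\rV_2\lV\BV_\star\rV_{2,\infty} \le (\mu c_s r/n)\sigmas_1$. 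Since each entry $z^\star_t$ is a reweighted antidiagonal entry, $|z^\star_t| = \sqrt{\varsigma_t}|[\G\bzs]_{i,j}| \le \sqrt{\min(n_1,n_2)}\lV\G\bzs\rV_\infty$, giving $\lV\bzs\rV_\infty \lesssim \sqrt{\mu c_s r/n}\,\sigmas_1 / \sqrt{c_s}$ roughly; the upshot is $\lV\BX_k\rV_2 \lesssim \frac{1}{m}\sqrt{\mu c_s r}\lV\G\bzs\rV_2$ up to constants. For the variance $\lV\sum_k \mathbb{E}[\BX_k\BX_k^*]\rV_2$ and $\lV\sum_k\mathbb{E}[\BX_k^*\BX_k]\rV_2$: compute $\mathbb{E}[\BX_k\BX_k^*] = \frac{n^2}{m^2}\mathbb{E}[|z^\star_a|^2 \G\be_a\be_a^*\G^*] - \frac{1}{m^2}\G\bzs(\G\bzs)^*$, and bound the first term by $\frac{\lV\bzs\rV_\infty^2}{m^2}\cdot n\cdot\mathbb{E}[\G\be_a\be_a^*\G^*]$; since $\sum_t \G\be_t\be_t^*\G^* $ has spectral norm $\le c_s$ (this is where the squareness constant $c_s$ enters — each antidiagonal contributes to at most $\min(n_1,n_2)$ rows), the variance is $\lesssim \frac{\mu c_s r}{m^2}\cdot c_s\cdot\lV\G\bzs\rV_2^2 \cdot \frac{1}{c_s}$... one collects this into the form $v \lesssim \frac{\mu c_s r}{m}\lV\G\bzs\rV_2^2 / m$ so that $\sqrt{v\log n}$ dominates. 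Then matrix Bernstein gives, with probability $\ge 1 - 2n^{-2}$ (choosing the failure probability via $(n_1+n_2)\exp(-\cdots)$), the bound $\lV\sum_k\BX_k\rV_2 \lesssim \sqrt{\frac{\mu c_s r\log n}{m}}\lV\G\bzs\rV_2 + \frac{\mu c_s r\log n}{m}\lV\G\bzs\rV_2$, and the second term is absorbed into the first under the hypothesis $m \ge \mu c_s r\log n$.

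\textbf{The main obstacle} is tracking the squareness constant $c_s = \max\{n/n_1, n/n_2\}$ correctly through the variance computation — specifically bounding $\lV\sum_{t=1}^n \G\be_t\be_t^*\G^*\rV_2$ and the analogous column-side sum, and correctly relating $\lV\bzs\rV_\infty$ (a reweighted-vector quantity involving $\sqrt{\varsigma_t}$) to $\lV\G\bzs\rV_\infty$ (a matrix-entry quantity). The reweighting by $\sqrt{\varsigma_t}$ is exactly what makes $\G$ an isometry but it means the per-entry bounds and the per-antidiagonal bounds differ by factors of $\varsigma_t$, which in a roughly square Hankel matrix is $\Theta(\min(n_1,n_2))$. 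Getting the constants and the $c_s$-dependence to land as stated requires care but no new ideas; everything else is a routine application of matrix Bernstein. Since this lemma is quoted verbatim from \cite{cai2019fast,cai2023structured}, I would most likely just cite those references rather than reproduce the full argument, noting that the proof follows from a standard matrix Bernstein inequality applied to the decomposition above together with the incoherence \Cref{amp:incoherence}.
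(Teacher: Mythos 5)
Your bottom line---that this lemma is imported verbatim from the cited works and should simply be cited---is exactly what the paper does: its ``proof'' of \Cref{projectionerror} is a one-line pointer to \cite[Lemma~2, Eq.~(25)]{cai2019fast} and \cite[Lemma~9]{cai2023structured}. Your sketched re-derivation (decompose $(\frac{1}{p}\G\Pi_\Omega-\G)\bzs$ into i.i.d.\ zero-mean summands over the samples, control them via \Cref{amp:incoherence}, apply matrix Bernstein) is also the route those references take, so at the level of strategy there is no divergence.

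However, as written your sketch would not deliver the stated bound, and the slip is precisely the weighting issue you flag as ``the main obstacle.'' You bound $\lV\G\be_{a_k}\rV_2$ by its Frobenius norm $1$ (the spectral norm is actually $1/\sqrt{\varsigma_{a_k}}$, since $\G\be_t$ is $1/\sqrt{\varsigma_t}$ times a partial permutation pattern supported on the $t$-th antidiagonal) and separately bound $|z^\star_t|\le\sqrt{\varsigma_t}\,\lV\G\bzs\rV_\infty\le\sqrt{n/c_s}\,\lV\G\bzs\rV_\infty$. Multiplying these two loose bounds gives a per-summand estimate of order $\frac{\mu r\sqrt{c_s n}}{m}\sigmas_1$, not the $\frac{\sqrt{\mu c_s r}}{m}\lV\G\bzs\rV_2$ you state as the upshot; fed into Bernstein, the $L\log n$ term is then absorbed into $\sqrt{(\mu c_s r\log n)/m}\,\sigmas_1$ only when $m\gtrsim \mu r n\log n$, which is vacuous and far stronger than the hypothesis $m\ge \mu c_s r\log n$. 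The repair is to keep the spectral norm $\lV\G\be_t\rV_2=1/\sqrt{\varsigma_t}$ so that it cancels the $\sqrt{\varsigma_t}$ in $|z^\star_t|=\sqrt{\varsigma_t}\,|x_t|$, giving the per-summand bound $\frac{n}{m}\lV\G\bzs\rV_\infty\le\frac{\mu c_s r}{m}\sigmas_1$ (using $\lV\G\bzs\rV_\infty\le\lV\BU_{\star}\rV_{2,\infty}\lV\BSigmas\rV_2\lV\BV_{\star}\rV_{2,\infty}\le\frac{\mu c_s r}{n}\sigmas_1$, as the paper itself does in \Cref{subsec:proofini}). The same cancellation is what makes the variance come out right and is missing from your (currently elliptical) variance paragraph: $(\G\be_t)(\G\be_t)^*$ is diagonal with nonzero entries $1/\varsigma_t$, so $\sum_t|z^\star_t|^2(\G\be_t)(\G\be_t)^*$ has diagonal entries equal to the squared row norms of $\G\bzs$, each at most $\frac{\mu c_s r}{n}\sigmas_1^2$ by incoherence, yielding a total variance of order $\frac{\mu c_s r}{m}\sigmas_1^2$; with these two corrections Bernstein gives \eqref{event:RIP3} under $m\ge\mu c_s r\log n$ with probability $1-2n^{-2}$.
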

\begin{proof}
{The bound of $\lV \left(\frac{1}{p}\G \Pi_\Omega -\G\right) \bzs\rV_2$ is provided in {\cite[Lemma~2]{cai2019fast}} (see {\cite[Eq.~(25)]{cai2019fast}}), and also proved with a slightly different sampling model in \cite[Lemma~9]{cai2023structured}.}
\end{proof}

\begin{lemma}[{\cite[Lemma~10]{cai2023structured}}]\label{RIP1}
 There exists a universal constant $\tilde{c}_1$ such that  
\begin{equation}\label{eq:projectionRIP1}
\lV \frac{1}{p}\P_T\G\Pi_{\Omega}\G^{*}\P_T-\P_T \G\G^{*}\P_T \rV\le \varepsilon_0
\end{equation}
holds with probability at least $1-2n^{-2}$, provided $m\ge \varepsilon_0^{-2}\tilde{c}_1 \mu r\log n$.
\end{lemma}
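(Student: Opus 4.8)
The statement to prove is \Cref{RIP1}: a restricted-isometry-type bound for the operator $\frac{1}{p}\P_T\G\Pi_\Omega\G^*\P_T$ around its mean $\P_T\G\G^*\P_T$, with sample complexity $m \gtrsim \varepsilon_0^{-2}\mu r\log n$. Since this is cited verbatim from \cite[Lemma~10]{cai2023structured}, the natural plan is to reproduce that argument (a matrix Bernstein / matrix Chernoff computation), so I will sketch that route.

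**Plan.** The plan is to write $\frac{1}{p}\P_T\G\Pi_\Omega\G^* = \frac{n}{m}\sum_{k=1}^m \P_T\G\be_{a_k}\be_{a_k}^*\G^*$ where $\Omega = \{a_1,\dots,a_m\}$ are i.i.d.\ uniform on $[n]$ (\Cref{amp:Replacement}), and note that $\mathbb{E}[n\,\P_T\G\be_a\be_a^*\G^*] = \P_T\G\G^*$ since $\sum_{a=1}^n \be_a\be_a^* = \BI$ on $\C^n$ and $\G^*\G = \I$. So $Z := \frac{1}{p}\P_T\G\Pi_\Omega\G^*\P_T - \P_T\G\G^*\P_T$ is a sum of $m$ i.i.d.\ centered self-adjoint random operators $Z = \sum_k X_k$ with $X_k = \frac{n}{m}(\P_T\G\be_{a_k}\be_{a_k}^*\G^*\P_T) - \frac{1}{m}\P_T\G\G^*\P_T$. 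First I would bound the per-term operator norm: $\|\P_T\G\be_a\|_2^2 = \|\P_T\G\be_a\|_\fro^2 \le \mu c_s r/n \cdot (\text{const})$, which is exactly where \Cref{amp:incoherence} enters — the incoherence of $\BU_\star,\BV_\star$ controls $\|\P_T(\G\be_a)\|_\fro$ uniformly in $a$, giving $\|X_k\| \lesssim \mu c_s r/m$ (the $c_s$ absorbing the rectangular geometry, though the statement as written keeps only $\mu r$, matching the square-ish regime). Next I would bound the variance proxy $\|\sum_k \mathbb{E}[X_k^2]\|$: expanding, $\mathbb{E}[X_k^2] \preceq \frac{n}{m^2}\mathbb{E}[\|\P_T\G\be_a\|_2^2 \cdot \P_T\G\be_a\be_a^*\G^*\P_T] \preceq \frac{\mu c_s r}{m^2}\P_T\G\G^*\P_T$, so the variance is $\lesssim \mu c_s r/m$ (using $\|\P_T\G\G^*\P_T\|\le 1$). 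Then matrix Bernstein (e.g.\ Tropp) gives $\|Z\| \le \varepsilon_0$ with probability $\ge 1 - 2n^{-2}$ once $m \gtrsim \varepsilon_0^{-2}\mu r\log n$ (the $\log n$ from the failure-probability budget, the $\varepsilon_0^{-2}$ because we are in the small-deviation regime where the variance term dominates the Bernstein bound).

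**Main obstacle.** The delicate point is the per-antidiagonal structure: $\G\be_a$ is the (normalized) indicator of the $a$-th antidiagonal $\Omega_a$, a rank-one-per-antidiagonal object rather than a single matrix entry, so $\|\G\be_a\|_\fro = 1$ but $\|\P_T\G\be_a\|_\fro$ must be estimated carefully using both the incoherence bounds on $\BU_\star,\BV_\star$ \emph{and} the fact that each antidiagonal has at most $\min\{n_1,n_2\}$ entries spread across distinct rows and columns. This is the calculation that makes $c_s$ appear and is the crux of \cite[Lemma~2]{cai2019fast}-style estimates; I would lean on \Cref{amp:incoherence} precisely here. The rest — assembling $X_k$, bounding $\|X_k\|$ and $\|\sum\mathbb{E}X_k^2\|$, invoking matrix Bernstein — is routine. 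A secondary bookkeeping point is that sampling \emph{with replacement} (\Cref{amp:Replacement}) means the $X_k$ are genuinely i.i.d., which is actually what makes the i.i.d.\ matrix Bernstein directly applicable (no without-replacement coupling needed), so the replacement model is a convenience rather than an obstacle here.
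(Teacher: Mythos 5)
Your proposal is correct and is essentially the argument the paper itself omits: the paper's "proof" of this lemma is only a pointer to \cite[Lemma~9]{cai2023structured} (proved there under a slightly different sampling model), and your matrix-Bernstein sketch --- writing $\frac{1}{p}\P_T\G\Pi_\Omega\G^*\P_T$ as an i.i.d.\ sum over the with-replacement draws, bounding $\lV\P_T\G\be_a\rV_\fro^2\lesssim \mu c_s r/n$ uniformly in $a$ via incoherence together with the fact that each antidiagonal meets every row and column at most once, then bounding the per-term norm and variance proxy by $\cO(\mu c_s r/m)$ and invoking matrix Bernstein --- is exactly the standard proof underlying that cited result, correctly adapted to the sampling model of \Cref{amp:Replacement} (and, as you note, the i.i.d.\ structure makes this adaptation immediate). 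The only cosmetic mismatch is that your derivation naturally yields a sample complexity scaling as $c_s\mu r\log n$ whereas the lemma as stated writes $\mu r\log n$; this reflects how the paper absorbs or drops $c_s$ in the statement, not a flaw in your argument (a minor bookkeeping slip in your intermediate variance factor, $n/m^2$ versus $n^2/m^2$ before averaging over $a$, does not affect the final bound).
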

\begin{proof}
The result is proved in \cite[Lemma~9]{cai2023structured} under a slightly different sampling model and can be extended to our sampling model, which is omitted here for simplicity.
\end{proof}
\begin{lemma}[{\cite[Lemma 5]{cai2018spectral}}]\label{lemma:uv}
For all $\bm{u}\in \mathbb{R}^{n_1}$, $\bm{v}\in \mathbb{R}^{n_2}$, it holds that 
\begin{equation}
\frac{1}{p}\sum_{a_k\in \Omega}\sum_{(i,j)\in \Omega_{a_{k}}}u_i v_j\le \lV\bm{u}\rV_1 \lV\bm{v}\rV_1+\sqrt{\frac{24n\log n}{p}}\lV\bm{u}\rV_2 \lV\bm{v}\rV_2 \label{ineq:uv}
\end{equation}
with probability at least $1-2n^{-2}$, provided $m\ge 3 \log n$.
\end{lemma}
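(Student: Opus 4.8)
The plan is to view the left-hand side as a sum of i.i.d.\ bounded random variables indexed by the sampled antidiagonals, identify its expectation with the $\|\bm u\|_1\|\bm v\|_1$ term, and obtain the $\sqrt{24n\log n/p}\,\|\bm u\|_2\|\bm v\|_2$ slack from a one-sided Bernstein inequality. First I would reduce to the case $\bm u\ge 0$ and $\bm v\ge 0$ entrywise: replacing $u_i,v_j$ by $|u_i|,|v_j|$ only enlarges the left-hand side while leaving the right-hand side unchanged (it depends on $\bm u,\bm v$ only through their $\ell_1$- and $\ell_2$-norms). For $a\in[n]$ I set $w_a:=\sum_{(i,j)\in\Omega_a}u_iv_j\ge 0$. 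Since each antidiagonal $\Omega_a$ involves each row index and each column index at most once, Cauchy--Schwarz gives $w_a\le \|\bm u\|_2\|\bm v\|_2=:M$; and since $\{\Omega_a\}_{a=1}^n$ partitions $[n_1]\times[n_2]$, one has the exact identity $\sum_{a=1}^n w_a=(\sum_i u_i)(\sum_j v_j)\le\|\bm u\|_1\|\bm v\|_1$.

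Next, under \Cref{amp:Replacement} we have $\Omega=\{a_k\}_{k=1}^m$ with $a_1,\dots,a_m$ i.i.d.\ uniform on $[n]$, so the left-hand side is exactly $\tfrac1p\sum_{k=1}^m w_{a_k}$, with $\mathbb{E}[w_{a_k}]=\tfrac1n\sum_{a=1}^n w_a$. Using $m=pn$, the expectation of $\tfrac1p\sum_{k=1}^m w_{a_k}$ equals $\sum_{a=1}^n w_a\le\|\bm u\|_1\|\bm v\|_1$, so it suffices to show that with probability at least $1-2n^{-2}$, $\sum_{k=1}^m\big(w_{a_k}-\mathbb{E}w_{a_k}\big)\le\sqrt{24pn\log n}\,M$, since dividing through by $p$ converts the right side into $\sqrt{24n\log n/p}\,M$.

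For that I would apply one-sided Bernstein to the independent variables $w_{a_k}\in[0,M]$, with variance proxy $\mathrm{Var}(w_{a_k})\le\mathbb{E}[w_{a_k}^2]\le M^2$, hence $\sigma^2:=\sum_{k=1}^m\mathrm{Var}(w_{a_k})\le mM^2$. With $t=\sqrt{24m\log n}\,M$ this yields $\Pr\big[\sum_{k=1}^m(w_{a_k}-\mathbb{E}w_{a_k})>t\big]\le\exp\!\big(-\tfrac{t^2/2}{\sigma^2+Mt/3}\big)$. The assumption $m\ge 3\log n$ gives $Mt/3=\sqrt{24m\log n}\,M^2/3\le mM^2$ (this is exactly where the constant $24$ is chosen), so the denominator is at most $2mM^2$ while $t^2/2=12m\log n\,M^2$; the exponent is thus at most $-6\log n$, and the tail probability is at most $n^{-6}\le 2n^{-2}$, which completes the argument.

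The calculations here are routine; there is no serious obstacle. The only points that need care are (a) the reduction to nonnegative $\bm u,\bm v$, which is what lets us pin $\sum_a w_a$ to $(\mathbf 1^\top\bm u)(\mathbf 1^\top\bm v)$ and keeps the summands nonnegative so the sharp one-sided Bernstein bound applies cleanly, and (b) confirming that the crude boundedness constant $M=\|\bm u\|_2\|\bm v\|_2$ from the per-antidiagonal Cauchy--Schwarz estimate, together with the trivial variance proxy $mM^2$, already beats $2n^{-2}$ under the weak requirement $m\ge 3\log n$ (which it does, with room to spare). The with-replacement sampling of \Cref{amp:Replacement} is handled automatically by the i.i.d.\ formulation and causes no extra difficulty.
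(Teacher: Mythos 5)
Your calculation is internally correct, but it proves a weaker statement than the one the paper needs. What you establish is: for each \emph{fixed} pair $(\bm u,\bm v)$, the inequality holds with probability at least $1-n^{-6}$. The lemma, however, is used as a \emph{uniform} statement: \eqref{ineq:uv} is treated as a single event, of probability at least $1-2n^{-2}$, on which the bound holds simultaneously for all $\bm u\in\RR^{n_1}$, $\bm v\in\RR^{n_2}$ (see \Cref{projectionerr1} and \Cref{err:sUV}, which are phrased as holding ``under event~\eqref{ineq:uv}'' for all $\BL_{\BA},\BR_{\BA}$, and are later invoked with matrices that depend on $\Omega$, e.g.\ the matrix coming from the variational representation of $\lV\bm{T}_2\rV_\fro$, which depends on $\bs_{k+1}$). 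Contrast this with \Cref{projectionerr2}, which the paper explicitly restricts to matrices ``independent of $\Omega$'' and proves exactly by the scalar Bernstein argument you propose. Your pointwise bound cannot be upgraded to the uniform one by a union bound: a net over the two unit spheres has cardinality $e^{c(n_1+n_2)}$, and Bernstein with your variance proxy $mM^2$ at confidence level $e^{-cn}$ only permits a deviation of order $\frac{1}{p}\big(M\sqrt{mn}+Mn\big)\approx \frac{n}{\sqrt{p}}M$, larger than $\sqrt{24n\log n/p}\,M$ by roughly $\sqrt{n/\log n}$. So the quantifier is a genuine gap, not a technicality. (Note the present paper does not reprove the lemma; it imports it from \cite[Lemma~5]{cai2018spectral}.)

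The standard route to the uniform statement — and the one behind the cited lemma — is to phrase it as an operator-norm bound. After your (correct) reduction to nonnegative $\bm u,\bm v$, write the left-hand side as $\bm u^\top\big(\frac1p\sum_{k=1}^m\bm H_{a_k}\big)\bm v$, where $\bm H_a\in\{0,1\}^{n_1\times n_2}$ is the indicator matrix of the $a$-th antidiagonal; each $\bm H_a$ is a partial permutation matrix, so $\lV\bm H_a\rV_2=1$, and $\mathbb{E}\big[\frac1p\sum_{k}\bm H_{a_k}\big]=\bm 1\bm 1^\top$ (the all-ones matrix), with $\bm u^\top\bm 1\bm 1^\top\bm v\le\lV\bm u\rV_1\lV\bm v\rV_1$. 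It therefore suffices to show $\lV\frac1p\sum_{k}\bm H_{a_k}-\bm 1\bm 1^\top\rV_2\le\sqrt{24n\log n/p}$, which follows from the \emph{matrix} Bernstein inequality applied to $\bm X_k=\frac1p\big(\bm H_{a_k}-\frac1n\bm1\bm1^\top\big)$: one has $\lV\bm X_k\rV_2\le 2/p$, the matrix variance is at most $n/p$ (each row index lies on exactly $n_2$ antidiagonals and each column index on $n_1$), and the dimensional factor $n_1+n_2\le 2n$ is precisely where the probability $1-2n^{-2}$ and the requirement $m\ge 3\log n$ come from; the resulting bound holds for all $(\bm u,\bm v)$ at once. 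Your reduction to nonnegative vectors, the identity $\sum_a w_a=\lV\bm u\rV_1\lV\bm v\rV_1$, and the per-antidiagonal Cauchy--Schwarz bound $w_a\le\lV\bm u\rV_2\lV\bm v\rV_2$ are all correct and reappear in that argument; only the scalar Bernstein step needs to be replaced by its matrix counterpart to get the quantifier right.
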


\begin{lemma}\label{projectionerr1}
If $m\ge  \log n$, then under event~\eqref{ineq:uv} it holds that 
\begin{align*}
&~\frac{1}{p}\lv\l\Pi_\Omega\G^*  \left(\BL_{\BA}\BR_{\BA}^*\right),\G^*(\BL_{\BA}\BR_{\BA}^*) \r\rv \cr
\le&~\lV\BL_{\BA}\rV_{\fro}^2\lV\BR_{\BA}\rV_{\fro}^2+\sqrt{\frac{24 n\log n}{p}} \lV\BL_{\BA}\rV_{2,\infty}
\lV\BR_{\BA}\rV_{2,\infty}\lV\BL_{\BA}\rV_\fro\lV\BR_{\BA}\rV_\fro,
\end{align*}
for all $\BL_{\BA}\in\mathbb{C}^{n_1\times r}$ and $\BR_{\BA}\in\mathbb{C}^{n_2\times r}$.
\end{lemma}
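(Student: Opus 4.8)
The plan is to reduce the quantity to the bilinear sum controlled by \Cref{lemma:uv} via two applications of Cauchy--Schwarz. Write $\bm{w}:=\G^{*}(\BL_{\BA}\BR_{\BA}^{*})\in\C^{n}$. By the definition of $\G^{*}$ and the antidiagonal index sets in \eqref{def:Omegaa}, its $a$-th entry is $w_a=\varsigma_a^{-1/2}\sum_{(i,j)\in\Omega_a}(\BL_{\BA})_{i,:}(\BR_{\BA})_{j,:}^{*}$. Since $\Pi_\Omega$ records only the sampled coordinates with multiplicity, $\l\Pi_\Omega\bm{w},\bm{w}\r=\sum_{a_k\in\Omega}\lv w_{a_k}\rv^{2}$, which is real and nonnegative, so the absolute value on the left-hand side is vacuous and it suffices to bound $\tfrac1p\sum_{a_k\in\Omega}\lv w_{a_k}\rv^{2}$.

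First I would bound each term $\lv w_{a_k}\rv^{2}$. Because $\Omega_{a_k}$ contains exactly $\varsigma_{a_k}$ pairs, Cauchy--Schwarz on the inner sum gives $\bigl\lvert\sum_{(i,j)\in\Omega_{a_k}}(\BL_{\BA})_{i,:}(\BR_{\BA})_{j,:}^{*}\bigr\rvert^{2}\le\varsigma_{a_k}\sum_{(i,j)\in\Omega_{a_k}}\lv(\BL_{\BA})_{i,:}(\BR_{\BA})_{j,:}^{*}\rv^{2}$, and the factor $\varsigma_{a_k}$ cancels the weight $\varsigma_{a_k}^{-1}$ in $w_{a_k}$, leaving $\lv w_{a_k}\rv^{2}\le\sum_{(i,j)\in\Omega_{a_k}}\lv(\BL_{\BA})_{i,:}(\BR_{\BA})_{j,:}^{*}\rv^{2}$. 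A second Cauchy--Schwarz on each row inner product yields $\lv(\BL_{\BA})_{i,:}(\BR_{\BA})_{j,:}^{*}\rv^{2}\le\|(\BL_{\BA})_{i,:}\|_2^{2}\|(\BR_{\BA})_{j,:}\|_2^{2}=:u_i v_j$, where $\bm{u}\in\RR^{n_1}$ and $\bm{v}\in\RR^{n_2}$ are the nonnegative vectors of squared row norms. Summing over $a_k\in\Omega$ and dividing by $p$ gives $\tfrac1p\l\Pi_\Omega\bm{w},\bm{w}\r\le\tfrac1p\sum_{a_k\in\Omega}\sum_{(i,j)\in\Omega_{a_k}}u_i v_j$.

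Now I invoke the event~\eqref{ineq:uv} of \Cref{lemma:uv} with these $\bm{u},\bm{v}$, which bounds the right-hand side by $\|\bm{u}\|_1\|\bm{v}\|_1+\sqrt{24n\log n/p}\,\|\bm{u}\|_2\|\bm{v}\|_2$. Finally I identify $\|\bm{u}\|_1=\sum_i\|(\BL_{\BA})_{i,:}\|_2^2=\lV\BL_{\BA}\rV_\fro^2$ and $\|\bm{u}\|_2^2=\sum_i\|(\BL_{\BA})_{i,:}\|_2^4\le\lV\BL_{\BA}\rV_{2,\infty}^2\sum_i\|(\BL_{\BA})_{i,:}\|_2^2=\lV\BL_{\BA}\rV_{2,\infty}^2\lV\BL_{\BA}\rV_\fro^2$, and symmetrically $\|\bm{v}\|_1=\lV\BR_{\BA}\rV_\fro^2$, $\|\bm{v}\|_2\le\lV\BR_{\BA}\rV_{2,\infty}\lV\BR_{\BA}\rV_\fro$; substituting gives the claimed inequality. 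There is no real obstacle here: the argument is elementary, and the only point needing care is the bookkeeping of the antidiagonal weights $\varsigma_{a}$ --- one must apply the first Cauchy--Schwarz with exactly $\varsigma_{a_k}$ terms so that the resulting factor cancels the $\varsigma_{a_k}^{-1}$ from $\G^{*}$, after which the expression is precisely of the bilinear form handled by \Cref{lemma:uv}. The hypothesis $m\ge\log n$ plays no role beyond matching the regime in which event~\eqref{ineq:uv} is stated.
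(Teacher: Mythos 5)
Your proposal is correct and follows essentially the same route as the paper's proof: two applications of Cauchy--Schwarz (first over each antidiagonal, where the factor $\varsigma_{a_k}$ cancels the weight $\varsigma_{a_k}^{-1}$ from $\G^{*}$, then on the row inner products), followed by an application of \Cref{lemma:uv} to $u_i=\|(\BL_{\BA})_{i,:}\|_2^2$, $v_j=\|(\BR_{\BA})_{j,:}\|_2^2$ and the identifications $\|\bm{u}\|_1=\lV\BL_{\BA}\rV_\fro^2$, $\|\bm{u}\|_2\le\lV\BL_{\BA}\rV_{2,\infty}\lV\BL_{\BA}\rV_\fro$ (and symmetrically for $\bm{v}$). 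No gaps; this matches the paper's argument step for step.
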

\begin{proof}
For $\Omega=\{a_k\}_{k=1}^m$, we have
\begin{align*}
%\begin{split}
&~\frac{1}{p}\lv\l\Pi_\Omega\G^*  \left(\BL_{\BA}\BR_{\BA}^*\right),\G^*(\BL_{\BA}\BR_{\BA}^*)\r\rv\cr
=&~\frac{1}{p}\sum_{k=1}^m \bigg| \sum_{(i,j)\in\Omega_{a_k}}\frac{1}{\sqrt{\varsigma_{a_k}}}\l \BL_{\BA}\BR_{\BA}^*,\bm{e}_i \bm{e}_j^\top\r \bigg|^2\\
\le&~\frac{1}{p}\sum_{k=1}^m \sum_{(i,j)\in\Omega_{a_k}}\bigg| \l \BL_{\BA}\BR_{\BA}^*,\bm{e}_i \bm{e}_j^\top\r \bigg|^2\\
\le&~\frac{1}{p}\sum_{k=1}^m\sum_{(i,j)\in\Omega_{a_k}}\lV \BL_{\BA}(i,:)\rV_2^2 \lV \BR_{\BA}(j,:)\rV_2^2\\
 \le&~ \lV\BL_{\BA}\rV_{\fro}^2\lV\BR_{\BA}\rV_{\fro}^2+\sqrt{\frac{24n\log n}{p}}\sqrt{\sum_i\lV\BL_{\BA}(i,:)\rV_2^4 }\sqrt{\sum_j\lV\BR_{\BA}(j,:)\rV_2^4}\\
 \le&~ \lV\BL_{\BA}\rV_{\fro}^2\lV\BR_{\BA}\rV_{\fro}^2+\sqrt{\frac{24n\log n}{p}}\lV\BL_{\BA}\rV_{2,\infty}
\lV\BR_{\BA}\rV_{2,\infty}\lV\BL_{\BA}\rV_\fro\lV\BR_{\BA}\rV_\fro.
%\end{split}
\end{align*}
where the first and second inequalities follow from the Cauchy-Schwarz inequality, and the third inequality follows from \Cref{lemma:uv}.
\end{proof}

\begin{lemma}\label{RIP2}
 For all matrices $\BA, \BB\in T$,  under event \eqref{eq:projectionRIP1}, it holds that 
 \begin{align*}
 \lv\l \G (\frac{1}{p}\Pi_{\Omega}-\I)\G^*\bm{A},\bm{B}\r\rv \le \varepsilon_0\lV\BA\rV_\fro \lV\BB\rV_\fro,
 \end{align*}
 and
\begin{align*}
\lv  \l\G\Pi_{\Omega}\G^*\BA,\BA\r\rv
\le p(1+\varepsilon_0) \lV\BA\rV_\fro^2.
\end{align*}
\end{lemma}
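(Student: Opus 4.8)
The plan is to derive both inequalities from the single restricted-isometry-type bound \eqref{eq:projectionRIP1} in \Cref{RIP1}, namely $\lV \frac{1}{p}\P_T\G\Pi_{\Omega}\G^{*}\P_T-\P_T \G\G^{*}\P_T \rV\le \varepsilon_0$, together with the identity $\G^{*}\G=\I$ recorded after \eqref{eq:Hankelnotation}. The key point is that every vector $\G^*\BA$ for $\BA\in T$ is automatically in the range of $\P_T$ when we view things correctly: since $\BA\in T$ we have $\P_T\BA=\BA$, so $\G(\frac1p\Pi_\Omega-\I)\G^*\BA = \P_T\G(\frac1p\Pi_\Omega-\I)\G^*\P_T\BA$ is exactly the operator appearing in \eqref{eq:projectionRIP1} applied to $\BA$ (after noting $\P_T\G\G^*\P_T\BA=\P_T\BA=\BA$ because $\G^*\G=\I$ and $\BA\in T$... more precisely $\G\G^*$ acts on the matrix side, so one should write $\G\G^*\BA$ and use that $\P_T$ fixes $\BA$).

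For the first inequality I would write
\begin{align*}
\lv\l \G (\tfrac{1}{p}\Pi_{\Omega}-\I)\G^*\BA,\BB\r\rv
&= \lv\l \P_T\big(\tfrac{1}{p}\G\Pi_{\Omega}\G^*-\G\G^*\big)\P_T\BA,\BB\r\rv \\
&\le \big\lV \tfrac{1}{p}\P_T\G\Pi_{\Omega}\G^{*}\P_T-\P_T \G\G^{*}\P_T \big\rV \,\lV\BA\rV_\fro\,\lV\BB\rV_\fro
\le \varepsilon_0 \lV\BA\rV_\fro \lV\BB\rV_\fro,
\end{align*}
using $\P_T\BA=\BA$, $\P_T\BB=\BB$, self-adjointness of $\P_T$, and the definition of operator norm. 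For the second inequality, set $\BB=\BA$ and apply the triangle inequality in the form $\l\G\Pi_\Omega\G^*\BA,\BA\r = p\,\l\G(\tfrac1p\Pi_\Omega-\I)\G^*\BA,\BA\r + p\,\l\G\G^*\BA,\BA\r$; the first term is bounded by $p\varepsilon_0\lV\BA\rV_\fro^2$ by what was just shown, and the second equals $p\lV\G^*\BA\rV_2^2 = p\lV\P_T\G^*\G\P_T... $ — here I would instead just note $\l\G\G^*\BA,\BA\r = \lV\G^*\BA\rV_2^2 \le \lV\G\rV^2\lV\BA\rV_\fro^2$, but since $\G$ is an isometry on its range we actually get $\l\G\G^*\BA,\BA\r\le \lV\BA\rV_\fro^2$ (as $\G\G^*$ is an orthogonal projection, being self-adjoint idempotent since $\G^*\G=\I$). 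Combining gives $\lv\l\G\Pi_{\Omega}\G^*\BA,\BA\r\rv\le p(1+\varepsilon_0)\lV\BA\rV_\fro^2$.

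The only mild subtlety — and the step I would be most careful about — is bookkeeping the domains: \eqref{eq:projectionRIP1} is stated as an operator acting on the matrix space $\C^{n_1\times n_2}$, and one must confirm that $\G^*$ maps $T$ into the space on which $\P_T\G\Pi_\Omega\G^*\P_T$ is the relevant object, i.e.\ that conjugating by $\P_T$ on both sides is harmless because $\BA,\BB\in T$. Once that is cleanly stated the rest is a two-line application of Cauchy–Schwarz and the triangle inequality; there is no probabilistic content beyond invoking event \eqref{eq:projectionRIP1}, which is assumed to hold in the hypothesis.
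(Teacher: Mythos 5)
Your proposal is correct and follows essentially the same route as the paper: insert $\P_T$ on both sides of the inner product using $\P_T\BA=\BA$, $\P_T\BB=\BB$ and self-adjointness of $\P_T$, invoke the operator-norm bound of event \eqref{eq:projectionRIP1}, and then obtain the second inequality from the decomposition $\l\G\Pi_\Omega\G^*\BA,\BA\r = p\l\G(\tfrac1p\Pi_\Omega-\I)\G^*\BA,\BA\r + p\l\G\G^*\BA,\BA\r$ together with $\l\G\G^*\BA,\BA\r=\lV\G^*\BA\rV_2^2\le\lV\BA\rV_\fro^2$. One caution: your parenthetical remark suggesting $\P_T\G\G^*\P_T\BA=\P_T\BA=\BA$ is false in general (elements of $T$ need not be Hankel), but it is harmless since your actual chain only uses the inner-product identity, exactly as the paper does.
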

\begin{proof}
 For any matrices $\BA, \BB\in T$, we have $\P_T \BA=\BA$ and $\P_T \BB=\BB$. Then, under event \eqref{eq:projectionRIP1}, it holds that 
 \begin{align*}
 \lv\l \G (\frac{1}{p}\Pi_{\Omega}-\I)\G^*\BA,\BB\r\rv = \lv\l \P_T\G (\frac{1}{p}\Pi_{\Omega}-\I)\G^*\P_T\BA,\BB\r\rv \le \varepsilon_0\lV\BA\rV_\fro \lV\BB\rV_\fro,
 \end{align*}
Thus, together with the inequality $\lV\G^*\BA\rV_2^2=\l\G\G^*\BA,\BA\r\le \lV\BA\rV_\fro^2$  we have
\begin{equation*}
\begin{aligned}
\lv  \l\G\Pi_{\Omega}\G^*\BA,\BA\r\rv
\le p\varepsilon_0 \lV\BA\rV_\fro^2+p\lV\G^*\BA\rV_2^2
\le p(1+\varepsilon_0) \lV\BA\rV_\fro^2.
\end{aligned}
\end{equation*}
\end{proof}

\begin{lemma}\label{error:Gs}
For any $\bz\in \mathbb{C}^{n}$ and $\bs =\W^{-1}\Gamma_{\gamma\alpha p}(\W\Pi_{\Omega}(\bf-\bz))$ with $\gamma\ge 1$, we have
\begin{equation}\label{s-s}
 \big\|\W\Pi_{\Omega}\bss-\W\bm{s}\big\|_{\infty}\le 2\lV \W\Pi_{\Omega}(\bzs-\bz)\rV_{\infty}.
 \end{equation}
\end{lemma}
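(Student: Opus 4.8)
The plan is to unwind the definition of the sparsification operator $\Gamma_{\gamma\alpha p}$ and compare, entrywise, the true observed outlier $\W\Pi_\Omega\bss$ against the detected one $\W\bs$. Write $\be := \W\Pi_\Omega(\bf - \bz)$ for the observed residual and note that, since $\bf = \bzs + \bss$ on the sampled coordinates (i.e.\ $\Pi_\Omega\bf = \Pi_\Omega\bzs + \Pi_\Omega\bss$), we have $\be = \W\Pi_\Omega(\bzs - \bz) + \W\Pi_\Omega\bss$; abbreviate the first term as $\bm{g} := \W\Pi_\Omega(\bzs-\bz)$, so $\be = \bm{g} + \W\Pi_\Omega\bss$, and $\W\bs = \Gamma_{\gamma\alpha p}(\be)$. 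Our goal is the entrywise bound $\|\W\Pi_\Omega\bss - \W\bs\|_\infty \le 2\|\bm{g}\|_\infty$.

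First I would fix a coordinate $t$ and split on whether $t$ survives the sparsification, i.e.\ whether $(\W\bs)_t = e_t$ or $(\W\bs)_t = 0$. \textbf{Case 1:} $t$ is kept, so $(\W\bs)_t = e_t = g_t + (\W\Pi_\Omega\bss)_t$, hence $|(\W\Pi_\Omega\bss)_t - (\W\bs)_t| = |g_t| \le \|\bm{g}\|_\infty \le 2\|\bm{g}\|_\infty$; this case is immediate and needs no sparsity counting. \textbf{Case 2:} $t$ is vanished, so $(\W\bs)_t = 0$ and we must show $|(\W\Pi_\Omega\bss)_t| \le 2\|\bm{g}\|_\infty$. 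If $(\W\Pi_\Omega\bss)_t = 0$ there is nothing to prove, so assume $t \in \supp(\Pi_\Omega\bss)$. The key combinatorial input is the sparsity assumption \Cref{amp:sparsity}: $\|\Pi_\Omega\bss\|_0 \le \alpha p n$, while $\Gamma_{\gamma\alpha p}$ retains the $\gamma\alpha p n$ largest-magnitude entries with $\gamma \ge 1$. Since $\gamma\alpha p n \ge \alpha p n \ge |\supp(\Pi_\Omega\bss)|$, the set of retained indices has cardinality at least as large as the support of $\Pi_\Omega\bss$; therefore there must exist an index $t'$ that is \emph{not} in $\supp(\Pi_\Omega\bss)$ but \emph{is} kept by $\Gamma_{\gamma\alpha p}$ (if every kept index lay inside $\supp(\Pi_\Omega\bss)$, the retained set would have size $\le |\supp(\Pi_\Omega\bss)| \le \alpha pn$, contradicting the fact that $\Gamma_{\gamma\alpha p}$ keeps strictly more unless there are fewer than $\gamma \alpha pn$ nonzeros overall — in the latter degenerate case $t$ would not be vanished, contradicting Case 2). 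Wait — I should handle the degenerate possibility cleanly: if $\be$ has fewer than $\gamma\alpha pn$ nonzero entries then $\Gamma_{\gamma\alpha p}$ keeps all of them, so a vanished $t$ has $e_t = 0$, giving $(\W\Pi_\Omega\bss)_t = -g_t$ and the bound follows with room to spare.

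In the non-degenerate case, the existence of the kept index $t' \notin \supp(\Pi_\Omega\bss)$ gives $|e_{t'}| = |g_{t'}| \le \|\bm{g}\|_\infty$. Because $t$ was vanished while $t'$ was kept, the selection rule forces $|e_t| \le |e_{t'}| \le \|\bm{g}\|_\infty$ (entries kept are the largest; a vanished entry cannot exceed a kept one). Then by the triangle inequality $|(\W\Pi_\Omega\bss)_t| = |e_t - g_t| \le |e_t| + |g_t| \le \|\bm{g}\|_\infty + \|\bm{g}\|_\infty = 2\|\bm{g}\|_\infty$, which is exactly what we need. Taking the maximum over $t$ across both cases yields $\|\W\Pi_\Omega\bss - \W\bs\|_\infty \le 2\|\W\Pi_\Omega(\bzs-\bz)\|_\infty$, i.e.\ \eqref{s-s}.

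The main obstacle — really the only subtlety — is the bookkeeping in Case 2: one must argue carefully that a vanished support-coordinate can always be "paired" with a kept off-support coordinate of comparable magnitude, which rests on the inequality $\gamma\alpha pn \ge \|\Pi_\Omega\bss\|_0$ (valid since $\gamma\ge 1$ and \Cref{amp:sparsity}) together with a clean treatment of ties and of the degenerate low-density case. Everything else is the triangle inequality. I would also remark that this lemma is deterministic given the sparsity assumption; no probabilistic event is invoked, and the factor $2$ is the standard loss incurred by hard-thresholding-based outlier detection.
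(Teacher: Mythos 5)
Your proof is correct and follows essentially the same route as the paper's: split on whether an index survives $\Gamma_{\gamma\alpha p}$, use exact cancellation on kept indices, invoke the count $\|\Pi_{\Omega}\bss\|_0\le \alpha pn\le \gamma\alpha pn$ to conclude that a vanished support index has residual magnitude at most $\lV \W\Pi_{\Omega}(\bzs-\bz)\rV_{\infty}$, and finish with the triangle inequality. The only cosmetic difference is that the paper states the counting as ``at most $\alpha pn$ entries can exceed the threshold, so no discarded entry exceeds it,'' while you exhibit a kept off-support index $t'$ dominating $e_t$ --- the same pigeonhole argument.
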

\begin{proof}
Recall $\bf = \bzs+ \bss$. By the definition of $\bs$, we have $\W\bs =\Gamma_{\gamma\alpha p}(\W\Pi_{\Omega}(\bzs+ \bss-\bz))$, thus 
\begin{equation*}
\begin{aligned}
\left[\W\Pi_{\Omega}\bss-\W\bs\right]_i =-\left[\W\Pi_{\Omega}\left(\bzs-\bz\right)\right]_i, \quad \text{for}~ i\in \supp\left(\bs\right).
\end{aligned}
\end{equation*}
Given that $\lV \Pi_{\Omega}\bss\rV_0\le\alpha pn =\alpha m$, there can be at most $\alpha m$ elements in $\W\Pi_{\Omega}\left(\bf-\bz\right)$ satisfying  $|[\W\Pi_{\Omega}\left(\bf-\bz\right)]_i|> \lV \W\Pi_{\Omega}\left(\bzs-\bz\right)\rV_{\infty}$. Then, for $\gamma\ge 1$ we have $$|[\W\Pi_{\Omega}\left(\bf-\bz\right)]_i|\le \lV \W\Pi_{\Omega}\left(\bzs-\bz\right)\rV_{\infty}, \quad \text{for}~ i\in \supp(\bss)\setminus\supp\left(\bs\right).$$
 Thus, for $i\in\supp(\bss)\setminus\supp\left(\bs\right)$ we have
\begin{align*}
\left[\W\Pi_{\Omega}\bss-\W\bs\right]_i
&=[\W\Pi_{\Omega}\bss]_i=\left[\W\Pi_{\Omega}(\bf-\bz)-\W\Pi_{\Omega}(\bzs-\bz)\right]_i\cr
&\le |[\W\Pi_{\Omega}(\bf-\bz)]_i|+|[\W\Pi_{\Omega}(\bzs-\bz)]_i|\cr
&\le 2\lV[\W\Pi_{\Omega}(\bzs-\bz)\rV_{\infty}.
\end{align*}
Combing all the pieces we have $\big\|\W\Pi_{\Omega}\bss-\W\bm{s}\big\|_{\infty}\le 2\lV \W\Pi_{\Omega}(\bzs-\bz)\rV_{\infty}$.
\end{proof}
\begin{lemma}\label{PC:Non-expansiveness}(Non-expansiveness of $\P_{\Ca}$) For all $(\widetilde{\BL},\widetilde{\BR})\in \mathcal{E}(\varepsilon_0\sigmas_{r},C)$ with $C\ge  (1+\varepsilon_0) \sqrt{\mu c_s r/n}\sigmas_1$, $\varepsilon_0\in (0,1)$, and $\P_{\Ca}\begin{bmatrix}  \widetilde{\BL}\\ \widetilde{\BR}\end{bmatrix}=\begin{bmatrix}  \BL\\ \BR\end{bmatrix}$, we have
$$\mathrm{dist}\left(\BL,\BR;\BLs,\BRs\right)\le \mathrm{dist}\left(\widetilde{\BL},\widetilde{\BR};\BLs,\BRs\right)$$
and
$$\max\{\|\BL\BR^{*}\|_{2,\infty}, \|\BR\BL^{*}\|_{2,\infty}\}\le   C.$$

\end{lemma}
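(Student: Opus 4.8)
The plan is to split the statement into its two assertions and handle them separately, following the structure of the analogous result in \cite{tong2021accelerating} but adapting the argument to the Hankel-specific distance \eqref{def-error}.

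First I would verify the incoherence (boundedness) conclusion $\max\{\|\BL\BR^{*}\|_{2,\infty},\|\BR\BL^{*}\|_{2,\infty}\}\le C$, which is essentially definitional. By construction, the $i$-th row of $\BL$ equals $\min(1, C/\|\widetilde{\BL}_{i,:}(\widetilde{\BR}^*\widetilde{\BR})^{1/2}\|_2)\,\widetilde{\BL}_{i,:}$, and similarly for $\BR$. Writing $\BL\BR^* = \BL(\widetilde{\BR}^*\widetilde{\BR})^{1/2}(\widetilde{\BR}^*\widetilde{\BR})^{-1/2}\BR^*$ is not quite the right bookkeeping; instead I would observe that for each row $i$, $\|(\BL\BR^*)_{i,:}\|_2 = \|\BL_{i,:}\BR^*\|_2$, and bound this via $\|\BL_{i,:}(\widetilde{\BR}^*\widetilde{\BR})^{1/2}\|_2 \cdot \|(\widetilde{\BR}^*\widetilde{\BR})^{-1/2}\BR^*\|_2$. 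The scaling of $\BL_{i,:}$ forces $\|\BL_{i,:}(\widetilde{\BR}^*\widetilde{\BR})^{1/2}\|_2\le C$... wait---this needs care because the shrinkage uses $(\widetilde{\BR}^*\widetilde{\BR})^{1/2}$ but $\BR\ne\widetilde{\BR}$. The cleaner route, matching \cite{tong2021accelerating}, is to note that the two shrinkage operations are defined so that $\|\BL_{i,:}(\widetilde{\BR}^*\widetilde{\BR})^{1/2}\|_2 \le \min(1,\ldots)\|\widetilde{\BL}_{i,:}(\widetilde{\BR}^*\widetilde{\BR})^{1/2}\|_2 \le C$ and the column scaling of $\BR$ only shrinks rows, so $(\widetilde{\BR}^*\widetilde{\BR})^{1/2}(\widetilde{\BR}^{*}\widetilde{\BR})^{-1/2}$ bookkeeping combined with $\|\BR_{j,:}\|$ shrinking gives $\|(\BL\BR^*)_{i,:}\|_2\le \|\BL_{i,:}(\widetilde{\BR}^*\widetilde{\BR})^{1/2}\|_2\le C$; symmetrically for $\|\BR\BL^*\|_{2,\infty}$.

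Second, and this is the substantive part, I would prove the distance contraction $\mathrm{dist}(\BL,\BR;\BLs,\BRs)\le \mathrm{dist}(\widetilde{\BL},\widetilde{\BR};\BLs,\BRs)$. Let $\BQ$ be the optimal alignment for $(\widetilde{\BL},\widetilde{\BR})$, which exists since $\mathrm{dist}(\widetilde{\BL},\widetilde{\BR};\BLs,\BRs)\le\varepsilon_0\sigmas_r<\sigmas_r$ by \cite[Lemma~22]{tong2021accelerating}. Using the \emph{same} $\BQ$ as a (suboptimal) alignment for $(\BL,\BR)$, it suffices to show a row-by-row non-expansiveness: for each $i$,
\begin{equation*}
\big\|(\BL_{i,:}\BQ-(\BLs)_{i,:})\BSigmas^{1/2}\big\|_2 \le \big\|(\widetilde{\BL}_{i,:}\BQ-(\BLs)_{i,:})\BSigmas^{1/2}\big\|_2,
\end{equation*}
and symmetrically for $\BR$ with $\BQ^{-*}$. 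The key geometric fact is that $\BL_{i,:}$ is obtained from $\widetilde{\BL}_{i,:}$ by shrinkage toward the origin in the norm $\|\cdot(\widetilde{\BR}^*\widetilde{\BR})^{1/2}\|_2$, while the target $(\BLs)_{i,:}$ satisfies $\|(\BLs)_{i,:}(\widetilde{\BR}^*\widetilde{\BR})^{1/2}\|_2\le C$ --- this last inequality is exactly where the incoherence assumption $C\ge(1+\varepsilon_0)\sqrt{\mu c_s r/n}\,\sigmas_1$ and the closeness $\mathrm{dist}\le\varepsilon_0\sigmas_r$ enter, via $\|(\BLs)_{i,:}\BSigmas^{1/2}\|_2 = \|\BU_{\star,i,:}\BSigmas\|$-type bounds and $\|(\widetilde{\BR}^*\widetilde{\BR})^{1/2}\|\approx\|\BSigmas^{1/2}\|$. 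Since the target lies inside the shrinkage ball, projecting (shrinking) $\widetilde{\BL}_{i,:}$ onto that ball cannot increase its distance to the target --- this is the standard non-expansiveness of Euclidean projection onto a convex set (a ball). But the norm in which we shrink, $\|\cdot(\widetilde{\BR}^*\widetilde{\BR})^{1/2}\|_2$, differs from the norm $\|\cdot\BSigmas^{1/2}\|_2$ in which we measure; so I would first pass to the $(\widetilde{\BR}^*\widetilde{\BR})^{1/2}$-weighted norm, apply non-expansiveness there, then relate it back to the $\BSigmas$-weighted distance, absorbing the mismatch through the spectral comparison $(1-\varepsilon_0)\BSigmas\preceq\widetilde{\BR}^*\widetilde{\BR}\preceq(1+\varepsilon_0)\BSigmas$ that follows from $\mathrm{dist}(\widetilde{\BL},\widetilde{\BR};\BLs,\BRs)\le\varepsilon_0\sigmas_r$.

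The main obstacle I anticipate is precisely this norm-mismatch bookkeeping: the shrinkage in \Cref{algo:HankelscaledGD} is performed in the $\widetilde{\BR}^*\widetilde{\BR}$- (resp.\ $\widetilde{\BL}^*\widetilde{\BL}$-) weighted metric, whereas the distance \eqref{def-error} is measured in the fixed $\BSigmas$ metric, and there is a coupling between the $\BL$-update and the $\BR$-update through these cross Gram matrices. Handling this cleanly likely requires the observation, as in \cite{tong2021accelerating}, that $\BL\BR^*$ and $\widetilde{\BL}\widetilde{\BR}^*$ have the \emph{same} column/row spaces (the shrinkage only rescales rows), so one can argue in a basis where $\widetilde{\BR}^*\widetilde{\BR}$ is diagonal and the comparison becomes entrywise; I would set up that reduction first, then the non-expansiveness of scalar shrinkage $t\mapsto\min(1,C/|t|)t$ toward any point of magnitude $\le C$ finishes each coordinate. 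The $\P_\Ca$ output also needs to still admit a valid alignment (so that $\mathrm{dist}$ is attained), which is inherited from the already-established bound $\mathrm{dist}(\BL,\BR;\BLs,\BRs)\le\mathrm{dist}(\widetilde{\BL},\widetilde{\BR};\BLs,\BRs)\le\varepsilon_0\sigmas_r<\sigmas_r$, closing the argument.
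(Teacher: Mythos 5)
Your first part (the bound $\max\{\|\BL\BR^{*}\|_{2,\infty},\|\BR\BL^{*}\|_{2,\infty}\}\le C$) is fine and is exactly the paper's argument: $\|\BL_{i,:}(\widetilde{\BR}^*\widetilde{\BR})^{1/2}\|_2\le C$ by construction, and $\|\BR(\widetilde{\BR}^*\widetilde{\BR})^{-1/2}\|_2\le 1$ because $\BR$ is a row-wise shrinkage of $\widetilde{\BR}$.

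The distance part, however, has a genuine gap in the mechanism you propose for the norm mismatch. You plan to apply ball-projection non-expansiveness in the $(\widetilde{\BR}^*\widetilde{\BR})^{1/2}$-weighted metric and then convert back to the $\BSigmas^{1/2}$-metric via a two-sided spectral comparison $(1-\varepsilon_0)\BSigmas\preceq\widetilde{\BR}^*\widetilde{\BR}\preceq(1+\varepsilon_0)\BSigmas$. That conversion is intrinsically lossy: going into the weighted norm and back multiplies the row-wise bound by a factor of the form $\|\BSigmas^{-1/2}\widetilde{\BQ}^{-1}(\widetilde{\BR}^*\widetilde{\BR})^{1/2}\|_2\,\|(\widetilde{\BR}^*\widetilde{\BR})^{-1/2}\widetilde{\BQ}\BSigmas^{1/2}\|_2\ge 1$, so at best you obtain $\mathrm{dist}(\BL,\BR;\BLs,\BRs)\le\sqrt{(1+\varepsilon_0)/(1-\varepsilon_0)}\,\mathrm{dist}(\widetilde{\BL},\widetilde{\BR};\BLs,\BRs)$, not the exact non-expansiveness the lemma asserts (and which the convergence proofs then use verbatim). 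Two further inaccuracies compound this: the target that must lie inside the shrinkage ball is $[\BLs]_{i,:}\widetilde{\BQ}^{-1}$ measured in the $(\widetilde{\BR}^*\widetilde{\BR})^{1/2}$-metric, not $[\BLs]_{i,:}$ as you wrote, and the spectral comparison that \eqref{bound:deltaLk2} actually delivers is between $\widetilde{\BQ}^{-1}\widetilde{\BR}^*\widetilde{\BR}\widetilde{\BQ}^{-*}$ and $\BSigmas$ (i.e., after aligning), not between $\widetilde{\BR}^*\widetilde{\BR}$ and $\BSigmas$ directly. Your fallback of diagonalizing $\widetilde{\BR}^*\widetilde{\BR}$ does not repair this, since $\BSigmas^{1/2}$ composed with the alignment $\widetilde{\BQ}$ is not simultaneously diagonalized, so the coordinate-wise scalar argument does not apply in the metric in which $\mathrm{dist}$ is defined.

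The paper avoids the lossy round trip by exploiting that the shrinkage multiplier is a \emph{scalar} per row: with $\bm{u}=\widetilde{\BL}_{i,:}\widetilde{\BQ}\BSigmas^{1/2}$ and $\bm{v}=[\BLs]_{i,:}\BSigmas^{1/2}$, the inequality $\lV\min(1,\lambda)\bm{u}-\bm{v}\rV_2\le\lV\bm{u}-\bm{v}\rV_2$ (\cite[Claim 5]{tong2021accelerating}) holds directly in the $\BSigmas^{1/2}$-metric as soon as the scalar $\lambda=C/\|\widetilde{\BL}_{i,:}(\widetilde{\BR}^*\widetilde{\BR})^{1/2}\|_2$ satisfies $\lambda\ge\lV\bm{v}\rV_2/\lV\bm{u}\rV_2$. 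The weighted metric enters only through the one-sided bound $\|\widetilde{\BL}_{i,:}(\widetilde{\BR}^*\widetilde{\BR})^{1/2}\|_2\le(1+\varepsilon_0)\|\widetilde{\BL}_{i,:}\widetilde{\BQ}\BSigmas^{1/2}\|_2$ (from \eqref{bound:deltaLk2}), and the factor $(1+\varepsilon_0)$ is absorbed exactly by the hypothesis $C\ge(1+\varepsilon_0)\sqrt{\mu c_s r/n}\,\sigmas_1\ge(1+\varepsilon_0)\|[\BLs]_{i,:}\BSigmas^{1/2}\|_2$, so no loss is incurred and the stated inequality follows after summing over rows with $\widetilde{\BQ}$ as a feasible alignment. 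You would need to restructure your second step along these lines to actually prove the lemma as stated.
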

\begin{proof}
We first show the non-expansiveness of $\P_{\Ca}$. Let $\widetilde{\BQ}\in \mathrm{GL}(r,\C)$ be the best matrix that aligns $\left(\widetilde{\BL},\widetilde{\BR}\right)$ and $\left(\BLs,\BRs\right)$. Notice that for $i\in [n_1]$ we have
\begin{align*}
\big\|\widetilde{\BL}_{i,:}(\widetilde{\BR}^*\widetilde{\BR})^{\frac{1}{2}}\big\|_2 
&\le\big\|\widetilde{\BL}_{i,:}\widetilde{\BQ}\BSigmas^{1/2}\big\|_2 \big\|(\widetilde{\BR}^*\widetilde{\BR})^{\frac{1}{2}}\widetilde{\BQ}^{-*}\BSigmas^{-1/2}\big\|_{2}\cr
&\le \big\|\widetilde{\BL}_{i,:}\widetilde{\BQ}\BSigmas^{1/2}\big\|_2 \big\|(\widetilde{\BR}^*\widetilde{\BR})^{-1/2} \widetilde{\BR}^*\big\|_2 \big\|\widetilde{\BR}\widetilde{\BQ}^{-*}\BSigmas^{-1/2}\big\|_{2}\cr
&\le \big\|\widetilde{\BL}_{i,:}\widetilde{\BQ}\BSigmas^{1/2}\big\|_2  \left(\big\|(\widetilde{\BR}\widetilde{\BQ}^{-*}-\BRs)\BSigmas^{-1/2}\big\|_{2}+\big\|\BRs\BSigmas^{-1/2}\big\|_{2}\right)\cr
&\le (1+\varepsilon_0) \big\|\widetilde{\BL}_{i,:}\widetilde{\BQ}\BSigmas^{1/2}\big\|_2,
\end{align*}
where the third inequality follows from $\big\|(\widetilde{\BR}^*\widetilde{\BR})^{-1/2} \widetilde{\BR}^*\big\|_2=1$, and the last  inequality follows from
\begin{align}\label{bound:deltaLk2}
\lV (\widetilde{\BR}\widetilde{\BQ}^{-*}-\BRs)\BSigmas^{-1/2}  \rV_{2}
&\le\lV (\widetilde{\BR}\widetilde{\BQ}^{-*}-\BRs)\BSigmas^{1/2}   \BSigmas^{-1}\rV_{\fro}\cr
&\le \lV\BSigmas^{-1} \rV_2\mathrm{dist}\left(\widetilde{\BL},\widetilde{\BR};\BLs,\BRs\right)\le \varepsilon_0.
\end{align}
Thus, as long as $C\ge (1+\varepsilon_0) \sqrt{\mu c_s r/n}\sigmas_1$, $\forall i\in [n_1]$ we have
\begin{align}\label{ineq:Liup}
    \frac{C}{\big\|\widetilde{\BL}_{i,:}(\widetilde{\BR}^*\widetilde{\BR})^{\frac{1}{2}}\big\|_2}\ge  \frac{C}{(1+\varepsilon_0) \big\|\widetilde{\BL}_{i,:}\widetilde{\BQ}\BSigmas^{1/2}\big\|_2}\ge \frac{\big\|[\BLs]_{i,:}\BSigmas^{1/2}\big\|_2}{\big\|\widetilde{\BL}_{i,:}\widetilde{\BQ}\BSigmas^{1/2}\big\|_2},
\end{align}
where the last inequality follows from $\BLs\BSigmas^{1/2}=\BU_{\star}\BSigmas$ and 
\begin{align*}
\lV[\BLs]_{i,:}\BSigmas^{1/2}\rV_{2}\le\lV\BU_{\star}\rV_{2,\infty} \lV\BSigmas\rV_{2}\le \sqrt{\mu c_s r/n}\sigmas_1,\quad \forall i\in [n_1].
\end{align*}
For any vectors $\bm{u},\bm{v}\in\mathbb{R}^n$, it holds  that $\lV\min(1,\lambda)\bm{u}- \bm{v}\rV_2 \le \lV\bm{u}- \bm{v}\rV_2$ provided $\lambda\ge \frac{\lV\bm{v}\rV_2}{\lV\bm{u}\rV_2}$ (see \cite[Claim 5]{tong2021accelerating}). Recalling that $\BL_{i,:}=\min\left(1,\frac{C}{\big\|\widetilde{\BL}_{i,:}(\widetilde{\BR}^*\widetilde{\BR})^{\frac{1}{2}}\big\|_2}\right)\widetilde{\BL}_{i,:}$, thus by \Cref{ineq:Liup} we have
\begin{align*}
 \lV\left(\BL_{i,:}\widetilde{\BQ} -[\BLs]_{i,:}\right)\BSigmas^{1/2} \rV_2^2\le \lV\left(\widetilde{\BL}_{i,:}\widetilde{\BQ} -[\BLs]_{i,:}\right)\BSigmas^{1/2} \rV_2^2
\end{align*}
and similarly  $\lV\left(\BR_{i,:}\widetilde{\BQ}^{-*} -[\BRs]_{i,:}\right)\BSigmas^{1/2} \rV_2^2\le \lV\left(\widetilde{\BR}_{i,:}\widetilde{\BQ}^{-*} -[\BRs]_{i,:}\right)\BSigmas^{1/2} \rV_2^2$.
Then, by definition of the metric we have the local non-expansiveness of $\P_{\Ca}$, i.e.,
\begin{align*}
 \mathrm{dist}^2\left(\BL,\BR;\BLs,\BRs\right) 
 &\le \big\| \left(\BL\widetilde{\BQ}-\BLs\right)\BSigmas^{1/2}  \big\|_\fro^2+\big\| \left(\BR\widetilde{\BQ}^{-*}-\BRs\right)\BSigmas^{1/2} \big\|_\fro^2\cr
 &\le \mathrm{dist}^2\left(\widetilde{\BL},\widetilde{\BR};\BLs,\BRs\right).
 \end{align*}
% By \cite[Appendix E.1.1.]{tong2021accelerating}, we know $\P_{\Ca}\begin{bmatrix}  \widetilde{\BL}\\ \widetilde{\BR}\end{bmatrix}$ is solution to 
% \begin{align*}
%   &\minimize_{\BL\in\mathbb{C}^{n_1\times r},\BR\in\mathbb{C}^{n_2\times r}} \lV (\BL-\widetilde{\BL} )(\widetilde{\BR}^{*}\widetilde{\BR})^{1/2}\rV_{\fro}^2 +\lV (\BR-\widetilde{\BR} )(\widetilde{\BL}^{*}\widetilde{\BL})^{1/2}\rV_{\fro}^2\cr
%   &\text{s.t.}\quad \max\{\lV \BL(\widetilde{\BR}^{*}\widetilde{\BR})^{1/2}\rV_{2,\infty},\lV \BR(\widetilde{\BL}^{*}\widetilde{\BL})^{1/2}\rV_{2,\infty} \}\le C.
% \end{align*}
Noticing that $\|\BL_{i,:}(\widetilde{\BR}^{*}\widetilde{\BR})^{1/2}\|_{2}=\|\min(1,\frac{C}{\|\widetilde{\BL}_{i,:}(\widetilde{\BR}^*\widetilde{\BR})^{\frac{1}{2}}\|_2})\widetilde{\BL}_{i,:}(\widetilde{\BR}^{*}\widetilde{\BR})^{1/2}\|_{2}\le C$, $\forall i\in [n_1]$,
thus
 $\|\BL\BR^{*}\|_{2,\infty}\le \|\BL(\widetilde{\BR}^{*}\widetilde{\BR})^{1/2}\|_{2,\infty}\| \BR(\widetilde{\BR}^{*}\widetilde{\BR})^{-1/2}\|_2\le C$. Similarly, we have $\|\BR\BL^{*}\|_{2,\infty}\le C$.
\end{proof}

\subsection{Proof of \texorpdfstring{\Cref{prop:dist}}{Proposition~\protect\ref{prop:dist}}}\label{subsec:proofdist}
\begin{proof}
For $d_k \le \varepsilon_0 \sigmas_r$, following \Cref{bound:deltaLk2}, we have
%$\max\{\| \BDeltaLk\BSigmas^{1/2}  \|_{\fro},\| %\BDeltaRk\BSigmas^{1/2} \|_{\fro}\}\le %\varepsilon_0\sigmas_r$. Thus
%\begin{equation}$
% \lV \BDeltaLk\BSigmas^{-1/2}  \rV_{2}\le\lV %\BDeltaLk\BSigmas^{-1/2}  \rV_{\fro}\le %\lV\BSigmas^{-1} \rV_2\lV\BDeltaLk\BSigmas^{1/2} %\rV_{\fro}\le \varepsilon_0,
% \end{equation}
$\lV \BDeltaLk\BSigmas^{-1/2}  \rV_{2}\le \varepsilon_0,$ thus  $$\lV\BDeltaLk\BDeltaRk^* \rV_{\fro} = \lV\BDeltaLk\BSigmas^{-1/2} (\BDeltaRk\BSigmas^{1/2})^*  \rV_{\fro}\le \varepsilon_0 \lV\BDeltaRk\BSigmas^{1/2}  \rV_{\fro}.$$  
Similarly, we have $\lV \BDeltaRk\BSigmas^{-1/2} \rV_{2}\le \varepsilon_0$ and $\lV\BDeltaLk\BDeltaRk^* \rV_{\fro} \le \varepsilon_0 \lV\BDeltaLk\BSigmas^{1/2}  \rV_{\fro}$. Therefore, we have
\begin{align}\label{bound:deltaLk3}
\lV\BL_{k}\BR_{k}^*-\G\bzs\rV_{\fro}
=&~\lV\BDeltaLk\BRs^*+\BLs\BDeltaRk^*+\BDeltaLk\BDeltaRk^*\rV_{\fro}\cr
\le&~\lV\BDeltaLk\BSigmas^{1/2}\BV_{\star}^* \rV_{\fro}+\lV\BU_{\star}\BSigmas^{1/2}\BDeltaRk^* \rV_{\fro}+\lV\BDeltaLk\BDeltaRk^* \rV_{\fro} \cr
\le &~ (1+\frac{\varepsilon_0}{2}) \left( \lV \BDeltaLk\BSigmas^{1/2}\rV_\fro+\lV \BDeltaRk\BSigmas^{1/2}\rV_\fro\right).
\end{align}
Thus, by noticing $\left( \lV \BDeltaLk\BSigmas^{1/2}\rV_\fro+\lV \BDeltaRk\BSigmas^{1/2}\rV_\fro\right)^2\le 2 d_k^2$, we complete the proof.
\end{proof}

\subsection{Proof of \texorpdfstring{\Cref{thm:Initialization}}{Theorem~\protect\ref{thm:Initialization}} (guaranteed initialization)}\label{subsec:proofini}

\begin{proof}
This proof is under event \eqref{event:RIP3} and we bound $d_0$ in the following. Following the notation of \Cref{algo:HankelscaledGD}, we denote
\begin{equation}\label{def-M}
 \bm{M}_{0}:=\frac{1}{p}\G\left(\Pi_{\Omega}\bf-\bm{s}_{0}\right),\quad
 \text{and}~\bm{M}_{0}^{(r)}~\text{be the top-$r$ SVD of}~ \bm{M}_{0}.
\end{equation}
For some universal constant $\tilde{c}_0$, provided $m\ge 64\tilde{c}_0^2\varepsilon_0^{-2}\kappa^2 c_s\mu r^2\log n$, we have
\begin{align}\label{term-tri}
\lV \bm{M}_{0}-\G \bzs\rV_2
&\le \lV \bm{M}_{0}-\frac{1}{p}\G \Pi_\Omega \bzs\rV_2 +\lV \frac{1}{p}\G \Pi_\Omega \bzs-\G \bzs\rV_2\cr
&= \lV \frac{1}{p}\G\left(\Pi_{\Omega}\bss-\bm{s}_{0}\right)\rV_2 +\lV \left(\frac{1}{p}\G \Pi_\Omega -\G \right)\bzs\rV_2\cr
&\le 2\alpha n\big\| \G\big(\Pi_{\Omega}\bss-\bm{s}_{0}\big)\big\|_{\infty} +\tilde{c}_0\sqrt{(\mu c_s r\log n)/m}\sigmas_{1}\cr
%\LX{2\alpha  n\big\| \G\big(\Pi_{\Omega}\bss-\bm{s}_{0}\big)\big\|_{\infty} +\tilde{c}_0\sqrt{(\frac{1}{p}\mu c_s r\log n)/n}\sigmas_{1}\cr}
&\le  4\alpha n\big\| \G \bzs\big\|_{\infty}+0.125\varepsilon_0\sigmas_{r}r^{-\frac{1}{2}}\cr
&\le  4\alpha c_s\mu r \sigmas_{1}+0.125\varepsilon_0\sigmas_{r} r^{-\frac{1}{2}},
\end{align}
where the second step follows from
$\| \G\left(\Pi_{\Omega}\bf-\bm{s}_{0}\right)-\G \Pi_\Omega \bzs\|_2
=\| \G\left(\Pi_{\Omega}\bss-\bm{s}_{0}\right)\|_2$. The third line follows from {\cite[Lemma~6]{cai2021asap}} ( as $\G=\H\W$ and $\W(\Pi_{\Omega}\bss-\bm{s}_{0})$ is at most $2\alpha m$-sparse) and \Cref{projectionerror}. The fourth line follows from
\eqref{s-s} with $\bz=\bm{0}$, i.e., $\lV\G \left(\Pi_{\Omega}\bss-\bm{s}_{0}\right)\rV_{\infty}=\lV\W \left(\Pi_{\Omega}\bss-\bm{s}_{0}\right)\rV_{\infty}\le 2\lV \W\Pi_{\Omega}\bzs\rV_{\infty}= 2\lV \G \Pi_{\Omega}\bzs\rV_{\infty}$, because $\lV\G\bm{z}\rV_{\infty}$ and $\lV\W\bm{z}\rV_{\infty}$ with $\W\bm{z}=\bm{x}$ both equal the largest entry-wise magnitude in $\bm{x}$, see \eqref{eq:Hankelnotation}. And the last inequality follows from
$$\big\| \G \bzs\big\|_{\infty} \le \big\| \BU_\star \big\|_{2,\infty} \big\|\G \bzs\big\|_{2} \big\|\BV_\star\big\|_{2,\infty}\le \mu \frac{c_sr}{n}\lV \G \bzs\rV_{2}.$$
Therefore, we have
\begin{align}\label{ineq:M0rGz}
\lV \bm{M}_{0}^{(r)}-\G \bzs\rV_2 &\le \lV \bm{M}_{0}^{(r)}-\bm{M}_{0}\rV_2 +\lV \bm{M}_{0}-\G \bzs\rV_2 \cr
 &\le 2\lV \bm{M}_{0}-\G \bzs\rV_2
 ~\le 8\alpha c_s\mu r\kappa \sigmas_{r}+0.25\varepsilon_0\sigmas_rr^{-\frac{1}{2}},
 \end{align}
where the second inequality follows from the definition of $\bm{M}_{0}^{(r)}$ and Eckart-Young-Mirsky theorem.
Then, by \cite[Lemma~24]{tong2021accelerating} we have
\begin{align*}
&~\mathrm{dist}\left(\BU_{0}\BSigma_{0}^{1/2},\BV_{0}\BSigma_{0}^{1/2};\BLs,\BRs\right)
\le (\sqrt{2}+1)^{1/2} \lV \bm{M}_{0}^{(r)}-\G \bzs\rV_\fro\cr
\le&~ \sqrt{2(\sqrt{2}+1)r} \lV \bm{M}_{0}^{(r)}-\G \bzs\rV_2\le  18\alpha c_s\kappa \mu r\sqrt{r} \sigmas_{r}+0.55\varepsilon_0\sigmas_{r}.
\end{align*}
Then, for any given $\varepsilon_0\in (0,1)$, we have $\mathrm{dist}\left(\BU_{0}\BSigma_{0}^{1/2},\BV_{0}\BSigma_{0}^{1/2};\BLs,\BRs\right)\le \varepsilon_0\sigmas_{r}<\sigmas_{r}$ provided $0\le \alpha\le \frac{\varepsilon_0}{50 c_s\kappa \mu r\sqrt{r}}$. Thus, for $c\ge 1+\varepsilon_0$, by \Cref{PC:Non-expansiveness} we have 
\begin{equation*}
d_0\le \mathrm{dist}\left(\BU_{0}\BSigma_{0}^{1/2},\BV_{0}\BSigma_{0}^{1/2};\BLs,\BRs\right) \le \varepsilon_0\sigmas_{r},
\end{equation*}
and $
\max\{\lV\BL_0\BR_0^{*}\rV_{2,\infty},\lV\BR_0\BL_0^{*}\rV_{2,\infty}\}\le   c \sqrt{\frac{c_s \mu r}{n}}\sigmas_1
$. This completes the proof.
\end{proof}

\subsection{Proof of \texorpdfstring{\Cref{thm:convergence_full,thm:convergence}}{Theorem~\protect\ref{thm:convergence}} (local convergence)}\label{subsec:prooflocalcon}
In this subsection, based on the probabilistic events in~\Cref{sec:tech_lemma}, we prove several inequalities that are crucial for the proof of our main theorems. The proof follows a similar route established in \cite{tong2021accelerating}, but the techniques and details are quite involved due to the extra Hankel structure, as well as the simultaneous presence of outliers and missing data.  

In the following, we let $\varepsilon_0\in (0,1)$ be some fixed constant, and show that for any given $(\BL_{k},\BR_{k})\in \mathcal{E}(\varepsilon_0 \sigmas_r,c \sqrt{\frac{c_s \mu r}{n}}\sigmas_1)$ we have the contraction $d_{k+1}\le \tau d_k$ for some $\tau\in (0,1)$ provided $\varepsilon_0$ sufficiently small and $c\ge 1+\varepsilon_0$. We begin with some auxiliary lemmas.
\begin{lemma}\label{lemma:boundnorms}
For any $(\BL_{k},\BR_{k})\in \mathcal{E}(\varepsilon_0 \sigmas_r,c \sqrt{\frac{c_s \mu r}{n}}\sigmas_1)$, we have
\begin{subequations}
\begin{align}
\max\{\lV \BDeltaLk\BSigmas^{-1/2}  \rV_{2},\lV \BDeltaRk\BSigmas^{-1/2} \rV_{2}\}&\le \varepsilon_0, \label{boundnorm:a}\\
\max\left\{\lV\BLn\BSigmas^{1/2}\rV_{2,\infty}, \lV\BRn\BSigmas^{1/2}\rV_{2,\infty}\right\} &\le \frac{c}{1-\varepsilon_0}\sqrt{\frac{c_s\mu r}{n}}\sigmas_1,\label{boundnorm:b}\\
%\max\left\{\lV\BDeltaLk\BSigmas^{1/2}\rV_{2,\infty}, \lV\BDeltaRk\BSigmas^{1/2}\rV_{2,\infty}\right\} &\le \left(1+\frac{c}{1-\varepsilon_0}\right)\sqrt{\frac{c_s\mu r}{n}}\sigmas_r.
\max\left\{\lV\BLn\BSigmas^{-1/2}\rV_{2,\infty}, \lV\BRn\BSigmas^{-1/2}\rV_{2,\infty}\right\} &\le \frac{c\kappa}{1-\varepsilon_0}\sqrt{\frac{c_s\mu r}{n}},\label{boundnorm:c}\\
\lV\BRn\left(\BRn^{*}\BRn\right)^{-1} \BSigmas^{1/2}\rV_2 &\le \frac{1}{1-\varepsilon_0}\label{boundnorm:d}.
\end{align}
\end{subequations}
\end{lemma}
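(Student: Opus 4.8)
The plan is to establish the four bounds in the order \eqref{boundnorm:a}, \eqref{boundnorm:d}, \eqref{boundnorm:b}, \eqref{boundnorm:c}, since each later estimate builds on an earlier one. Throughout I use that the hypothesis $(\BL_k,\BR_k)\in\mathcal{E}(\varepsilon_0\sigmas_r,C)$ with $C=c\sqrt{c_s\mu r/n}\,\sigmas_1$ forces $d_k\le\varepsilon_0\sigmas_r<\sigmas_r$, so by \cite[Lemma~22]{tong2021accelerating} the aligner $\BQ_k$ exists, the infimum in \eqref{def-error} is attained, and hence $\|\BDeltaLk\BSigmas^{1/2}\|_\fro^2+\|\BDeltaRk\BSigmas^{1/2}\|_\fro^2=d_k^2\le\varepsilon_0^2(\sigmas_r)^2$. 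For \eqref{boundnorm:a} I would simply repeat the chain already used to derive \eqref{bound:deltaLk2}: $\|\BDeltaRk\BSigmas^{-1/2}\|_2\le\|\BDeltaRk\BSigmas^{1/2}\BSigmas^{-1}\|_\fro\le\|\BSigmas^{-1}\|_2\,d_k\le d_k/\sigmas_r\le\varepsilon_0$, and identically for $\BDeltaLk$.

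The crux is \eqref{boundnorm:d}. I would introduce $\bm{F}:=\BRn\BSigmas^{-1/2}$; since $\BSigmas^{1/2}$ is invertible, a one-line computation gives $\BRn(\BRn^*\BRn)^{-1}\BSigmas^{1/2}=\bm{F}(\bm{F}^*\bm{F})^{-1}$, whence $\|\BRn(\BRn^*\BRn)^{-1}\BSigmas^{1/2}\|_2=1/\sigma_r(\bm{F})$. Now $\bm{F}=\BV_{\star}+\BDeltaRk\BSigmas^{-1/2}$ because $\BRs=\BV_{\star}\BSigmas^{1/2}$, and $\sigma_r(\BV_{\star})=1$; Weyl's inequality combined with \eqref{boundnorm:a} then gives $\sigma_r(\bm{F})\ge 1-\varepsilon_0$, which proves \eqref{boundnorm:d} (and, in passing, shows $\BRn$ has full column rank so the inverse is well defined). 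The same argument applied to $\BLn$ yields $\|\BLn(\BLn^*\BLn)^{-1}\BSigmas^{1/2}\|_2\le(1-\varepsilon_0)^{-1}$, which I will need below.

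For \eqref{boundnorm:b} the key observation is that the alignment cancels on the product, $\BLn\BRn^*=\BL_k\BQ_k\BQ_k^{-1}\BR_k^*=\BL_k\BR_k^*$, so the weak-incoherence hypothesis reads $\|\BLn\BRn^*\|_{2,\infty}\le C$ and likewise $\|\BRn\BLn^*\|_{2,\infty}=\|\BR_k\BL_k^*\|_{2,\infty}\le C$. Writing $\BLn=(\BLn\BRn^*)\,\BRn(\BRn^*\BRn)^{-1}$ and bounding row by row, $\|[\BLn]_{i,:}\BSigmas^{1/2}\|_2\le\|[\BLn\BRn^*]_{i,:}\|_2\,\|\BRn(\BRn^*\BRn)^{-1}\BSigmas^{1/2}\|_2\le C/(1-\varepsilon_0)$ for every $i$, and symmetrically for $\BRn$ via $\BRn=(\BRn\BLn^*)\,\BLn(\BLn^*\BLn)^{-1}$; this is \eqref{boundnorm:b}. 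Finally \eqref{boundnorm:c} follows from \eqref{boundnorm:b} by inserting $\BSigmas^{-1/2}=\BSigmas^{1/2}\BSigmas^{-1}$: $\|\BLn\BSigmas^{-1/2}\|_{2,\infty}\le\|\BLn\BSigmas^{1/2}\|_{2,\infty}\,\|\BSigmas^{-1}\|_2\le\frac{c}{1-\varepsilon_0}\sqrt{\frac{c_s\mu r}{n}}\,\sigmas_1/\sigmas_r=\frac{c\kappa}{1-\varepsilon_0}\sqrt{\frac{c_s\mu r}{n}}$, and likewise for $\BRn$.

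I do not anticipate a serious obstacle; the only steps requiring care are recognizing the substitution $\bm{F}=\BRn\BSigmas^{-1/2}$ that turns \eqref{boundnorm:d} into a one-line Weyl estimate, and noticing that the weak-incoherence constant controls $\BLn\BRn^*$ (not $\BLn$ by itself), which is precisely why the projection-type identity $\BLn=(\BLn\BRn^*)\BRn(\BRn^*\BRn)^{-1}$ is the right device for \eqref{boundnorm:b}--\eqref{boundnorm:c}.
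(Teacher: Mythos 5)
Your proposal is correct and follows essentially the same route as the paper: the same bound $\max\{\lV\BDeltaLk\BSigmas^{-1/2}\rV_2,\lV\BDeltaRk\BSigmas^{-1/2}\rV_2\}\le\varepsilon_0$ from the distance, the same Weyl-type lower bound $\sigma_r(\BRn\BSigmas^{-1/2})\ge 1-\varepsilon_0$, and the incoherence constraint applied to the product $\BLn\BRn^*=\BL_k\BR_k^*$. The only cosmetic differences are that you prove \eqref{boundnorm:d} inline (reproducing the content of the cited Lemma~25 of Tong et al.) and then deduce \eqref{boundnorm:b} via $\BLn=(\BLn\BRn^*)\BRn(\BRn^*\BRn)^{-1}$, which is numerically identical to the paper's row-wise inequality since $\lV\BRn(\BRn^*\BRn)^{-1}\BSigmas^{1/2}\rV_2=1/\sigma_r(\BRn\BSigmas^{-1/2})$.
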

\begin{proof}
The inequality \eqref{boundnorm:a} follows from \eqref{bound:deltaLk2}. To show \eqref{boundnorm:b}, noticing that $$\sigma_r(\BLn\BSigmas^{-1/2})\ge \sigma_r(\BLs\BSigmas^{-1/2})-\sigma_1(\BDeltaLk\BSigmas^{-1/2})\ge 1-\varepsilon_0,$$
then by $\lV\BL_{k}\BR_{k}^{*}\rV_{2,\infty}\ge \sigma_r(\BRn\BSigmas^{-1/2})\lV\BLn\BSigmas^{1/2}\rV_{2,\infty}$ we have $\lV\BLn\BSigmas^{1/2}\rV_{2,\infty}\le \frac{\lV\BL_{k}\BR_{k}^{*}\rV_{2,\infty}}{1-\varepsilon_0}$. Similarly, we
 have $\lV\BRn\BSigmas^{1/2}\rV_{2,\infty}\le \frac{\lV\BL_{k}\BR_{k}^{*}\rV_{2,\infty}}{1-\varepsilon_0}$ and conclude \eqref{boundnorm:b}. By using $\lV\BA\BB\rV_{2,\infty}\le \lV\BA\rV_{2,\infty}\lV\BB\rV_2$, \eqref{boundnorm:c} follows directly from \eqref{boundnorm:b}. \eqref{boundnorm:d} follows from {\cite[Lemma~25]{tong2021accelerating}} and \eqref{boundnorm:a}.
\end{proof}

% \begin{align*}%\label{eq:projectionRIP2}
%\max\left\{\lV(\BL_{\E}-\BLs)\BSigmas^{1/2}\rV_{2,\infty}, \lV(\BR_{\E}-\BRs)\BSigmas^{1/2}\rV_{2,\infty}\right\} \le c_0\sqrt{\frac{c_s\mu r}{n}}\sigmas_r.
%\end{align*}
%and
%\begin{align*}
%\max\left\{\lV(\BL_{\E}-\BLs)\BSigmas^{-1/2}\rV_{2},\lV(\BR_{\E}-\BRs)\BSigmas^{-1/2}\rV_{2}\right\} \le \varepsilon_0,
%\end{align*}

\begin{lemma}\label{projectioner:Gdelta}
For all matrices $(\BL_{k},\BR_{k})\in \mathcal{E}(\varepsilon_0 \sigmas_r,c \sqrt{\frac{c_s \mu r}{n}}\sigmas_1)$, it holds that 
\begin{align*}%\label{eq:projectionRIP2}
\lv\l\Pi_{\Omega}\G^*\left(\BL_{k}\BR_{k}^*- \G\bzs\right),\G^*\left(\BL_{k}\BR_{k}^*- \G\bzs\right)\r\rv^{\frac{1}{2}}\cr
\le  \sqrt{p}(1+3\sqrt{\varepsilon_0})\left(\lV\BDeltaLk\BSigmas^{1/2}\rV_{\fro}+\lV\BDeltaRk\BSigmas^{1/2}\rV_\fro\right)
\end{align*}
provided $m\ge 6C_{\varepsilon_0,\kappa}^2\varepsilon_0^{-2}c_s^2 \mu^2 r^2\log n $, where $C_{\varepsilon_0,\kappa}=\left(1+\frac{c}{1-\varepsilon_0}\right)\left(1+\frac{c\kappa}{1-\varepsilon_0}\right)\kappa$.
\end{lemma}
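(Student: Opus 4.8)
The plan is to decompose the quadratic form over $\Omega$ by first writing $\BL_{k}\BR_{k}^* - \G\bzs = \BDeltaLk\BRs^* + \BLs\BDeltaRk^* + \BDeltaLk\BDeltaRk^*$, so that $\G^*(\BL_{k}\BR_{k}^* - \G\bzs) = \G^*(\BDeltaLk\BRs^*) + \G^*(\BLs\BDeltaRk^*) + \G^*(\BDeltaLk\BDeltaRk^*)$. The first two terms lie in the tangent space $T$ (since $\BRs^* = \BSigmas^{1/2}\BV_\star^*$ and $\BLs = \BU_\star\BSigmas^{1/2}$), and the third is a generic rank-$r$ product. I would apply the triangle inequality in the norm $\langle\Pi_\Omega \G^*(\cdot),\G^*(\cdot)\rangle^{1/2}$ (which is a genuine seminorm since $\Pi_\Omega$ is PSD) to split the bound into three pieces, estimate each, and recombine.

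For the two tangent-space pieces, I would use \Cref{RIP2}: since $\BDeltaLk\BRs^*, \BLs\BDeltaRk^* \in T$, the inequality $\langle \G\Pi_\Omega\G^*\BA,\BA\rangle \le p(1+\varepsilon_0)\|\BA\|_\fro^2$ gives each term a bound of $\sqrt{p(1+\varepsilon_0)}$ times $\|\BDeltaLk\BRs^*\|_\fro = \|\BDeltaLk\BSigmas^{1/2}\|_\fro$ and $\|\BLs\BDeltaRk^*\|_\fro = \|\BDeltaRk\BSigmas^{1/2}\|_\fro$ respectively. For the cross term $\G^*(\BDeltaLk\BDeltaRk^*)$, which is \emph{not} in $T$, I would instead invoke \Cref{projectionerr1} with $\BL_{\BA} = \BDeltaLk$, $\BR_{\BA} = \BDeltaRk$, giving a bound in terms of $\|\BDeltaLk\|_\fro^2\|\BDeltaRk\|_\fro^2$ plus a $\sqrt{24n\log n/p}$ correction involving $\|\BDeltaLk\|_{2,\infty}\|\BDeltaRk\|_{2,\infty}$. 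Here the $\BSigmas$-reweighting needs care: write $\BDeltaLk = (\BDeltaLk\BSigmas^{1/2})\BSigmas^{-1/2}$ so that $\|\BDeltaLk\|_\fro \le \|\BDeltaLk\BSigmas^{1/2}\|_\fro\|\BSigmas^{-1/2}\|_2 = \|\BDeltaLk\BSigmas^{1/2}\|_\fro/\sqrt{\sigmas_r}$, and similarly route the $2,\infty$ norms through \eqref{boundnorm:c} of \Cref{lemma:boundnorms} (bounding $\|\BDeltaLk\BSigmas^{-1/2}\|_{2,\infty}$ via $\|\BLn\BSigmas^{-1/2}\|_{2,\infty} + \|\BLs\BSigmas^{-1/2}\|_{2,\infty}$, both $\cO(\kappa\sqrt{c_s\mu r/n})$). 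Combining, the cross term is $\cO(\varepsilon_0)\sqrt{p}\cdot(\|\BDeltaLk\BSigmas^{1/2}\|_\fro + \|\BDeltaRk\BSigmas^{1/2}\|_\fro)$ once the sample size condition $m \gtrsim C_{\varepsilon_0,\kappa}^2\varepsilon_0^{-2}c_s^2\mu^2 r^2\log n$ kicks in to make the $n\log n/p$ factor small relative to $p$.

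Summing the three contributions: the tangent pieces give $\sqrt{p(1+\varepsilon_0)} \le \sqrt{p}(1+\varepsilon_0/2)$, and the cross piece contributes an extra $\cO(\sqrt{\varepsilon_0})\sqrt{p}$ (after the sample-complexity reduction, the dominant surviving term in the cross estimate is $\sqrt{\varepsilon_0}$-order, not $\varepsilon_0$-order, because the $\sqrt{24n\log n/p}$ factor is only forced down to $\cO(\varepsilon_0/C_{\varepsilon_0,\kappa})$ and then multiplied by $\kappa$-sized incoherence factors). Collecting all constants into $(1+3\sqrt{\varepsilon_0})$ closes the bound. The main obstacle I anticipate is the bookkeeping on the cross term: correctly tracking how the $\BSigmas^{\pm1/2}$ weights distribute through $\|\BDeltaLk\|_\fro^2\|\BDeltaRk\|_\fro^2$ versus $\|\BDeltaLk\|_{2,\infty}\|\BDeltaRk\|_{2,\infty}\|\BDeltaLk\|_\fro\|\BDeltaRk\|_\fro$, ensuring that after reweighting every term is controlled by $(\|\BDeltaLk\BSigmas^{1/2}\|_\fro + \|\BDeltaRk\BSigmas^{1/2}\|_\fro)^2$ with a coefficient that the stated $m$ makes smaller than $\varepsilon_0$; this is where the condition-number-dependent factor $C_{\varepsilon_0,\kappa}$ in the sample complexity originates, and getting its form exactly right (the product of the two incoherence-ratio bounds times one more $\kappa$) is the delicate part.
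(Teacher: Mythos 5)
Your overall plan coincides with the paper's: decompose $\BL_{k}\BR_{k}^*-\G\bzs$ into the tangent part plus $\BDeltaLk\BDeltaRk^*$, bound the tangent part through \Cref{RIP2} (splitting it into the two terms $\BDeltaLk\BRs^*$ and $\BLs\BDeltaRk^*$, as you do, is equivalent to the paper's single application to their sum), and beat the cross term with \Cref{projectionerr1} plus the sample-size condition. The gap is in the cross term, in two places. First, the leading term of \Cref{projectionerr1}, $\lV\BL_{\BA}\rV_\fro^2\lV\BR_{\BA}\rV_\fro^2$, is present even at full observation and is completely untouched by the condition on $m$; with your unweighted choice $\BL_{\BA}=\BDeltaLk$, $\BR_{\BA}=\BDeltaRk$ it becomes, after your reweighting, $\lV\BDeltaLk\BSigmas^{1/2}\rV_\fro^2\lV\BDeltaRk\BSigmas^{1/2}\rV_\fro^2/(\sigmas_r)^2$, and nothing you invoke makes this of order $\varepsilon_0\big(\lV\BDeltaLk\BSigmas^{1/2}\rV_\fro+\lV\BDeltaRk\BSigmas^{1/2}\rV_\fro\big)^2$. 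The paper controls it with the basin hypothesis, not the sampling: it applies \Cref{projectionerr1} with the asymmetric weights $\BL_{\BA}=\BDeltaLk\BSigmas^{1/2}$, $\BR_{\BA}=\BDeltaRk\BSigmas^{-1/2}$ so that this term carries the factor $\lV\BDeltaLk\BSigmas^{-1/2}\rV_{2}\le\varepsilon_0$ from \eqref{boundnorm:a} (equivalently $d_k\le\varepsilon_0\sigmas_r$), and then averages this bound with the one obtained by swapping the roles of $\BDeltaLk$ and $\BDeltaRk$. Your attribution of the smallness of the cross term to "making the $n\log n/p$ factor small relative to $p$" does not apply to this piece, so as written the claim that the cross term is $\cO(\varepsilon_0)\sqrt{p}\,(\cdot)$ is unjustified.

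Second, the same asymmetric weighting is what produces the stated $C_{\varepsilon_0,\kappa}$. In your route both $2,\infty$ factors are sent through \eqref{boundnorm:c}, so the $\sqrt{24n\log n/p}$ term acquires a coefficient of order $\big(1+\tfrac{c\kappa}{1-\varepsilon_0}\big)^2\kappa\,c_s\mu r/n$, which exceeds $C_{\varepsilon_0,\kappa}\,c_s\mu r/n$ by roughly a factor $\kappa$; under the lemma's stated $m\ge 6C_{\varepsilon_0,\kappa}^2\varepsilon_0^{-2}c_s^2\mu^2r^2\log n$ this leaves an extra contribution of order $\sqrt{\kappa\varepsilon_0}$ rather than $3\sqrt{\varepsilon_0}$, i.e., your bookkeeping would need $m$ inflated by about $\kappa^2$ to recover the statement. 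In the paper's version, one $2,\infty$ factor is $\lV\BDeltaLk\BSigmas^{1/2}\rV_{2,\infty}\le\big(1+\tfrac{c}{1-\varepsilon_0}\big)\sqrt{c_s\mu r/n}\,\sigmas_1$ and only the other is the $\kappa$-sized $\lV\BDeltaRk\BSigmas^{-1/2}\rV_{2,\infty}$ bound (see \eqref{ineq:DeltaLl2inf}), with the single remaining $\kappa$ coming from $\lV\BDeltaRk\BSigmas^{-1/2}\rV_\fro\le\lV\BDeltaRk\BSigmas^{1/2}\rV_\fro/\sigmas_r$; your closing sentence names the correct target form of $C_{\varepsilon_0,\kappa}$, but the route you describe cannot produce it. Both issues are repaired exactly by the $\BSigmas^{\pm1/2}$ split plus the symmetrization step, which is the one idea missing from your proposal.
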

\begin{proof}
% For any $\bm{u}, \bm{v}\in\mathbb{C}^m$, notice that $\sqrt{\sum_{k=1}^m (u_k+ v_k)^2}\le\sqrt{\sum_{k=1}^m u_k^2}+\sqrt{\sum_{k=1}^m v_k^2}$, thus we have
% \begin{equation*}
% \lv\l\Pi_{\Omega}(\bm{u}+\bm{v}),\bm{u}+\bm{v}\r\rv^{\frac{1}{2}}\le\lv\l\Pi_{\Omega}\bm{u},\bm{u}\r\rv^{\frac{1}{2}}+\lv\l\Pi_{\Omega}\bm{v},\bm{v}\r\rv^{\frac{1}{2}}.
% \end{equation*}
The decomposition $\BL_{k}\BR_{k}^*- \G\bzs= \BDeltaLk\BRs^*+\BLs\BDeltaRk^*+\BDeltaLk\BDeltaRk^*$ gives
\begin{small}
 \begin{align*}%\label{eq:projectionRIP2}
&~\lv\l\Pi_{\Omega}\G^*\left(\BL_{k}\BR_{k}^*- \G\bzs\right),\G^*\left(\BL_{k}\BR_{k}^*- \G\bzs\right)\r\rv^{\frac{1}{2}} \cr
\le&~\lv\l\Pi_{\Omega}\G^*\left(\BDeltaLk\BRs^*+\BLs\BDeltaRk^*\right),\G^*\left(\BDeltaLk\BRs^*+\BLs\BDeltaRk^*\right)\r\rv^{\frac{1}{2}}\cr
&~+\lv\l\Pi_{\Omega}\G^*\left(\BDeltaLk\BDeltaRk^*\right),\G^*\left(\BDeltaLk\BDeltaRk^*\right)\r\rv^{\frac{1}{2}}\cr
\le &~ \sqrt{p\left(1+\varepsilon_0\right)}\lV\left(\BDeltaLk\BSigmas^{1/2}\BV_{\star}^*+\BU_{\star}\BSigmas^{1/2}\BDeltaRk^*\right)\rV_\fro
+\lv\l\Pi_{\Omega}\G^*\left(\BDeltaLk\BDeltaRk^*\right),\G^*\left(\BDeltaLk\BDeltaRk^*\right)\r\rv^{\frac{1}{2}}\cr
\le &~\sqrt{p\left(1+\varepsilon_0\right)}\left(\lV\BDeltaLk\BSigmas^{1/2}\rV_{\fro}+\lV\BDeltaRk\BSigmas^{1/2}\rV_\fro\right)+\lv\l\Pi_{\Omega}\G^*\left(\BDeltaLk\BDeltaRk^*\right),\G^*\left(\BDeltaLk\BDeltaRk^*\right)\r\rv^{\frac{1}{2}},
\end{align*}
\end{small}%
where the first step follows from $\lv\l\Pi_{\Omega}(\bm{u}+\bm{v}),\bm{u}+\bm{v}\r\rv^{\frac{1}{2}}\le\lv\l\Pi_{\Omega}\bm{u},\bm{u}\r\rv^{\frac{1}{2}}+\lv\l\Pi_{\Omega}\bm{v},\bm{v}\r\rv^{\frac{1}{2}}$ for $\bm{u}, \bm{v}\in\mathbb{C}^m$, the second inequality follows from \Cref{RIP2}, and in the last inequality we have used $\lV\BDeltaLk\BSigmas^{1/2}\BV_{\star}^*\rV_{\fro}=\lV\BDeltaLk\BSigmas^{1/2}\rV_{\fro}$. For the last term, provided $m\ge 6\varepsilon_0^{-2}C_{\varepsilon_0,\kappa}^2 c_s^2 \mu^2 r^2\log n $, we have
 \begin{align*}%\label{eq:projectionRIP2}
&~\frac{1}{p}\lv\l\Pi_{\Omega}\G^*\left(\BDeltaLk\BDeltaRk^*\right),\G^*\left(\BDeltaLk\BDeltaRk^*\right)\r\rv \cr
\le&~\lV\BDeltaLk\BSigmas^{-1/2}\rV_{2}^{2}\lV\BDeltaRk\BSigmas^{1/2}\rV_{\fro}^2\cr
&~+\sqrt{\frac{24 n\log n}{p}} \lV\BDeltaLk\BSigmas^{1/2}\rV_{2,\infty}
\lV\BDeltaRk\BSigmas^{-1/2}\rV_{2,\infty}\lV\BDeltaLk\BSigmas^{1/2}\rV_\fro\lV\BDeltaRk\BSigmas^{-1/2}\rV_\fro\cr
%\le&~\lV\BDeltaLk\BSigmas^{-1/2}\rV_{2}^{2}\lV\BDeltaRk\BSigmas^{1/2}\rV_{\fro}^2+\sqrt{24c_0^4 c_s^2 \mu^2 r^2\log n /(m) } %\lV\BDeltaLk\BSigmas^{1/2}\rV_{\fro}
%\lV\BDeltaRk\BSigmas^{1/2}\rV_\fro\cr
\le &~ \lV\BDeltaLk\BSigmas^{-1/2}\rV_{2}^{2}\lV\BDeltaRk\BSigmas^{1/2}\rV_{\fro}^2
+\varepsilon_0 \left(\lV\BDeltaLk\BSigmas^{1/2}\rV_{\fro}^2+
\lV\BDeltaRk\BSigmas^{1/2}\rV_\fro^2\right)
\end{align*}
where the first inequality follows from \Cref{projectionerr1}, the second inequality follows from the inequalities $\lV\BDeltaRk\BSigmas^{-1/2}\rV_\fro = \lV\BDeltaRk\BSigmas^{1/2}\BSigmas^{-1}\rV_\fro\le \frac{1}{\sigmas_r}\lV\BDeltaRk\BSigmas^{1/2}\rV_\fro$, and  (by \eqref{boundnorm:c})
\begin{equation} \label{ineq:DeltaLl2inf}
\begin{split}
\lV\BDeltaLk\BSigmas^{-\frac{1}{2}}\rV_{2,\infty} 
&\le\lV\BLn\BSigmas^{-\frac{1}{2}}\rV_{2,\infty}+\lV\BLs\BSigmas^{-\frac{1}{2}}\rV_{2,\infty}\le \left(1+\frac{c\kappa}{1-\varepsilon_0}\right)\sqrt{\frac{c_s\mu r}{n}},\cr
\lV\BDeltaLk\BSigmas^{\frac{1}{2}}\rV_{2,\infty}  
&\le\lV\BLn\BSigmas^{\frac{1}{2}}\rV_{2,\infty}+\lV\BLs\BSigmas^{\frac{1}{2}}\rV_{2,\infty}\le \left(1+\frac{c}{1-\varepsilon_0}\right)\sqrt{\frac{c_s\mu r}{n}}\sigmas_1.
\end{split}
\end{equation}
Therefore,
\begin{align*}
&~\lv\l\Pi_{\Omega}\G^*\left(\BDeltaLk\BDeltaRk^*\right),\G^*\left(\BDeltaLk\BDeltaRk^*\right)\r\rv^{\frac{1}{2}} \cr
\le &~ \sqrt{p}\lV\BDeltaLk\BSigmas^{-1/2}\rV_{2}\lV\BDeltaRk\BSigmas^{1/2}\rV_{\fro}
+\sqrt{\varepsilon_0 p} \left(\lV\BDeltaLk\BSigmas^{1/2}\rV_{\fro}+
\lV\BDeltaRk\BSigmas^{1/2}\rV_\fro\right).
\end{align*}
Similarly, we have
\begin{align*}
&~\lv\l\Pi_{\Omega}\G^*\left(\BDeltaLk\BDeltaRk^*\right),\G^*\left(\BDeltaLk\BDeltaRk^*\right)\r\rv^{\frac{1}{2}} \cr
\le &~ \sqrt{p}\lV\BDeltaLk\BSigmas^{1/2}\rV_{\fro}\lV\BDeltaRk\BSigmas^{-1/2}\rV_{2}
+\sqrt{\varepsilon_0 p} \left(\lV\BDeltaLk\BSigmas^{1/2}\rV_{\fro}+
\lV\BDeltaRk\BSigmas^{1/2}\rV_\fro\right).
\end{align*}
By \eqref{boundnorm:a} and averaging the above two bounds, we conclude
\begin{align*}%\label{eq:projectionRIP2}
\lv\l\Pi_{\Omega}\G^*\left(\BDeltaLk\BDeltaRk^*\right),\G^*\left(\BDeltaLk\BDeltaRk^*\right)\r\rv^{\frac{1}{2}}
\le 2\sqrt{p\varepsilon_0}\left(\lV\BDeltaLk\BSigmas^{1/2}\rV_{\fro}+\lV\BDeltaRk\BSigmas^{1/2}\rV_\fro\right).
\end{align*}
Putting all the bound of terms together, we conclude the lemma.
\end{proof}

\begin{lemma}\label{PZUZV-M}
%For all $(\BL_{k},\BR_{k})\in \mathbb{C}^{n_1\times r}\times \mathbb{C}^{n_2\times r}$  that has been aligned with $(\BLs,\BRs)$\footnote{$\bm{I}$ is the best rotation matrix that aligns  $(\BL_{k}, \BR_{k})$ and $(\BLs,\BRs)$} and satisfies
% $
%\max\left\{\lV(\BL_{k}-\BLs)\BSigmas^{1/2}\rV_{2,\infty}, \lV(\BR_{k}-\BRs)\BSigmas^{1/2}\rV_{2,\infty}\right\} \le \sqrt{\frac{c_s\mu r}{n}}\sigmas_r.
%$
% Then,
 For any indices set $\widehat{\Omega}\subseteq [n]$ we have
 \begin{align}\label{ineq:PiGDelta}
&~\lv\l\Pi_{\widehat{\Omega}}\G^{*}\left(\BL_{k} \BR_{k}^*-\G\bzs\right),\G^{*}\left(\BL_{k} \BR_{k}^*-\G\bzs\right)\r \rv^{\frac{1}{2}}\cr
\le&~ \left(1+\frac{c\kappa}{1-\varepsilon_0}\right)\sqrt{c_s\mu r |\widehat{\Omega}|/(2n)}\left( \lV \BDeltaLk\BSigmas^{1/2}\rV_\fro+\lV \BDeltaRk\BSigmas^{1/2}\rV_\fro\right),
\end{align}
and for all $\BL\in \mathbb{C}^{n_1\times r},~\BR\in \mathbb{C}^{n_2\times r}$ it holds that
$$\lv\l\Pi_{\widehat{\Omega}}\G^{*}\left(\bm{L} \bm{R}^*\right),\G^{*}\left(\bm{L} \bm{R}^*\right)\r \rv ^{\frac{1}{2}}\le \sqrt{|\widehat{\Omega}|} \min\{\lV\BL\rV_{2,\infty}\lV\BR\rV_\fro,\lV\BR\rV_{2,\infty}\lV\BL\rV_{\fro}\}.  $$
%where $\nu\ge \sqrt{n}\max\{\lV \BR_{k}\BSigmas^{-1/2}\rV_{2,\infty} +\lV \BLs\BSigmas^{-1/2}\rV_{2,\infty},
%\lV \BRs\BSigmas^{-1/2}\rV_{2,\infty} +\lV \BL_{k}\BSigmas^{-1/2}\rV_{2,\infty}\}$.
\end{lemma}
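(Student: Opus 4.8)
The plan is to prove the second (generic) estimate first, since the first one follows from it by a decomposition argument. Throughout I will use that for any $\widehat{\Omega}\subseteq[n]$ one has $\l\Pi_{\widehat{\Omega}}\bv,\bv\r=\sum_{a\in\widehat{\Omega}}|v_a|^2\ge 0$, so $\bv\mapsto\lv\l\Pi_{\widehat{\Omega}}\bv,\bv\r\rv^{1/2}=\lV\Pi_{\widehat{\Omega}}\bv\rV_2$ is a seminorm and in particular obeys the triangle inequality; this is what lets me split sums of matrices term by term.

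For the generic estimate, start from $[\G^{*}(\BL\BR^{*})]_a=\frac{1}{\sqrt{\varsigma_a}}\sum_{(i,j)\in\Omega_a}[\BL\BR^{*}]_{ij}$ and apply Cauchy--Schwarz on the antidiagonal $\Omega_a$, which has exactly $\varsigma_a$ entries; the resulting factor $\varsigma_a$ cancels the $1/\varsigma_a$ reweighting, so $|[\G^{*}(\BL\BR^{*})]_a|^2\le\sum_{(i,j)\in\Omega_a}|[\BL\BR^{*}]_{ij}|^2\le\sum_{(i,j)\in\Omega_a}\lV\BL_{i,:}\rV_2^2\lV\BR_{j,:}\rV_2^2$, where the last step is Cauchy--Schwarz on the rows. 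Summing over $a\in\widehat{\Omega}$, bounding $\lV\BL_{i,:}\rV_2\le\lV\BL\rV_{2,\infty}$, and using that on a fixed antidiagonal each column index $j$ occurs at most once so $\sum_{(i,j)\in\Omega_a}\lV\BR_{j,:}\rV_2^2\le\lV\BR\rV_\fro^2$, one gets $\l\Pi_{\widehat{\Omega}}\G^{*}(\BL\BR^{*}),\G^{*}(\BL\BR^{*})\r\le|\widehat{\Omega}|\lV\BL\rV_{2,\infty}^2\lV\BR\rV_\fro^2$; the symmetric bookkeeping (pull out $\lV\BR\rV_{2,\infty}$ and sum the $\lV\BL_{i,:}\rV_2^2$ over $i$) gives the other term inside the $\min$, and taking square roots finishes this estimate.

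For the first estimate, recall $\BL_{k}\BR_{k}^{*}=\BLn\BRn^{*}$ and $\G\bzs=\BLs\BRs^{*}$, and use the \emph{symmetric} four-term decomposition
\begin{equation*}
\BL_{k}\BR_{k}^{*}-\G\bzs=\BLn\BRn^{*}-\BLs\BRs^{*}=\tfrac12\bigl(\BDeltaLk\BRn^{*}+\BLn\BDeltaRk^{*}+\BDeltaLk\BRs^{*}+\BLs\BDeltaRk^{*}\bigr),
\end{equation*}
which one verifies by expanding the right-hand side. Apply the triangle inequality for $\lV\Pi_{\widehat{\Omega}}(\cdot)\rV_2$ and then the generic estimate to each summand after inserting $\BSigmas^{\pm1/2}$: write $\BDeltaLk\BRn^{*}=(\BDeltaLk\BSigmas^{1/2})(\BRn\BSigmas^{-1/2})^{*}$ and $\BLn\BDeltaRk^{*}=(\BLn\BSigmas^{-1/2})(\BDeltaRk\BSigmas^{1/2})^{*}$, choosing in the $\min$ the factor $\lV\BRn\BSigmas^{-1/2}\rV_{2,\infty}$ resp.\ $\lV\BLn\BSigmas^{-1/2}\rV_{2,\infty}$, each bounded by $\tfrac{c\kappa}{1-\varepsilon_0}\sqrt{c_s\mu r/n}$ via \eqref{boundnorm:c}; and write $\BDeltaLk\BRs^{*}=(\BDeltaLk\BSigmas^{1/2})\BV_{\star}^{*}$, $\BLs\BDeltaRk^{*}=\BU_{\star}(\BDeltaRk\BSigmas^{1/2})^{*}$, choosing the factor $\lV\BV_{\star}\rV_{2,\infty}$ resp.\ $\lV\BU_{\star}\rV_{2,\infty}$, bounded by $\sqrt{c_s\mu r/n}$ via \Cref{amp:incoherence}. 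Adding the four bounds produces $\tfrac12\bigl(1+\tfrac{c\kappa}{1-\varepsilon_0}\bigr)\sqrt{c_s\mu r|\widehat{\Omega}|/n}\,\bigl(\lV\BDeltaLk\BSigmas^{1/2}\rV_\fro+\lV\BDeltaRk\BSigmas^{1/2}\rV_\fro\bigr)$, and since $\tfrac12\le\tfrac1{\sqrt2}$ this is at most the claimed bound.

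The two Cauchy--Schwarz steps and the $\BSigmas^{\pm1/2}$ bookkeeping are routine (they mirror \Cref{projectionerr1} and \Cref{lemma:boundnorms}). The one point needing care is the choice of decomposition in the first estimate: the naive two-term split $\BDeltaLk\BRn^{*}+\BLs\BDeltaRk^{*}$ leaves a prefactor $\tfrac{c\kappa}{1-\varepsilon_0}$, and the three-term split $\BDeltaLk\BRs^{*}+\BLs\BDeltaRk^{*}+\BDeltaLk\BDeltaRk^{*}$ leaves $2+\tfrac{c\kappa}{1-\varepsilon_0}$; neither sits below $\tfrac1{\sqrt2}\bigl(1+\tfrac{c\kappa}{1-\varepsilon_0}\bigr)$ once $\kappa$ is large. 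The symmetric four-term form distributes the $\kappa$-heavy $2,\infty$-factor evenly over $\BDeltaLk$ and $\BDeltaRk$ with an overall prefactor $\tfrac12$, which is precisely what makes the clean bound hold for every $\kappa$; spotting this is the main (if modest) obstacle.
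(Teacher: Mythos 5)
Your proof is correct, and its ingredients largely coincide with the paper's: the same symmetric decomposition (the paper bounds $2\lv[\BLn\BRn^*-\G\bzs]_{i,j}\rv$ by summing the two two-term splits, which is exactly your half-sum of four terms), the same Cauchy--Schwarz over antidiagonals, and the same norm bounds, namely $\lV\BRn\BSigmas^{-1/2}\rV_{2,\infty},\lV\BLn\BSigmas^{-1/2}\rV_{2,\infty}\le\frac{c\kappa}{1-\varepsilon_0}\sqrt{c_s\mu r/n}$ from \eqref{boundnorm:c} plus the incoherence of $\BU_\star,\BV_\star$ (so, like the paper, you are implicitly using the standing hypothesis $(\BL_k,\BR_k)\in\mathcal{E}(\varepsilon_0\sigmas_r, c\sqrt{c_s\mu r/n}\,\sigmas_1)$). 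Where you genuinely differ is the mechanics of the first inequality: the paper bounds the entries of $\BLn\BRn^*-\G\bzs$ pointwise and then runs a dedicated counting argument over the antidiagonals indexed by $\widehat\Omega$ (the indicator-function bookkeeping), finishing with $\sqrt{a^2+b^2}\le a+b$ to get the prefactor $\theta\sqrt{\lv\widehat\Omega\rv/2}$ with $\theta:=(1+\frac{c\kappa}{1-\varepsilon_0})\sqrt{c_s\mu r/n}$; you instead prove the generic second bound first and obtain the first bound as a corollary, applying the triangle inequality for $\bv\mapsto\lv\l\Pi_{\widehat\Omega}\bv,\bv\r\rv^{1/2}$ (valid here since $\widehat\Omega$ is a genuine subset of $[n]$, and the paper itself uses this seminorm triangle inequality in \Cref{projectioner:Gdelta}) to the four terms. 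Your route is more modular---the antidiagonal counting is done once, inside the generic estimate---and it yields the marginally sharper prefactor $\theta\sqrt{\lv\widehat\Omega\rv}/2$ in place of $\theta\sqrt{\lv\widehat\Omega\rv/2}$, hence implies the stated bound; the paper's direct entrywise argument is self-contained but repeats the bookkeeping. Your closing remark about why the asymmetric two- or three-term splits would inflate the constant by an extra factor of order $\kappa$ is exactly the point of the paper's averaging of the two decompositions.
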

\begin{proof}%[Proof of Lemma~\protect\ref{PZUZV-M}]
By using the decomposition $\BL_{k}\BR_{k}^*- \G\bzs= (\BLn-\BLs)\BRn^*+\BLs(\BRn-\BRs)= (\BLn-\BLs)\BRs^*+\BLn(\BRn-\BRs)$, we have
\begin{align*}
&~2\lv\left[\BLn\BRn^*- \G\bzs\right]_{i,j}\rv\cr
=&~\lv\left[(\BLn-\BLs)\BRn^*+\BLs(\BRn-\BRs)\right]_{i,j}+\left[(\BLn-\BLs)\BRs^*+\BLn(\BRn-\BRs)\right]_{i,j}\rv \\
\le&~ \lV \BRn\BSigmas^{-1/2}\rV_{2,\infty} \lV [\BDeltaLk\BSigmas^{1/2}]\left(i,:\right)\rV_2+\lV \BLs\BSigmas^{-1/2}\rV_{2,\infty} \lV [\BDeltaRk\BSigmas^{1/2}]\left(j,:\right)\rV_2\cr
&~+\lV \BRs\BSigmas^{-1/2}\rV_{2,\infty} \lV [\BDeltaLk\BSigmas^{1/2}]\left(i,:\right)\rV_2+\lV \BLn\BSigmas^{-1/2}\rV_{2,\infty} \lV [\BDeltaRk\BSigmas^{1/2}]\left(j,:\right)\rV_2\cr
\le &~\theta\left(\lV [\BDeltaLk\BSigmas^{1/2}]\left(i,:\right)\rV_2+\lV [\BDeltaRk\BSigmas^{1/2}]\left(j,:\right)\rV_2\right).
\end{align*}
%Thus,
% \begin{align*}
% \lv\left[\BLn\BRn^*- \G\bzs\right]_{i,j}\rv
% \le \left(1+\frac{c\kappa}{1-\varepsilon_0}\right)\sqrt{\frac{c_s\mu r }{4n}}\left(\lV [\BDeltaLk\BSigmas^{1/2}]\left(i,:\right)\rV_2+\lV [\BDeltaRk\BSigmas^{1/2}]\left(j,:\right)\rV_2\right).
% \end{align*}
where 
\begin{align*}
 \theta&= \max\{\lV \BRn\BSigmas^{-1/2}\rV_{2,\infty}+\lV \BRs\BSigmas^{-1/2}\rV_{2,\infty}, \lV \BLs\BSigmas^{-1/2}\rV_{2,\infty}+ \lV \BLn\BSigmas^{-1/2}\rV_{2,\infty}\}\cr
 &\le \left(1+\frac{c\kappa}{1-\varepsilon_0}\right)\sqrt{c_s\mu r/n}   
\end{align*}
due to \Cref{lemma:boundnorms}. Let $\widehat{\Omega}=\{\hat{a}_t\}_{1\le t\le|\widehat{\Omega}|}$,  denote $\widehat{\Phi}_t:=\{\left(i,j\right):\left(i+j-1\right)= \hat{a}_t\}$ and $\widehat{\Phi}=\cup_{1\le t\le|\widehat{\Omega}|}\widehat{\Phi}_t$. Then, by definition in \eqref{eq:defdelta}, we have
\begin{align*}
&~\lv\l\Pi_{\widehat{\Omega}}\G^{*}\left(\BL_{k} \BR_{k}^*-\G\bzs\right),\G^{*}\left(\BL_{k} \BR_{k}^*-\G\bzs\right)\r \rv
=\sum_{\hat{a}_t\in \widehat{\Omega}}\frac{1}{\varsigma_{\hat{a}_t}}\lv \sum_{\left(i,j\right)\in \widehat{\Phi}_t}[\BLn\BRn^*- \G\bzs]_{i,j}\rv^2\\
 \le& \sum_{\hat{a}_t\in \widehat{\Omega}}\sum_{\left(i,j\right)\in \widehat{\Phi}_t}\lv [\BLn\BRn^*- \G\bzs]^2_{i,j}\rv
 = \sum_{\left(i,j\right)\in \widehat{\Phi}}\lv [\BLn\BRn^*- \G\bzs]^2_{i,j}\rv\\
  \le&~\sum_{\left(i,j\right)\in \widehat{\Phi}}\frac{\theta^2}{4}\left(\lV [\BDeltaLk\BSigmas^{1/2}]\left(i,:\right)\rV_2+\lV [\BDeltaRk\BSigmas^{1/2}]\left(j,:\right)\rV_2\right)^2\cr
  \le&~\sum_{\left(i,j\right)\in \widehat{\Phi}}\frac{\theta^2}{2}\left(\lV [\BDeltaLk\BSigmas^{1/2}]\left(i,:\right)\rV_2^2+\lV [\BDeltaRk\BSigmas^{1/2}]\left(j,:\right)\rV_2^2\right)\cr
   = &~\frac{\theta^2}{2}\Bigg(\sum_i \sum_{1\leq t\leq|\hat{\Omega}|}\chi_{[1,\infty)}(\hat{a}_t+1-i)
 \lV [\BDeltaLk\BSigmas^{1/2}]\left(i,:\right)\rV_2^2+ \sum_j\sum_{1\le t\leq|\hat{\Omega}|}\chi_{[1,\infty)}(\hat{a}_t+1-j)\lV [\BDeltaRk\BSigmas^{1/2}]\left(j,:\right)\rV_2^2\Bigg)\cr
 % =&~\frac{\theta^2}{2}\Bigg(\sum_i \sum_{j:(i,j)\in\hat{\Phi}}
 % \lV [\BDeltaLk\BSigmas^{1/2}]\left(i,:\right)\rV_2^2+ \sum_j\sum_{i:(i,j)\in\hat{\Phi}}\lV [\BDeltaRk\BSigmas^{1/2}]\left(j,:\right)\rV_2^2\Bigg)\cr
    \le &~\frac{\theta^2}{2}\left(\sum_i|\hat{\Omega}|\lV [\BDeltaLk\BSigmas^{1/2}]\left(i,:\right)\rV_2^2+\sum_j|\hat{\Omega}|\lV [\BDeltaRk\BSigmas^{1/2}]\left(j,:\right)\rV_2^2\right)\cr
    \le&~ \frac{\theta^2}{2}\left(|\widehat{\Omega}| \lV \BDeltaLk\BSigmas^{1/2}\rV_\fro^2+|\widehat{\Omega}|\lV \BDeltaRk\BSigmas^{1/2}\rV_\fro^2\right),
\end{align*}
where the first inequality follows from $\lv\widehat{\Phi}_t\rv=\varsigma_{\hat{a}_t}$ and the Cauchy-Schwarz inequality,  the third inequality follows from $(a+b)^2\le 2a^2+2b^2$, and $\chi_{[1,\infty)}$ is the indicator function of $[1,\infty)$. Therefore, by $\sqrt{a^2+b^2}\le \lv a\rv+\lv b \rv$  we conclude the inequality \eqref{ineq:PiGDelta}.
%\begin{align*}
%\lV \P_{\widehat{\Omega}}\left(\BLn \BRn^*-\G\tilde{z}\right) \rV_\fro
%    \le \sqrt{(c_0+1)^2c_s\mu r |\widehat{\Omega}|/(2n)}\left( \lV \BDeltaLk\BSigmas^{1/2}\rV_\fro+\lV \BDeltaRk\BSigmas^{1/2}\rV_\fro\right),
%\end{align*}
%and thus we have the first statement in the lemma by using the fact that
%$$\lV\Pi_{\widehat{\Omega}}\G^{*}\left(\BLn \BRn^*-\G\bzs\right) \rV_2^2= \lv\l\G\G^{*}\P_{\widehat{\Omega}}\left(\BLn \BRn^*-\G\bzs\right),\P_{\widehat{\Omega}}\left(\BLn \BRn^*-\G\bzs\right) \r\rv \le \lV \P_{\widehat{\Omega}}\left(\BLn \BRn^*-\G\tilde{z}\right) \rV_\fro^2.$$
 Similarly, by
\begin{align*}
\lv\l\Pi_{\widehat{\Omega}}\G^{*}\left(\bm{L} \bm{R}^*\right),\G^{*}\left(\bm{L} \bm{R}^*\right)\r\rv
&\le\sum_{\left(i,j\right)\in \widehat{\Phi}}\lv [\bm{L}(i,:) \bm{R}^*(j,:)]^2\rv
\le \sum_{\left(i,j\right)\in \widehat{\Phi}}\lV \bm{L}(i,:)\rV_2^2\lV \bm{R}^*(j,:)\rV_2^2\cr
&\le |\widehat{\Omega}| \min\{\lV\BL\rV_{2,\infty}^2\lV\BR\rV_\fro^2,\lV\BR\rV_{2,\infty}^2\lV\BL\rV_{\fro}^2\},
\end{align*}
we conclude the second statement.
\end{proof}

\begin{lemma}\label{err:sUV}
Let $\bs = \Gamma_{\gamma\alpha }\left(\Pi_{\Omega}\left(\bf-\G^*\left(\BL_{k}\BR_{k}^*\right)\right)\right)$ with $\gamma > 1$. Then, for all $\BL_{\BA}\in \mathbb{C}^{n_1\times r}$, $\BR_{\BA}\in \mathbb{C}^{n_2\times r}$, under event \eqref{eq:projectionRIP1}, \eqref{ineq:uv} and provided $m\ge 6C_{\varepsilon_0,\kappa}^2\varepsilon_0^{-2}c_s^2 \mu^2 r^2\log n$, we have
\begin{align*}
&~\frac{1}{p}\lv\l \G\Pi_\Omega\left(\bs-\bss\right),\BL_{\BA}\BR_{\BA}^* \r \rv \cr
\le&~ \sqrt{n}\xi
\left( \lV [\BDeltaL\BSigmas^{1/2}]\rV_\fro+\lV [\BDeltaR\BSigmas^{1/2}]\rV_\fro\right)\min\{\lV\BL_{\BA}\rV_{2,\infty}\lV\BR_{\BA}\rV_\fro,\lV\BR_{\BA}\rV_{2,\infty}\lV\BL_{\BA}\rV_{\fro}\},
\end{align*}
where $\xi =\left((\gamma+1)\alpha\left(1+\frac{c\kappa}{1-\varepsilon_0}\right)\sqrt{\frac{c_s\mu r }{2}}+(1+3\sqrt{\varepsilon_0})\frac{\sqrt{(\gamma+1)\alpha}}{\sqrt{(\gamma-1)}}\right)$.
\end{lemma}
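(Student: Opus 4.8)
\emph{Overall strategy.} The plan is to pass $\G$ to the other side of the inner product, reduce to a sparse index set, peel off the thresholding error, and then control the ``clean residual'' part deterministically via incoherence while the ``thresholding leftover'' is handled by the probabilistic estimate in \Cref{projectioner:Gdelta}. Write $\bz:=\G^{*}(\BL_{k}\BR_{k}^*)$, so that $\l\G\Pi_\Omega(\bs-\bss),\BL_{\BA}\BR_{\BA}^*\r=\l\Pi_\Omega(\bs-\bss),\G^{*}(\BL_{\BA}\BR_{\BA}^*)\r$. Since $\bs$ retains at most $\gamma\alpha m$ coordinates and $\lV\Pi_\Omega\bss\rV_0\le\alpha m$ by \Cref{amp:sparsity}, the vector $\Pi_\Omega(\bs-\bss)$ is supported on a set $S$ with $|S|\le(\gamma+1)\alpha m=(\gamma+1)\alpha pn$. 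Cauchy--Schwarz on $S$ gives
\[
\tfrac1p\lv\l\Pi_\Omega(\bs-\bss),\G^{*}(\BL_{\BA}\BR_{\BA}^*)\r\rv\le\tfrac1p\lV\Pi_\Omega(\bs-\bss)\rV_2\Big(\sum_{i\in S}\lv[\G^{*}(\BL_{\BA}\BR_{\BA}^*)]_i\rv^2\Big)^{1/2},
\]
and the last factor is at most $\sqrt{|S|}\min\{\lV\BL_{\BA}\rV_{2,\infty}\lV\BR_{\BA}\rV_\fro,\lV\BR_{\BA}\rV_{2,\infty}\lV\BL_{\BA}\rV_\fro\}$ by the second estimate of \Cref{PZUZV-M} with $\widehat\Omega=S$. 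As $\sqrt{|S|}/p\le\sqrt{(\gamma+1)\alpha n/p}$, the $\min\{\cdot,\cdot\}$ factor is already in place and a factor $\sqrt n$ will appear once a $\sqrt p$ is extracted below; it remains to bound $\lV\Pi_\Omega(\bs-\bss)\rV_2$.

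\emph{Deterministic control of the residual.} I split $\bs-\bss$ on $S$ as in the bookkeeping of \Cref{error:Gs}: on $S$ minus the (at most $\alpha m$) coordinates of $\supp(\bss)$ that are not retained, $\bs-\bss$ equals $\bzs-\bz=-\G^{*}(\BL_{k}\BR_{k}^*-\G\bzs)$, whereas on the discarded coordinates its magnitude is at most the thresholding level $\tau$ (the $\gamma\alpha m$-th largest magnitude of $\Pi_\Omega(\bf-\bz)$); hence $\lV\Pi_\Omega(\bs-\bss)\rV_2\le(\sum_{i\in S}\lv[\G^{*}(\BL_{k}\BR_{k}^*-\G\bzs)]_i\rv^2)^{1/2}+\sqrt{\alpha m}\,\tau$. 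For the residual term I reuse the per-entry bound that already underlies \Cref{PZUZV-M}: with $\bm{M}:=\BL_{k}\BR_{k}^*-\G\bzs=\BDeltaLk\BRn^*+\BLs\BDeltaRk^*$, the entry $[\G^{*}\bm{M}]_i$ is $\varsigma_i^{-1/2}$ times an antidiagonal sum, so Cauchy--Schwarz over its $\varsigma_i$ terms together with $2\lv M_{ab}\rv\le\theta(\lV[\BDeltaLk\BSigmas^{1/2}](a,:)\rV_2+\lV[\BDeltaRk\BSigmas^{1/2}](b,:)\rV_2)$, $\theta=(1+\tfrac{c\kappa}{1-\varepsilon_0})\sqrt{c_s\mu r/n}$, give $\lv[\G^{*}\bm{M}]_i\rv^2\le\tfrac{\theta^2}{2}\sum_{a+b=i+1}(\lV[\BDeltaLk\BSigmas^{1/2}](a,:)\rV_2^2+\lV[\BDeltaRk\BSigmas^{1/2}](b,:)\rV_2^2)$; summing over the $\le(\gamma+1)\alpha m$ antidiagonals in $S$ and reorganizing the double sum yields $(\sum_{i\in S}\lv[\G^{*}\bm{M}]_i\rv^2)^{1/2}\le\tfrac{\theta}{\sqrt2}\sqrt{(\gamma+1)\alpha pn}\,(\lV\BDeltaLk\BSigmas^{1/2}\rV_\fro+\lV\BDeltaRk\BSigmas^{1/2}\rV_\fro)$, which after multiplication by $\sqrt{|S|}/p$ reproduces the first summand of $\sqrt n\,\xi$.

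\emph{Probabilistic control of the threshold.} For $\tau$ I combine a counting argument with \Cref{projectioner:Gdelta}. Among the $\gamma\alpha m$ retained coordinates at most $\alpha m$ lie in $\supp(\Pi_\Omega\bss)$, so at least $(\gamma-1)\alpha m$ of them avoid $\supp(\bss)$; on those, $\bf-\bz=\bzs-\bz$ and the magnitude is $\ge\tau$, whence $(\gamma-1)\alpha m\,\tau^2\le\l\Pi_\Omega(\bzs-\bz),\bzs-\bz\r=\l\Pi_\Omega\G^{*}(\BL_{k}\BR_{k}^*-\G\bzs),\G^{*}(\BL_{k}\BR_{k}^*-\G\bzs)\r\le p(1+3\sqrt{\varepsilon_0})^2(\lV\BDeltaLk\BSigmas^{1/2}\rV_\fro+\lV\BDeltaRk\BSigmas^{1/2}\rV_\fro)^2$ by \Cref{projectioner:Gdelta} (whose hypotheses --- events \eqref{eq:projectionRIP1} and \eqref{ineq:uv}, and $m\ge 6C_{\varepsilon_0,\kappa}^2\varepsilon_0^{-2}c_s^2\mu^2r^2\log n$ --- are precisely those assumed in the present lemma). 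Thus $\sqrt{\alpha m}\,\tau\le\sqrt{p/(\gamma-1)}\,(1+3\sqrt{\varepsilon_0})(\lV\BDeltaLk\BSigmas^{1/2}\rV_\fro+\lV\BDeltaRk\BSigmas^{1/2}\rV_\fro)$, and multiplying by $\sqrt{|S|}/p$ reproduces the second summand of $\sqrt n\,\xi$. Adding the two contributions yields the claim.

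\emph{Main obstacle.} The crux is choosing the split so that the \emph{bulk} of the residual is absorbed by the deterministic incoherence estimate on the sparse set $S$ --- this is what produces the $\kappa$-dependent first term of $\xi$ --- while only the much smaller thresholding leftover invokes the probabilistic RIP bound of \Cref{projectioner:Gdelta}, producing the clean $\kappa$-free second term whose $1/\sqrt{\gamma-1}$ is exactly why $\gamma$ must stay bounded away from $1$. The remaining care lies in the antidiagonal double-counting in the deterministic step (where the Hankel weights $\varsigma_i$ genuinely matter) and in the with-replacement multiplicities of $\Pi_\Omega$, which contribute only lower-order factors but must be carried through the Cauchy--Schwarz estimates.
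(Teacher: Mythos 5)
Your proposal is correct and follows essentially the same route as the paper's proof: the same split of the $(\gamma+1)\alpha m$-sparse support into retained coordinates (where $\bs-\bss$ equals the Hankel fit residual) and missed outlier coordinates (bounded by the threshold via the $(\gamma-1)\alpha m$ counting argument), with the residual on the sparse set controlled deterministically by \Cref{PZUZV-M} and the threshold by \Cref{projectioner:Gdelta}, yielding the identical constant $\xi$. The only difference is bookkeeping — you separate the $\BL_{\BA}\BR_{\BA}^*$ factor by one global Cauchy--Schwarz and a triangle inequality on $\lV\Pi_\Omega(\bs-\bss)\rV_2$, whereas the paper splits the pairing termwise and optimizes a Young-inequality parameter $\beta$ — and, like the paper, you treat the with-replacement multiplicities of $\Pi_\Omega$ only informally.
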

\begin{proof}
Denote $\BDelta_k: = \BL_{k}\BR_{k}^*- \G\bzs$, and $\Omega^{s}:=\supp\left(\Pi_\Omega\bs\right)$, $\Omega^{s_{\star}}:= \supp\left(\Pi_\Omega\bss\right)$. By the definition of $\bs$, we have
\begin{align}\label{iins}
\lv\left[\Pi_{\Omega}\bss-\bs\right]_i\rv
=\lv\left[\Pi_{\Omega}\left(\bzs-\G^*\left(\BL_{k}\BR_{k}^*\right)\right)\right]_i\rv=\lv\left[\Pi_{\Omega}(\G^*\BDelta_k)\right]_i\rv,~\text{for}~ i\in \Omega^{s},
\end{align}
where the last equation follows from $\G^*\G=\I$. For $i\in \Omega^{s_{\star}}\setminus\Omega^{s}$, by
$$
\bs
= \Gamma_{\gamma\alpha }\left(\Pi_{\Omega}\left(\bss-\G^*\left(\BL_{k}\BR_{k}^*-\G\bzs \right)\right)\right)
=\Gamma_{\gamma\alpha }\left(\Pi_{\Omega}\left(\bss-\G^*\BDelta_k\right)\right),
$$
we know $\lv\left[\Pi_{\Omega}\left(\bss-\G^*\BDelta_k\right)\right]_{i}\rv$ is smaller than the $\gamma \alpha m$-th largest-in-magnitude entry of $\lv\Pi_{\Omega}\left(\bss-\G^*\BDelta_k\right)\rv$. Noticing $\Pi_{\Omega}\bss$ is at most $\alpha m$ sparse, thus $\lv\left[\Pi_{\Omega}\left(\bss-\G^*\BDelta_k\right)\right]_{i}\rv$ is smaller than the $(\gamma-1) \alpha m$-th largest entry of $\lv\Pi_{\Omega}\left(\G^*\BDelta_k\right)\rv$, and $\lv\left[\Pi_{\Omega}\left(\bss-\G^*\BDelta_k\right)\right]_{i}\rv^2 \le {\frac{\lv\l \Pi_{\Omega}\left(\G^*\BDelta_k\right),\G^*\BDelta_k\r\rv}{(\gamma-1)\alpha m}}$. Then, we have
\begin{equation}\label{iinsc}
\lv\left[\Pi_{\Omega}\bss\right]_{i}\rv \le {\frac{\lv\l \Pi_{\Omega}\left(\G^*\BDelta_k\right),\G^*\BDelta_k\r\rv^{1/2}}{\sqrt{(\gamma-1)\alpha m}}}+\lv\left[\Pi_{\Omega}\left(\G^*\BDelta_k\right)\right]_{i}\rv,~\text{for}~ i\in \Omega^{s_{\star}}\setminus\Omega^{s}.
\end{equation}
Hence,
\begin{align*}
&~\lv\l \G\Pi_\Omega\left(\bs-\bss\right),\BL_{\BA}\BR_{\BA}^* \r \rv=\lv\l \Pi_\Omega\left(\bs-\bss\right),\G^*(\BL_{\BA}\BR_{\BA}^*) \r \rv\cr
\le&~\sum_{i\in \Omega^{s}}\lv\left[\Pi_\Omega\left(\bs-\bss\right)\right]_i\rv\lv \left[\G^*(\BL_{\BA}\BR_{\BA}^*)\right]_{i} \rv+\sum_{i\in \Omega^{s_{\star}}\setminus\Omega^{s}}\lv\left[\Pi_\Omega\left(\bss\right)\right]_i\rv\lv \left[\G^*(\BL_{\BA}\BR_{\BA}^*)\right]_{i} \rv\cr
\le&~\sum_{i\in\Omega^{s}\cup \Omega^{s_{\star}}}\lv\left[\Pi_{\Omega}\left(\G^*\BDelta_k\right)\right]_{i}\rv\lv \left[\G^*(\BL_{\BA}\BR_{\BA}^*)\right]_{i} \rv +\sum_{i\in \Omega^{s_{\star}}\setminus\Omega^{s}}{\frac{\lv\l \Pi_{\Omega}\left(\G^*\BDelta_k\right),\G^*\BDelta_k\r\rv^{\frac{1}{2}}}{\sqrt{(\gamma-1)\alpha m}}\lv \left[\G^*(\BL_{\BA}\BR_{\BA}^*)\right]_{i} \rv}\cr
%\le&~ \sum_{i\in\Omega^{s}\cup \Omega^{s_{\star}}}\lv\left[\Pi_{\Omega}\left(\G^*\BDelta_k\right)\right]_{i}\rv\lv \left[\G^*(\BL_{\BA}\BR_{\BA}^*)\right]_{i} \rv\cr
%&~+\sum_{i\in \Omega^{s_{\star}}\setminus\Omega^{s}}\left(\frac{1}{2\beta}\frac{\lv\l \Pi_{\Omega}\left(\G^*\BDelta_k\right),\G^*\BDelta_k\r\rv}{(\gamma-1)\alpha m}+\frac{\beta}{2}\lv\left[\G^*(\BL_{\BA}\BR_{\BA}^*)\right]_{i}\rv^2 \right)\cr
\le &~ \lv\l\Pi_{\Omega^{s}\cup \Omega^{s_{\star}}}\left(\G^*\BDelta_k\right),\G^*\BDelta_k\r\rv^{\frac{1}{2}}\lv\l \Pi_{\Omega^{s}\cup \Omega^{s_{\star}}}\G^*(\BL_{\BA}\BR_{\BA}^*),\G^*(\BL_{\BA}\BR_{\BA}^*)\r\rv^{\frac{1}{2}}\cr
&~+\frac{\lv\l \Pi_{\Omega}\left(\G^*\BDelta_k\right),\G^*\BDelta_k\r\rv}{2\beta(\gamma-1)}+\frac{\beta}{2}\lv\l \Pi_{\Omega^{s}\cup \Omega^{s_{\star}}}\G^*(\BL_{\BA}\BR_{\BA}^*),\G^*(\BL_{\BA}\BR_{\BA}^*)\r\rv \cr
\le &~(1+\frac{c\kappa}{1-\varepsilon_0}) \sqrt{\frac{c_s\mu r }{2n}}(\gamma+1)\alpha m\left( \lV \BDeltaLk\BSigmas^{1/2}\rV_\fro+\lV \BDeltaRk\BSigmas^{1/2}\rV_\fro\right)\cr
&~\times\min\{\lV\BL_{\BA}\rV_{2,\infty}\lV\BR_{\BA}\rV_\fro,\lV\BR_{\BA}\rV_{2,\infty}\lV\BL_{\BA}\rV_{\fro}\}
+\frac{\lv\l\Pi_{\Omega}\G^*(\BDelta_k),\G^*(\BDelta_k)\r\rv}{2\beta(\gamma-1)}\cr
&~+\frac{\beta(\gamma+1)\alpha m}{2}
\min\{\lV\BL_{\BA}\rV_{2,\infty}^2\lV\BR_{\BA}\rV_\fro^2,\lV\BR_{\BA}\rV_{2,\infty}^2\lV\BL_{\BA}\rV_{\fro}^2\},
\end{align*}
where the third inequality follows form $ab\le \frac{\beta}{2}a^2+\frac{b^2}{2\beta}$ for any $\beta>0$, and the Cauchy-Schwarz inequality, the fourth inequality follows from \Cref{PZUZV-M}. Letting $$\beta =\frac{\lv\l\Pi_{\Omega}\G^*(\BDelta_k),\G^*(\BDelta_k)\r\rv^{\frac{1}{2}}}{(\sqrt{(\gamma-1)(\gamma+1)\alpha m} \min\{\lV\BL_{\BA}\rV_{2,\infty}\lV\BR_{\BA}\rV_\fro,\lV\BR_{\BA}\rV_{2,\infty}\lV\BL_{\BA}\rV_{\fro}\})},$$
 we then have
\begin{align*}
&~\lv\l \G\Pi_\Omega\left(\bs-\bss\right),\BL_{\BA}\BR_{\BA}^* \r \rv\cr
\le 
&~ \Big[\Big(1+\frac{c\kappa}{1-\varepsilon_0}\Big)\sqrt{\frac{c_s\mu r }{2n}}(\gamma+1)\alpha m\left( \lV \BDeltaLk\BSigmas^{1/2}\rV_\fro+\lV \BDeltaRk\BSigmas^{1/2}\rV_\fro\right)+\cr
&~\frac{\sqrt{(\gamma+1)\alpha m}}{\sqrt{(\gamma-1)}}\lv\l\Pi_{\Omega}\G^*(\BDelta_k),\G^*(\BDelta_k)\r\rv^{\frac{1}{2}}\Big]\min\{\lV\BL_{\BA}\rV_{2,\infty}\lV\BR_{\BA}\rV_\fro,\lV\BR_{\BA}\rV_{2,\infty}\lV\BL_{\BA}\rV_{\fro}\}\cr
\le &~ p\sqrt{n}\xi\left( \lV \BDeltaLk\BSigmas^{1/2}\rV_\fro+\lV \BDeltaRk\BSigmas^{1/2}\rV_\fro\right)\min\{\lV\BL_{\BA}\rV_{2,\infty}\lV\BR_{\BA}\rV_\fro,\lV\BR_{\BA}\rV_{2,\infty}\lV\BL_{\BA}\rV_{\fro}\},
\end{align*}
where $\xi = \left((\gamma+1)\alpha\left(1+\frac{c\kappa}{1-\varepsilon_0}\right)\sqrt{\frac{c_s\mu r }{2}}+(1+3\sqrt{\varepsilon_0})\frac{\sqrt{(\gamma+1)\alpha}}{\sqrt{(\gamma-1)}}\right)$ and the last inequality follows from \Cref{projectioner:Gdelta}.
%Let $\BQ$ be the best rotation matrix that aligns $(\BL, \BR)$ and $(\BLs,\BRs)$, and denote
%$$\BDeltaLk := \BL\BQ - \BLs,~\BDeltaRk := \BR\BQ^{-*} - \BRs. $$
\end{proof}

\subsubsection{Contraction Property in Full Observation} \label{subsec:prooflocalcon_full}

\begin{proof}[Proof of \Cref{thm:convergence_full}]
For the case of full observation $m=n$ ($p=1$), we show the contraction property for one step update of ${\BL}_{k}$, ${\BR}_{k}$ in terms of the metric $d_k$. Denote
\begin{align*}
\widetilde{\BL}_{k+1}&:=\BL_{k}-\eta\nabla_{\BL} \ell(\BL_{k},\BR_{k};\bs_{k+1})\left(\BR^{*}_k\BR_k\right)^{-1}\cr
&=\BL_{k}-\eta \G\left(\bs_{k+1}-\bf\right)\BR_{k}\left(\BR^{*}_{k}\BR_{k}\right)^{-1}  - \eta\BL_{k}.
\end{align*}
Then, we have
\begin{align*}
&~\left(\widetilde{\BL}_{k+1}\BQ_{k}-\BLs\right)\BSigmas^{1/2}
=\left[\BDeltaLk-\eta \G\big[\bs_{k+1}-\bzs-\bss\big]\BRn\left(\BRn^{*}\BRn\right)^{-1}  - \eta\BLn\right]\BSigmas^{1/2}\cr
%=&\Big[\BDeltaLk-\eta \big[\frac{1}{p}\G\Pi_\Omega\left(\G^{*}\left(\BLn\BRn^{*}\right)-\bzs\right) +\frac{1}{p}\G\Pi_\Omega\left(\bs_{k+1}-\bss\right)
%-\G\G^{*}(\BLn\BRn^{*})\big]\BRn\left(\BRn^{*}\BRn\right)^{-1}  \cr
%&~- \eta\BLn\Big]\BSigmas^{1/2}\cr
%=&\Big[\BDeltaLk-\eta \big[-\BLs\BRs^{*}+\G\left(\bs_{k+1}-\bss\right)\big]\BRn\left(\BRn^{*}\BRn\right)^{-1}  - \eta\BLn\Big]\BSigmas^{1/2}\cr
=&~\BDeltaLk\BSigmas^{1/2}-\eta \big[\G\left(\bs_{k+1}-\bss\right)+\left(\BLn\BRn^{*}-\BLs\BRs^{*}\right)\big]\BRn\left(\BRn^{*}\BRn\right)^{-1}\BSigmas^{1/2}\cr
%=&(1-\eta)\BDeltaLk\BSigmas^{1/2}-\eta [\G\left(\bs_{k+1}-\bss\right)]\BRn\left(\BRn^{*}\BRn\right)^{-1}\BSigmas^{1/2} -\eta\BLs\BDeltaRk^{*}\BRn\left(\BRn^{*}\BRn\right)^{-1}\BSigmas^{1/2}\cr
=&~\underbrace{(1-\eta)\BDeltaLk\BSigmas^{1/2}- \eta\BLs\BDeltaRk^{*}\BRn\left(\BRn^{*}\BRn\right)^{-1}\BSigmas^{1/2}}_{\bm{T}_1}
\cr
&~-\underbrace{\eta [\G\left(\bs_{k+1}-\bss\right)]\BRn\left(\BRn^{*}\BRn\right)^{-1}\BSigmas^{1/2}}_{\bm{T}_2},
\end{align*}
where in the third equation we have used the decomposition 
$\BLn\BRn^{*}-\BLs\BRs^{*}= \BDeltaLk\BRn^{*}+\BLs\BDeltaRk^{*}=\BDeltaLk\BRs^{*}+\BLs\BDeltaRk^{*}+\BDeltaLk\BDeltaRk^{*}$. Noticing that
\begin{equation}\label{eqn:dec23terms}
\lV\left(\widetilde{\BL}_{k+1}\BQ_{k}-\BLs\right)\BSigmas^{1/2}\rV_{\fro}^2
=\lV\bm{T}_1\rV_{\fro}^2
+\lV\bm{T}_2\rV_{\fro}^2-2\Re\l\bm{T}_1,\bm{T}_2\r,
\end{equation}
 and we have the following claims, whose proof is deferred later in this subsection:
 \begin{claim}\label{claim:Tboundfull}
For $(\BL_{k},\BR_{k})\in \mathcal{E}(\varepsilon_0 \sigmas_r,c \sqrt{\frac{c_s \mu r}{n}}\sigmas_1)$, we have
 \begin{align*}
  \lV \bm{T}_1\rV_{\fro}^2
\le &~ \left((1-\eta)^2+\frac{2\varepsilon_0}{1-\varepsilon_0}\eta(1-\eta)+\frac{2\varepsilon_0+\varepsilon_0^2}{(1-\varepsilon_0)^2}\eta^2\right)d_k^2 ,\cr
\lv\l\bm{T}_1,\bm{T}_2\r\rv
\le  & ~ \frac{(\eta-\eta^2)c\kappa\sqrt{c_s\mu r}\xi}{2(1-\varepsilon_0)^3}
\left(3\lV \BDeltaLk\BSigmas^{1/2}\rV_\fro^2+\lV \BDeltaRk\BSigmas^{1/2}\rV_\fro^2\right)\cr
&~+\frac{\eta^2\sqrt{c_s\mu r}\xi}{2(1-\varepsilon_0)^2}\left(\lV \BDeltaLk\BSigmas^{1/2}\rV_\fro^2+3\lV \BDeltaRk\BSigmas^{1/2}\rV_\fro^2\right), \cr
\lV\bm{T}_2\rV_{\fro}
\le&~ \frac{2\eta c\kappa\sqrt{c_s\mu r}\xi}{(1-\varepsilon_0)^3}\left( \lV \BDeltaLk\BSigmas^{1/2}\rV_\fro+\lV \BDeltaRk\BSigmas^{1/2}\rV_\fro\right).
 \end{align*}
 \end{claim}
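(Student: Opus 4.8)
The plan is to establish the three inequalities in turn. For $\lV\bm{T}_1\rV_\fro^2$ the work is a careful Frobenius-norm expansion in which the only nontrivial point is the cross term; the bounds on $\lV\bm{T}_2\rV_\fro$ and $\l\bm{T}_1,\bm{T}_2\r$ are instead obtained by massaging them into inner products of the form $\l\G(\bs_{k+1}-\bss),\BL_{\BA}\BR_{\BA}^*\r$ and invoking \Cref{err:sUV}. Note that under full observation $\bs_{k+1}=\Gamma_{\gamma_k\alpha}\big(\Pi_\Omega(\bf-\G^{*}(\BL_k\BR_k^{*}))\big)$ with $\gamma_k>1$, which is exactly the hypothesis of \Cref{err:sUV} specialized to $p=1$ (and the events \eqref{eq:projectionRIP1} and \eqref{ineq:uv} hold trivially when $\Omega=[n]$).

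For $\lV\bm{T}_1\rV_\fro^2$ I would expand it as the sum of the three pieces $(1-\eta)^2\lV\BDeltaLk\BSigmas^{1/2}\rV_\fro^2$, the term $\eta^2\lV\BLs\BDeltaRk^{*}\BRn(\BRn^{*}\BRn)^{-1}\BSigmas^{1/2}\rV_\fro^2$, and the cross term $-2(1-\eta)\eta\,\Re\l\BDeltaLk\BSigmas^{1/2},\,\BLs\BDeltaRk^{*}\BRn(\BRn^{*}\BRn)^{-1}\BSigmas^{1/2}\r$. The middle piece is at most $(1-\varepsilon_0)^{-2}\lV\BDeltaRk\BSigmas^{1/2}\rV_\fro^2$ using $\BLs^{*}\BLs=\BSigmas$ together with \eqref{boundnorm:d}. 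For the cross term, the crucial input is the first-order optimality of the best alignment $\BQ_k$ for the metric \eqref{def-error}, which yields the identity $\BLn^{*}\BDeltaLk\BSigmas=\BSigmas\BDeltaRk^{*}\BRn$ (equivalently $\BSigmas\BDeltaLk^{*}\BLn=\BRn^{*}\BDeltaRk\BSigmas$); writing $\BLs=\BLn-\BDeltaLk$ inside the cross term and substituting this identity, the leading part collapses to $\Re\tr\big(\bm{P}\,\BDeltaRk\BSigmas\BDeltaRk^{*}\big)=\lV\bm{P}\BDeltaRk\BSigmas^{1/2}\rV_\fro^2\ge 0$, with $\bm{P}$ the orthogonal projector onto the column space of $\BRn$; this term carries a favorable sign in $\lV\bm{T}_1\rV_\fro^2$ since it is multiplied by $-2(1-\eta)\eta\le 0$, while the remaining part carries a factor $\lV\BDeltaLk\BSigmas^{-1/2}\rV_2\le\varepsilon_0$ from \eqref{boundnorm:a}. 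Collecting the three contributions, writing $d_k^2=\lV\BDeltaLk\BSigmas^{1/2}\rV_\fro^2+\lV\BDeltaRk\BSigmas^{1/2}\rV_\fro^2$, and converting the surviving cross products $\lV\BDeltaLk\BSigmas^{1/2}\rV_\fro\lV\BDeltaRk\BSigmas^{1/2}\rV_\fro$ into sums of squares via $2ab\le a^2+b^2$ then yields a bound of the stated form.

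For $\lV\bm{T}_2\rV_\fro$ I would use the self-bounding identity $\lV\bm{T}_2\rV_\fro^2=\eta\,\Re\l\G(\bs_{k+1}-\bss),\,\bm{T}_2\,\BSigmas^{1/2}(\BRn^{*}\BRn)^{-1}\BRn^{*}\r$, apply \Cref{err:sUV} with $\BL_{\BA}=\bm{T}_2$ and $\BR_{\BA}=\BRn(\BRn^{*}\BRn)^{-1}\BSigmas^{1/2}$, and then cancel one factor of $\lV\bm{T}_2\rV_\fro$; here $\lV\BR_{\BA}\rV_{2,\infty}\le\lV\BRn\BSigmas^{-1/2}\rV_{2,\infty}\,\lV\BSigmas^{1/2}(\BRn^{*}\BRn)^{-1}\BSigmas^{1/2}\rV_2$ is controlled by \eqref{boundnorm:c} and $(1-\varepsilon_0)^{-2}$. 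For $\l\bm{T}_1,\bm{T}_2\r$ I would split $\bm{T}_1$ into its two summands. For the $(1-\eta)\BDeltaLk\BSigmas^{1/2}$ summand, pushing the right-hand factors across the inner product turns it into $\l\G(\bs_{k+1}-\bss),\BDeltaLk\BSigmas(\BRn^{*}\BRn)^{-1}\BRn^{*}\r$, to which \Cref{err:sUV} applies with $\BL_{\BA}=\BDeltaLk\BSigmas^{1/2}$, $\BR_{\BA}=\BRn(\BRn^{*}\BRn)^{-1}\BSigmas^{1/2}$; for the $\eta\BLs\BDeltaRk^{*}\BRn(\BRn^{*}\BRn)^{-1}\BSigmas^{1/2}$ summand, I would factor out $\BLs=\BU_{\star}\BSigmas^{1/2}$ and use incoherence $\lV\BU_{\star}\rV_{2,\infty}\le\sqrt{\mu c_s r/n}$ on the $\lV\cdot\rV_{2,\infty}$ side while absorbing the rest into $\BR_{\BA}$. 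In each case \Cref{lemma:boundnorms} bounds the remaining $\lV\cdot\rV_2$ and $\lV\cdot\rV_\fro$ factors, and $ab\le\tfrac12(a^2+b^2)$ converts the products $\lV\BDeltaLk\BSigmas^{1/2}\rV_\fro\lV\BDeltaRk\BSigmas^{1/2}\rV_\fro$ into the weighted sums of squares that appear in the claim.

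The main obstacle is the cross term in $\lV\bm{T}_1\rV_\fro^2$: a crude Cauchy--Schwarz estimate there is too lossy, so one must genuinely exploit the optimality of $\BQ_k$ and the cancellation it produces, and carefully separate the $O(1)$ contribution (which has to pair with $(1-\eta)^2\lV\BDeltaLk\BSigmas^{1/2}\rV_\fro^2$ to reconstitute $(1-\eta)^2 d_k^2$) from the genuinely $O(\varepsilon_0)$ remainder. A secondary, purely bookkeeping difficulty is to pick, for each of the several inner products fed into \Cref{err:sUV}, the branch of the $\min\{\cdot,\cdot\}$ that keeps the $\kappa$- and $(1-\varepsilon_0)$-dependence consistent with the stated constants.
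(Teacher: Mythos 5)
Your handling of $\lV\bm{T}_2\rV_\fro$ and of $\l\bm{T}_1,\bm{T}_2\r$ is essentially the paper's argument: the variational (duality) representation of the Frobenius norm for $\bm{T}_2$, splitting $\bm{T}_1$ into its two summands, and feeding the resulting rank-$r$ products into \Cref{err:sUV} with choices of $\BL_{\BA},\BR_{\BA}$ that agree with the paper's $\BL_{\BA_1},\BR_{\BA_1},\BL_{\BA_2},\BR_{\BA_2}$ up to where the $\BSigmas$ factors are grouped, followed by \Cref{lemma:boundnorms} and $2ab\le a^2+b^2$; your observation that \Cref{err:sUV} applies verbatim with $p=1$ is also how the paper proceeds.

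The gap is in the $\lV\bm{T}_1\rV_\fro^2$ bound, which the paper only proves by citing the computation of Tong et al.\ (their Eq.~(46)). Your key ingredient --- the optimal-alignment identity $\BLn^{*}\BDeltaLk\BSigmas=\BSigmas\BDeltaRk^{*}\BRn$ and the collapse of the leading cross piece to $\lV\bm{P}\BDeltaRk\BSigmas^{1/2}\rV_\fro^2\ge 0$ with $\bm{P}:=\BRn(\BRn^{*}\BRn)^{-1}\BRn^{*}$ --- is correct and is indeed the crux of the cited argument. But your assembly does not yield the stated inequality: you bound the $\eta^2$ square term by $\frac{\eta^2}{(1-\varepsilon_0)^2}\lV\BDeltaRk\BSigmas^{1/2}\rV_\fro^2$ and then discard the nonpositive piece $-2\eta(1-\eta)\lV\bm{P}\BDeltaRk\BSigmas^{1/2}\rV_\fro^2$ as merely ``favorable''. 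What survives has an $\eta^2$-coefficient of order one in front of $\lV\BDeltaRk\BSigmas^{1/2}\rV_\fro^2$ (take $\BDeltaLk=0$ to see this), whereas the claim demands the $O(\varepsilon_0)$ coefficient $\frac{2\varepsilon_0+\varepsilon_0^2}{(1-\varepsilon_0)^2}\eta^2$; no rearrangement via $2ab\le a^2+b^2$ closes that gap. The two terms must be kept together: since $\bm{P}\BRn=\BRn$ and $\BLs^{*}\BLs=\BSigmas$, one has $\lV\BLs\BDeltaRk^{*}\BRn(\BRn^{*}\BRn)^{-1}\BSigmas^{1/2}\rV_\fro\le\frac{1}{1-\varepsilon_0}\lV\bm{P}\BDeltaRk\BSigmas^{1/2}\rV_\fro$ by \eqref{boundnorm:d}, so the $\eta^2$ term is controlled by the very quantity the cross term subtracts, and the combined coefficient $\frac{\eta^2}{(1-\varepsilon_0)^2}-2\eta(1-\eta)$ is nonpositive (up to $O(\varepsilon_0)\eta^2$) exactly when $\eta\lesssim 2/3$, the step-size regime of the ScaledGD computation the paper invokes.

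This also shows the route of ``drop the negative term and absorb the loss elsewhere'' cannot be patched for all $\eta\in(0,1]$: in the rank-one instance $\BLs=\BRs=1$, $\BL_k=1$, $\BR_k=1+\delta$ with small $\delta$ (so $\BDeltaLk=\BDeltaRk\approx\delta/2$ and $d_k^2\approx\delta^2/2$), at $\eta=1$ one computes $\lV\bm{T}_1\rV_\fro^2\approx\delta^2/4\approx d_k^2/2$, far above $\frac{2\varepsilon_0+\varepsilon_0^2}{(1-\varepsilon_0)^2}d_k^2$ for $\varepsilon_0=0.01$. So the first inequality of \Cref{claim:Tboundfull} is only attainable with the projector pairing described above together with a step-size restriction of the type $\eta\le 2/3$; your closing remark locates the needed cancellation against $(1-\eta)^2\lV\BDeltaLk\BSigmas^{1/2}\rV_\fro^2$, but that is not where it happens --- it is the $\eta^2$ square term and the $-2\eta(1-\eta)\lV\bm{P}\BDeltaRk\BSigmas^{1/2}\rV_\fro^2$ term that must be balanced against each other.
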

Recall that $\xi =\left((\gamma+1)\alpha\left(1+\frac{c\kappa}{1-\varepsilon_0}\right)\sqrt{\frac{c_s\mu r }{2}}+(1+3\sqrt{\varepsilon_0})\frac{\sqrt{(\gamma+1)\alpha}}{\sqrt{(\gamma-1)}}\right)$. Therefore,  provided $\alpha \le \frac{\varepsilon^2_0(\gamma-1)}{64(\gamma+1)(1+c\kappa)^2c_s\mu r}$, by substituting the above three inequalities in \Cref{claim:Tboundfull} to \eqref{eqn:dec23terms}, we have
\begin{align*}
&\quad\lV\left(\widetilde{\BL}_{k+1}\BQ_{k}-\BLs\right)\BSigmas^{1/2}\rV_{\fro}^2 \cr
&\le C_1(\varepsilon_0,\eta)d_k^2+ C_2(\varepsilon_0,\eta)\lV \BDeltaLk\BSigmas^{1/2}\rV_\fro^2+ C_3(\varepsilon_0,\eta)\lV \BDeltaRk\BSigmas^{1/2}\rV_\fro^2\cr
&\le \left(1-0.6\eta\right)^2d_k^2,\quad \text{for}~ \varepsilon_0=0.01,~ 0< \eta \le 1,
\end{align*}
where we have used the Cauchy-Schwarz inequality and 
and the constants satisfy  $C_1(\varepsilon_0,\eta)\le (1-\eta)^2+\frac{2\varepsilon_0}{1-\varepsilon_0}\eta(1-\eta)+\frac{8\eta^2 \varepsilon_0^2}{(1-\varepsilon_0)^6}$,  $C_2(\varepsilon_0,\eta)\le \frac{3\eta\varepsilon_0-2\eta^2 \varepsilon_0-\eta^2\varepsilon_0^2}{2(1-\varepsilon_0)^3}$,
$C_3(\varepsilon_0,\eta)\le\frac{\eta\varepsilon_0+2\eta^2\varepsilon_0-3\eta^2\varepsilon_0^2}{2(1-\varepsilon_0)^3}$. Similarly, let $\widetilde{\BR}_{k+1}:=\BR_{k}-\eta\nabla_{\BR} \ell(\BL_{k},\BR_{k};\bs_{k+1})\left(\BR^{*}_k\BR_k\right)^{-1},$
we can show $
\lV\left(\widetilde{\BR}_{k+1}\BQ_{k}^{-*}-\BRs\right)\BSigmas^{1/2}\rV_{\fro}^2\le (1-0.6\eta)^2 d_k^2$.
Thus we have $$\mathrm{dist}\left(\widetilde{\BL}_{k+1},\widetilde{\BR}_{k+1};\BLs,\BRs\right) \le \left(1-0.6\eta\right)d_k.$$
Then, \Cref{PC:Non-expansiveness} gives 
$ d_{k+1}\le (1-0.6\eta)d_k$ and $(\BL_{k+1},\BR_{k+1})\in \mathcal{E}(\varepsilon_0 \sigmas_r,c \sqrt{\frac{c_s \mu r}{n}}\sigmas_1)$. Finally, we will complete the proof by proving \Cref{claim:Tboundfull}. 
\end{proof}
\begin{proof}[Proof of \Cref{claim:Tboundfull}]
The upper bound of $\lV\bm{T}_1\rV_{\fro}$ follows the argument as in {\cite[Eq.~(46)]{tong2021accelerating}}. For $\lv\l\bm{T}_1,\bm{T}_2\r\rv$, we denote
\begin{equation}\label{def:LA1}
\begin{split}
\BL_{\BA_1}&:= \BDeltaLk\BSigmas\left(\BRn^{*}\BRn\right)^{-1}\BSigmas^{1/2},~\BR_{\BA_1}:= \BRn\BSigmas^{-1/2},\cr
\BL_{\BA_2}&:= \BLs\BSigmas^{-1/2}=\BU_{\star}, ~\BR_{\BA_2}:=\BRn\left(\BRn^{*}\BRn\right)^{-1}\BSigmas\left(\BRn^{*}\BRn\right)^{-1}\BRn^*\BDeltaRk\BSigmas^{1/2}.
\end{split}
\end{equation}
Then, by applying \Cref{err:sUV} with $m=n$ we have
\begin{align*}
&~\lv\l\bm{T}_1,\bm{T}_2\r\rv
= \Big\lvert\eta(1-\eta) \tr\left([\G\left(\bs_{k+1}-\bss\right)]\BRn\left(\BRn^{*}\BRn\right)^{-1}\BSigmas\BDeltaLk^*\right) \cr
&~\qquad\quad -\eta^2 \tr\left([\G\left(\bs_{k+1}-\bss\right)]\BRn\left(\BRn^{*}\BRn\right)^{-1}\BSigmas\left(\BRn^{*}\BRn\right)^{-1}\BRn^*\BDeltaRk\BLs^{*}\right) \Big\rvert\cr
\le &~\eta(1-\eta) \lv\l\G\left(\bs_{k+1}-\bss\right),\BL_{\BA_1}\BR_{\BA_1}^*\r\rv
+\eta^2 \lv\l\G\left(\bs_{k+1}-\bss\right),\BL_{\BA_2}\BR_{\BA_2}^*\r \rv \cr
\le & ~ \sqrt{n}\xi
\left( \lV \BDeltaLk\BSigmas^{1/2}\rV_\fro+\lV \BDeltaRk\BSigmas^{1/2}\rV_\fro\right)\times\cr
&~\left((\eta-\eta^2)\lV\BR_{\BA_1}\rV_{2,\infty}\lV\BL_{\BA_1}\rV_\fro+\eta^2\lV\BL_{\BA_2}\rV_{2,\infty}\lV\BR_{\BA_2}\rV_\fro\right),
%\le  & ~ \frac{(\eta-\eta^2)c\kappa\sqrt{c_s\mu r}\xi}{2(1-\varepsilon_0)^3}
%\left(3\lV \BDeltaLk\BSigmas^{1/2}\rV_\fro^2+\lV \BDeltaRk\BSigmas^{1/2}\rV_\fro^2\right)\cr
%&~+\frac{\eta^2\sqrt{c_s\mu r}\xi}{2(1-\varepsilon_0)^2}\left(\lV \BDeltaLk\BSigmas^{1/2}\rV_\fro^2+3\lV \BDeltaRk\BSigmas^{1/2}\rV_\fro^2\right),
\end{align*}
together with $ab\le\frac{a^2+b^2}{2}$, \eqref{boundnorm:c}, \eqref{boundnorm:d} and the fact that
\begin{align}\label{ineq:DeltaL2norm}
\lV\BSigmas^{1/2}\left(\BRn^{*}\BRn\right)^{-1}\BSigmas^{1/2}\rV_2&=\lV\BRn\left(\BRn^{*}\BRn\right)^{-1} \BSigmas^{1/2}\rV_2^2,\cr
\lV\BRn\left(\BRn^{*}\BRn\right)^{-1}\BSigmas\left(\BRn^{*}\BRn\right)^{-1}\BRn^*\rV_2&=\lV\BRn\left(\BRn^{*}\BRn\right)^{-1} \BSigmas^{1/2}\rV_2^2,\cr
~~~~\mathop{\Longrightarrow}\limits_{\eqref{boundnorm:d}}\lV\BL_{\BA_1}\rV_\fro\le \frac{1}{(1-\varepsilon_0)^2}\lV\BDeltaLk\BSigmas^{1/2}\rV_\fro,&~\lV\BR_{\BA_2}\rV_\fro\le \frac{1}{(1-\varepsilon_0)^2}\lV\BDeltaLk\BSigmas^{1/2}\rV_\fro,
\end{align}
we conclude the second inequality in \Cref{claim:Tboundfull}. For $\lV\bm{T}_2\rV_{\fro}$, using the variational representation of the Frobenius norm, for some $\BL_{\BA}\in \mathbb{C}^{n_1\times r}$, $\lV\BL_{\BA}\rV_{\fro}=1$ we have
 \begin{align*}
 &\quad\lV\bm{T}_2\rV_{\fro}=\eta \lv\l[\G\left(\bs_{k+1}-\bss\right)]\BRn\left(\BRn^{*}\BRn\right)^{-1}\BSigmas^{1/2},\BL_{\BA}\r\rv\cr
 &=\eta \lv\l[\G\left(\bs_{k+1}-\bss\right)],\BL_{\BA}\BSigmas^{1/2}\left(\BRn^{*}\BRn\right)^{-1}\BSigmas^{1/2}(\BRn\BSigmas^{-1/2})^*\r\rv\cr
 &\le \eta \sqrt{n}\xi
\left( \lV \BDeltaLk\BSigmas^{1/2}\rV_\fro+\lV \BDeltaRk\BSigmas^{1/2}\rV_\fro\right)\lV\BRn\BSigmas^{-\frac{1}{2}}\rV_{2,\infty}\lV\BL_{\BA}\BSigmas^{1/2}\left(\BRn^{*}\BRn\right)^{-1}\BSigmas^{1/2}\rV_\fro\cr
%&\le \frac{2\eta c\kappa\sqrt{c_s\mu r/n}\xi}{1-\varepsilon_0}\left( \lV \BDeltaLk\BSigmas^{1/2}\rV_\fro+\lV \BDeltaRk\BSigmas^{1/2}\rV_\fro\right)\lV\BSigmas^{1/2}\left(\BRn^{*}\BRn\right)^{-1}\BSigmas^{1/2}\rV_2\cr
&\le \frac{2\eta c\kappa\sqrt{c_s\mu r}\xi}{(1-\varepsilon_0)^3}\left( \lV \BDeltaLk\BSigmas^{1/2}\rV_\fro+\lV \BDeltaRk\BSigmas^{1/2}\rV_\fro\right),
\end{align*}
where the first inequality follows from \Cref{err:sUV}, the last inequality follows from \eqref{boundnorm:c} and similar argument to \eqref{ineq:DeltaL2norm}. 
\end{proof}

\subsubsection{Contraction Property in Partial Observation} \label{proofthm:convergence_partial}
The following lemma is crucial for the proof of the theorem.
\begin{lemma}\label{projectionerr2}
For any $\BL_{\BA}, \BL_{\BB}\in\mathbb{C}^{n_1\times r}$,  $\BR_{\BA}, \BR_{\BB}\in\mathbb{C}^{n_2\times r}$ independent of $\Omega$,  with probability at least $1-2n^{-2}$ it holds that
\begin{align*}
&~\lv\l\G\left(\frac{1}{p}\Pi_\Omega-\I\right)\G^*  \left(\BL_{\BA}\BR_{\BA}^*\right),\BL_{\BB}\BR_{\BB}^*\r\rv\cr
\le&~\sqrt{ \frac{24 n\log n}{p}} \min\{\lV\BL_{\BA}\rV_{2,\infty}\lV\BL_{\BB}\rV_\fro,
\lV\BL_{\BA}\rV_{\fro}\lV\BL_{\BB}\rV_{2,\infty}\}\min\{\lV\BR_{\BA}\rV_{2,\infty}\lV\BR_{\BB}\rV_\fro,\lV\BR_{\BA}\rV_{\fro}\lV\BR_{\BB}\rV_{2,\infty}\}
\end{align*}
provided $m\ge 24 \log n$.
\end{lemma}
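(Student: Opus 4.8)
The plan is to read the bilinear form as a centered sum over the sampled indices and to apply a Bernstein-type concentration, letting the Hankel structure enter only through the variance; the four matrices $\BL_{\BA},\BR_{\BA},\BL_{\BB},\BR_{\BB}$ are fixed and independent of $\Omega$, so this is legitimate.

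First I would push $\G^{*}$ into both arguments: using $\G^{*}\G=\I$ and writing $\bm u:=\G^{*}(\BL_{\BA}\BR_{\BA}^{*})$, $\bm v:=\G^{*}(\BL_{\BB}\BR_{\BB}^{*})$,
\[
\l\G\big(\tfrac{1}{p}\Pi_{\Omega}-\I\big)\G^{*}(\BL_{\BA}\BR_{\BA}^{*}),\,\BL_{\BB}\BR_{\BB}^{*}\r
=\l\big(\tfrac{1}{p}\Pi_{\Omega}-\I\big)\bm u,\,\bm v\r
=\frac{1}{p}\sum_{k=1}^{m}u_{a_{k}}\overline{v_{a_{k}}}-\sum_{a=1}^{n}u_{a}\overline{v_{a}}.
\]
Since $\Omega$ is sampled with replacement, $\mathbb{E}[\tfrac1p\Pi_{\Omega}]=\I$, so the right-hand side is a sum of $m$ i.i.d.\ mean-zero scalars $W_{k}=\tfrac1p u_{a_{k}}\overline{v_{a_{k}}}-\tfrac1m\sum_{a}u_{a}\overline{v_{a}}$. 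A Bernstein inequality then bounds it, up to an absolute constant and with failure probability at most $2n^{-2}$, by $\sqrt{\sigma^{2}\log n}+R\log n$, where $\sigma^{2}\le\tfrac1p\sum_{a}|u_{a}|^{2}|v_{a}|^{2}$ and $R\lesssim\tfrac1p\max_{a}|u_{a}||v_{a}|$; this is the same type of concentration bound that underlies \Cref{lemma:uv}, which is why the constant $24$ and the probability $1-2n^{-2}$ reappear.

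The crux is the variance. For a fixed antidiagonal $a$, Cauchy--Schwarz inside $\Omega_{a}$ (exactly the step in the proof of \Cref{projectionerr1}) gives
\[
|u_{a}|^{2}=\frac{1}{\varsigma_{a}}\Big|\sum_{(i,j)\in\Omega_{a}}[\BL_{\BA}]_{i,:}[\BR_{\BA}]_{j,:}^{*}\Big|^{2}
\le\sum_{(i,j)\in\Omega_{a}}\lV[\BL_{\BA}]_{i,:}\rV_{2}^{2}\,\lV[\BR_{\BA}]_{j,:}\rV_{2}^{2},
\]
and likewise for $|v_{a}|^{2}$, so $\sum_{a}|u_{a}|^{2}|v_{a}|^{2}$ is at most the deterministic quantity $\sum_{a}\big(\sum_{(i,j)\in\Omega_{a}}\lV[\BL_{\BA}]_{i,:}\rV_{2}^{2}\lV[\BR_{\BA}]_{j,:}\rV_{2}^{2}\big)\big(\sum_{(i',j')\in\Omega_{a}}\lV[\BL_{\BB}]_{i',:}\rV_{2}^{2}\lV[\BR_{\BB}]_{j',:}\rV_{2}^{2}\big)$. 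For each of the four products on the right-hand side of the lemma I would bound one of the four row-norm factors by the corresponding $\lV\cdot\rV_{2,\infty}$ and leave the remaining three; an elementary count over antidiagonals---$\varsigma_{a}\le\min\{n_{1},n_{2}\}\le n$, and a fixed pair consisting of a row index and a column index lies on at most $\min\{n_{1},n_{2}\}\le n$ common antidiagonals---collapses the leftover double sum to $n$ times a product of squared Frobenius norms. Minimizing over the four choices yields $\sum_{a}|u_{a}|^{2}|v_{a}|^{2}\le n\,(M_{L}M_{R})^{2}$, where $M_{L},M_{R}$ denote the two minima in the statement, and hence $\sqrt{\sigma^{2}\log n}\le\sqrt{24n\log n/p}\,M_{L}M_{R}$ after bookkeeping the constants. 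The residual $R\log n$ is absorbed under the hypothesis $m\ge24\log n$: the same Cauchy--Schwarz yields $\lV\bm u\rV_{\infty}\le\min\{\lV\BL_{\BA}\rV_{2,\infty}\lV\BR_{\BA}\rV_{\fro},\lV\BR_{\BA}\rV_{2,\infty}\lV\BL_{\BA}\rV_{\fro}\}$ and similarly for $\lV\bm v\rV_{\infty}$, and a truncation of the few atypically large antidiagonals (as in \Cref{lemma:uv}) then keeps this term below the variance term.

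I expect the main obstacle to be preserving the cancellation. A naive triangle-inequality split of $\tfrac1p\sum_{a_{k}}u_{a_{k}}\overline{v_{a_{k}}}$ from $\sum_{a}u_{a}\overline{v_{a}}$ destroys the centering and produces only the useless bound $\lesssim\lV\BL_{\BA}\rV_{\fro}\lV\BR_{\BA}\rV_{\fro}\lV\BL_{\BB}\rV_{\fro}\lV\BR_{\BB}\rV_{\fro}$, so one genuinely has to concentrate the mean-zero sum; and because the entries along an antidiagonal of a Hankel matrix are identical, there is no per-entry independence, which is precisely why the variance must be organized at the antidiagonal level and why at most one $\lV\cdot\rV_{2,\infty}$ factor can be spent on each of the $\BL$- and $\BR$-sides (spending two would cost a spurious $\sqrt n$).
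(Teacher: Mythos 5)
Your skeleton is the same as the paper's: rewrite the quantity as a centered sum over the sampled antidiagonals and apply Bernstein, with the Hankel structure entering only through per-antidiagonal Cauchy--Schwarz. Your variance treatment, though organized differently from the paper, is sound: with $\bm{u}:=\G^{*}(\BL_{\BA}\BR_{\BA}^{*})$, $\bm{v}:=\G^{*}(\BL_{\BB}\BR_{\BB}^{*})$, $M_L:=\min\{\lV\BL_{\BA}\rV_{2,\infty}\lV\BL_{\BB}\rV_\fro,\lV\BL_{\BA}\rV_{\fro}\lV\BL_{\BB}\rV_{2,\infty}\}$ and $M_R$ defined analogously, the two counting facts you invoke ($\varsigma_a\le\min\{n_1,n_2\}$, and a row index of one factor and a column index of the other co-occur on at most $\min\{n_1,n_2\}$ antidiagonals) do yield $\sum_a|u_a|^2|v_a|^2\le n\,(M_LM_R)^2$ for each of the four products, provided you spend one $\lV\cdot\rV_{2,\infty}$ on the $\BL$-side and one on the $\BR$-side (two in total, not ``one of the four row-norm factors'' as you first wrote; your closing parenthetical shows you mean the right thing). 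The paper is more economical here: it proves the single uniform bound $\max_a\lv u_a\overline{v}_a\rv\le M_LM_R$ and uses it both for the variance ($\le 4m(M_LM_R)^2$) and for the Bernstein range, so no antidiagonal co-occurrence counting is needed at all.

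The genuine gap is your control of the sub-exponential (range) term. You bound it through $\lV\bm u\rV_\infty\le\min\{\lV\BL_{\BA}\rV_{2,\infty}\lV\BR_{\BA}\rV_\fro,\lV\BL_{\BA}\rV_\fro\lV\BR_{\BA}\rV_{2,\infty}\}$ and its analogue for $\bm v$; the product of these two minima only produces the combinations that split the $\lV\cdot\rV_{2,\infty}$ norms across the $\BA$- and $\BB$-factors, never the two combinations that place both $\lV\cdot\rV_{2,\infty}$ norms inside the same factor---and exactly those can realize $M_LM_R$. Concretely, take $\lV\BL_{\BA}\rV_{2,\infty}=\lV\BR_{\BA}\rV_{2,\infty}=\epsilon$ and all six remaining norms equal to $1$ (spread $\BL_{\BA},\BR_{\BA}$ over $\epsilon^{-2}$ rows, concentrate $\BL_{\BB},\BR_{\BB}$ on one row): then $M_LM_R=\epsilon^2$ while your range estimate is $\epsilon$, so $R\log n\asymp \epsilon\log n/p$ exceeds $\sqrt{\sigma^2\log n}\asymp\epsilon^2\sqrt{n\log n/p}$ as soon as $\epsilon\ll\sqrt{\log n/m}$, and $m\ge 24\log n$ alone cannot absorb it; moreover truncation ``as in \Cref{lemma:uv}'' has nothing to bite on, since in truth every summand already obeys $\lv u_a\overline{v}_a\rv\le M_LM_R$---the problem is that your separated $\ell_\infty$ estimate is too lossy, not that a few antidiagonals are heavy. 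The repair is the joint per-antidiagonal estimate: keep the weight, write $u_a\overline{v}_a=\varsigma_a^{-1}\big(\sum_{(i,j)\in\Omega_a}[\BL_{\BA}]_{i,:}[\BR_{\BA}]_{j,:}^{*}\big)\overline{\big(\sum_{(i,j)\in\Omega_a}[\BL_{\BB}]_{i,:}[\BR_{\BB}]_{j,:}^{*}\big)}$, bound the factor on which you spend both $\lV\cdot\rV_{2,\infty}$ norms by $\varsigma_a$ times those norms, and the other factor at first power by Cauchy--Schwarz by a product of Frobenius norms; this gives $\max_a\lv u_a\overline{v}_a\rv\le M_LM_R$, after which the range term is at most $2\sqrt{\log n/m}$ times the variance term and is absorbed under $m\ge24\log n$. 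This is precisely the estimate the paper establishes, which is why its proof of this lemma is shorter than your route.
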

%\begin{proof}
%\Cref{projectionerr2}}\label{proof：projectionerr2}
\begin{proof}
First, we have
\begin{align*}
&~\lv\l\G\left(\frac{1}{p}\Pi_\Omega-\I\right)\G^*  \left(\BL_{\BA}\BR_{\BA}^*\right),\BL_{\BB}\BR_{\BB}^*\r\rv\cr
=&~ \frac{1}{p}\lv\l\left(\Pi_\Omega-p\I\right)\G^*  \left(\BL_{\BA}\BR_{\BA}^*\right),\G^{*}\left(\BL_{\BB}\BR_{\BB}^*\right)\r\rv=\frac{1}{p} \lv\sum_{k=1}^m r_{k}\rv,
%=&~ \frac{1}{p}\lv\sum_{k=1}^m\left(\left[\G^*  \left(\BL_{\BA}\BR_{\BA}^*\right)\right]_{a_k}\left[\G^*  \left(\BL_{\BB}\BR_{\BB}^*\right)\right]_{a_k}-n^{-1}\l\G^{*}\left(\BL_{\BA}\BR_{\BA}^*\right),\G^{*}\left(\BL_{\BB}\BR_{\BB}^*\right)\r\right)\rv.
\end{align*}
where 
$$r_{k}:=\left[\G^*  \left(\BL_{\BA}\BR_{\BA}^*\right)\right]_{a_k}\left[\G^*  \left(\BL_{\BB}\BR_{\BB}^*\right)\right]_{a_k}-n^{-1}\l\G^{*}\left(\BL_{\BA}\BR_{\BA}^*\right),\G^{*}\left(\BL_{\BB}\BR_{\BB}^*\right)\r, ~k\in [m],$$
then we have $\mathbb{E}[r_{k}]=0$. Further, noticing that for any $a\in [n]$, we have
\begin{align*}
&~\lv\left[\G^*  \left(\BL_{\BA}\BR_{\BA}^*\right)\right]_{a}\left[\G^*  \left(\BL_{\BB}\BR_{\BB}^*\right)\right]_{a}\rv\cr
=&~\lv\left(\sum_{(i,j)\in\Omega_{a}}\frac{1}{\varsigma_{a}}\BL_{\BA}(i,:)\BR_{\BA}(j,:)^*\right)\left(\sum_{(i,j)\in\Omega_{a}} \BL_{\BB}(i,:)\BR_{\BB}(j,:)^*\right)\rv\cr
\le &~ \left(\frac{1}{\varsigma_a}\sum_{(i,j)\in\Omega_a} \lV\BL_{\BA}(i,:)\rV_2\lV\BR_{\BA}(j,:)\rV_2\right)\left(\sum_{(i,j)\in\Omega_a} \lV\BL_{\BB}(i,:)\rV_2\lV\BR_{\BB}(j,:)\rV_2\right)\cr
\le &~ \lV\BL_{\BA}\rV_{2,\infty}\lV\BR_{\BA}\rV_{2,\infty}\lV\BL_{\BB}\rV_{\fro}\lV\BR_{\BB}\rV_{\fro},
\end{align*}
where the second and the inequalities follows from the Cauchy-Schwarz inequality and the fact that $\lv\Omega_a\rv= \varsigma_a$. Thus
\begin{align*}
\lv r_{k}\rv%\le &~\lv\left(\sum_{(i,j)\in\Omega_{a_k}}\frac{1}{\varsigma_{a_k}}\BL_{\BA}(i,:)\BR_{\BA}(j,:)^*\right)\left(\sum_{(i,j)\in\Omega_{a_k}} \BL_{\BB}(i,:)\BR_{\BB}(j,:)^*\right)\rv\cr
%&~+\frac{1}{n}\sum_{a=1}^n\lv\left(\sum_{(i,j)\in\Omega_{a}}\frac{1}{\varsigma_{a}}\BL_{\BA}(i,:)\BR_{\BA}(j,:)^*\right)\left(\sum_{(i,j)\in\Omega_{a}} \BL_{\BB}(i,:)\BR_{\BB}(j,:)^*\right)\rv\cr
\le  2\lV\BL_{\BA}\rV_{2,\infty}\lV\BR_{\BA}\rV_{2,\infty}\lV\BL_{\BB}\rV_{\fro}\lV\BR_{\BB}\rV_{\fro}.
\end{align*}
 Similarly, we have
$
\lv r_{k}\rv \le 2\lV\BL_{\BA}\rV_{\fro}\lV\BR_{\BA}\rV_{\fro}\lV\BL_{\BB}\rV_{2,\infty}\lV\BR_{\BB}\rV_{2,\infty}.
$
Also, for any $a\in [n]$, by Cauchy-Schwarz inequality and $\lv\Omega_a\rv= \varsigma_a$ we have
\begin{align*}
&~\lv\left[\G^*  \left(\BL_{\BA}\BR_{\BA}^*\right)\right]_{a}\left[\G^*  \left(\BL_{\BB}\BR_{\BB}^*\right)\right]_{a}\rv\cr
\le&~ \left(\sum_{(i,j)\in\Omega_a}\frac{1}{\sqrt{\varsigma_a}}\lV\BL_{\BA}(i,:)\rV_{2}\lV\BR_{\BA}(j,:)\rV_{2}\right)\left(\frac{1}{\sqrt{\varsigma_a}}\sum_{(i,j)\in\Omega_a} \lV\BL_{\BB}(i,:)\rV_2\lV\BR_{\BB}(j,:)\rV_2\right)\cr
\le & \lV\BL_{\BA}\rV_{2,\infty}\lV\BL_{\BB}\rV_{2,\infty}\left(\sum_{(i,j)\in\Omega_a}\frac{1}{\sqrt{\varsigma_a}}\lV\BR_{\BA}(j,:)\rV_{2}\right)\left(\frac{1}{\sqrt{\varsigma_a}}\sum_{(i,j)\in\Omega_a} \lV\BR_{\BB}(j,:)\rV_2\right)\cr
\le & \lV\BL_{\BA}\rV_{2,\infty}\lV\BL_{\BB}\rV_{2,\infty}\sqrt{ \sum_{i=1}^a\lV\BR_{\BA}(i,:)\rV_2^2}\sqrt{ \sum_{j=1}^a\lV\BR_{\BB}(j,:)\rV_2^2 }\cr
\le & \lV\BL_{\BA}\rV_{2,\infty}\lV\BL_{\BB}\rV_{2,\infty}\lV\BR_{\BA}\rV_{\fro}\lV\BR_{\BB}\rV_{\fro},
\end{align*}
Then we have $\lv r_{k}\rv \le 2\lV\BL_{\BA}\rV_{2,\infty}\lV\BL_{\BB}\rV_{2,\infty}\lV\BR_{\BA}\rV_{\fro}\lV\BR_{\BB}\rV_{\fro}$. Similarly,
$$
\lv r_{k}\rv\le  2\lV\BL_{\BA}\rV_{\fro}\lV\BL_{\BB}\rV_{\fro}\lV\BR_{\BA}\rV_{2,\infty}\lV\BR_{\BB}\rV_{2,\infty}.
$$
Denote
\begin{equation*}
\begin{split}
\zeta:=\min\{\lV\BL_{\BA}\rV_{2,\infty}\lV\BL_{\BB}\rV_\fro,\lV\BL_{\BA}\rV_{\fro}\lV\BL_{\BB}\rV_{2,\infty}\} \min\{\lV\BR_{\BA}\rV_{2,\infty}\lV\BR_{\BB}\rV_\fro,\lV\BR_{\BA}\rV_{\fro}\lV\BR_{\BB}\rV_{2,\infty}\} 
\end{split}
\end{equation*} 
Then, combining all the pieces above we obtain 
$$\lv r_{k}\rv\le 2\zeta.$$
% \begin{align*}
% \lv r_{k}\rv\le   2\min\{\lV\BL_{\BA}\rV_{2,\infty}\lV\BL_{\BB}\rV_\fro,\lV\BL_{\BA}\rV_{\fro}\lV\BL_{\BB}\rV_{2,\infty}\}
% \min\{\lV\BR_{\BA}\rV_{2,\infty}\lV\BR_{\BB}\rV_\fro,\lV\BR_{\BA}\rV_{\fro}\lV\BR_{\BB}\rV_{2,\infty}\}.
% \end{align*}
By using the above bounds for $\lv\left[\G^*  \left(\BL_{\BA}\BR_{\BA}^*\right)\right]_{a}\left[\G^*  \left(\BL_{\BB}\BR_{\BB}^*\right)\right]_{a}\rv$ and the inequality $(x+y)^2\le 2x^2+2y^2$, we further have
\begin{align*}
\sum_{k=1}^m\mathbb{E}[r_{k}^2]\le \sum_{k=1}^m 4\max_{a\in [n]}\lv\left[\G^*  \left(\BL_{\BA}\BR_{\BA}^*\right)\right]_{a}\left[\G^*  \left(\BL_{\BB}\BR_{\BB}^*\right)\right]_{a}\rv^2\le  4m \zeta^2.
\end{align*}
By Bernstein's inequality, it then implies
\begin{align*}
\mathbb{P}\left(\lv\sum_{k=1}^m r_{k}\rv\ge t\right)
\le  2\exp\left(\frac{-t^2/2}{4 m\zeta^2+\frac{4t}{3}\zeta}\right).
\end{align*}
Letting $t=\sqrt{24 np\log n}\zeta$, by the definition of $r_{k}$ we then have
$$\lv\l\G\left(\frac{1}{p}\Pi_\Omega-\I\right)\G^*  \left(\BL_{\BA}\BR_{\BA}^*\right),\BL_{\BB}\BR_{\BB}^*\r\rv\le \frac{t}{p}=\sqrt{\frac{24 n\log n}{p}}\zeta $$
with probability at least $1-2n^{-2}$ provided $p\ge (24\log n)/n$.
\end{proof}
%\section{Pro

   %The proof is deferred to  \Cref{proof：projectionerr2}.
%\end{proof}

\begin{proof}[Proof of \Cref{thm:convergence}]
Similar to the proof of \Cref{thm:convergence_full}, let $$\widetilde{\BL}_{k+1}:=\BL_{k}-\eta\nabla_{\BL} \ell(\BL_{k},\BR_{k};\bs_{k+1})\left(\BR^{*}_k\BR_k\right)^{-1}$$ we have
\begin{small}
\begin{align*}
&\left(\widetilde{\BL}_{k+1}\BQ_{k}-\BLs\right)\BSigmas^{1/2}\cr
=&\left[\BDeltaLk-\eta \G\big[\frac{1}{p}\Pi_\Omega\left(\G^{*}\left(\BLn\BRn^{*}\right)+\bs_{k+1}-\bf\right)-\G^{*}(\BLn\BRn^{*})\big]\BRn\left(\BRn^{*}\BRn\right)^{-1}  - \eta\BLn\right]\BSigmas^{1/2}\cr
%=&\Big[\BDeltaLk-\eta \big[\frac{1}{p}\G\Pi_\Omega\left(\G^{*}\left(\BLn\BRn^{*}\right)-\bzs\right) +\frac{1}{p}\G\Pi_\Omega\left(\bs_{k+1}-\bss\right)
%-\G\G^{*}(\BLn\BRn^{*})\big]\BRn\left(\BRn^{*}\BRn\right)^{-1}  \cr
%&~- \eta\BLn\Big]\BSigmas^{1/2}\cr
%=&\Big[\BDeltaLk-\eta \big[\frac{1}{p}\G\Pi_\Omega\G^{*}\left(\BLn\BRn^{*}-\BLs\BRs^{*}\right) -\G\G^{*}\left(\BLn\BRn^{*}-\BLs\BRs^{*}\right)-\BLs\BRs^{*}\cr
%&~+\frac{1}{p}\G\Pi_\Omega\left(\bs_{k+1}-\bss\right)\big]\BRn\left(\BRn^{*}\BRn\right)^{-1}  - \eta\BLn\Big]\BSigmas^{1/2}\cr
=&\BDeltaLk\BSigmas^{1/2}-\eta \left(\frac{1}{p}\G\Pi_\Omega\G^{*}-\G\G^{*}\right)\left(\BLn\BRn^{*}-\BLs\BRs^{*}\right)\BRn\left(\BRn^{*}\BRn\right)^{-1}\BSigmas^{1/2} \cr
&~-\frac{\eta}{p}[\G\Pi_\Omega\left(\bs_{k+1}-\bss\right)]\BRn\left(\BRn^{*}\BRn\right)^{-1}\BSigmas^{1/2}- \eta\left(\BLn\BRn^{*}-\BLs\BRs^{*}\right)\BRn\left(\BRn^{*}\BRn\right)^{-1}\BSigmas^{1/2}\cr
%=&\BDeltaLk\BSigmas^{1/2}-\eta \left(\frac{1}{p}\G\Pi_\Omega\G^{*}-\G\G^{*}\right)\left(\BDeltaLk\BRs^{*}+\BLn\BDeltaRk^{*}\right)\BRn\left(\BRn^{*}\BRn\right)^{-1}\BSigmas^{1/2} \cr
%&~-\eta \frac{1}{p}[\G\Pi_\Omega\left(\bs_{k+1}-\bss\right)]\BRn\left(\BRn^{*}\BRn\right)^{-1}\BSigmas^{1/2}- \eta\left(\BDeltaLk\BRn^{*}+\BLs\BDeltaRk^{*}\right)\BRn\left(\BRn^{*}\BRn\right)^{-1}\BSigmas^{1/2}\cr
=&(1-\eta)\BDeltaLk\BSigmas^{1/2}-\eta \left(\frac{1}{p}\G\Pi_\Omega\G^{*}-\G\G^{*}\right)\left(\BDeltaLk\BRs^{*}+\BLn\BDeltaRk^{*}\right)\BRn\left(\BRn^{*}\BRn\right)^{-1}\BSigmas^{1/2} \cr
&~-\frac{\eta}{p}[\G\Pi_\Omega\left(\bs_{k+1}-\bss\right)]\BRn\left(\BRn^{*}\BRn\right)^{-1}\BSigmas^{1/2} -\eta\BLs\BDeltaRk^{*}\BRn\left(\BRn^{*}\BRn\right)^{-1}\BSigmas^{1/2}\cr
=&\underbrace{(1-\eta)\BDeltaLk\BSigmas^{1/2}- \eta\BLs\BDeltaRk^{*}\BRn\left(\BRn^{*}\BRn\right)^{-1}\BSigmas^{1/2}}_{\bm{T}_1}
\cr
&~-\underbrace{ \frac{\eta}{p}[\G\Pi_\Omega\left(\bs_{k+1}-\bss\right)]\BRn\left(\BRn^{*}\BRn\right)^{-1}\BSigmas^{1/2}}_{\bm{T}_3}\cr
&~-\underbrace{\eta \left(\frac{1}{p}\G\Pi_\Omega\G^{*}-\G\G^{*}\right)\left(\BDeltaLk\BRs^{*}+\BLn\BDeltaRk^{*}\right)\BRn\left(\BRn^{*}\BRn\right)^{-1}\BSigmas^{1/2}}_{\bm{T}_4},
%\cr
%&-\eta \left(\frac{1}{p}\G\Pi_{\Omega}\G^{*}-\G\G^{*}\right)\left(\BDeltaLk\BDeltaRk^{*}\right)\BRn\left(\BRn^{*}\BRn\right)^{-1}\BSigmas^{1/2}.
\end{align*}
\end{small}%
where the second equation follows from $\bf=\bzs+\bss$, $\G^{*}(\BLs\BRs^{*})=\bzs$ and $\G\G^{*}(\BLs\BRs^{*})=\G\bzs=\BLs\BRs^{*}$, the third equation follows from $\BLn\BRn^{*}-\BLs\BRs^{*}= \BDeltaLk\BRn^{*}+\BLs\BDeltaRk^{*}=\BDeltaLk\BRs^{*}+\BLs\BDeltaRk^{*}+\BDeltaLk\BDeltaRk^{*}$.
%For simplicity, we denote
%\begin{align*}
%\bm{T}_1&:=(1-\eta)\BDeltaLk\BSigmas^{1/2}- \eta\BLs\BDeltaRk^{*}\BRn\left(\BRn^{*}\BRn\right)^{-1}\BSigmas^{1/2}
%\cr
%\bm{T}_2&:=\eta \frac{1}{p}[\G\Pi_\Omega\left(\bs_{k+1}-\bss\right)]\BRn\left(\BRn^{*}\BRn\right)^{-1}\BSigmas^{1/2}\cr
%\bm{T}_1&:=\eta \left(\frac{1}{p}\G\Pi_\Omega\G^{*}-\G\G^{*}\right)\left(\BDeltaLk\BRs^{*}+\BLn\BDeltaRk^{*}\right)\BRn\left(\BRn^{*}\BRn\right)^{-1}\BSigmas^{1/2}
%%\cr
%%\bm{T}_3&:=\eta \G\left(\frac{1}{p}\Pi_{\Omega}-\BI\right)\G^{*}\left(\BDeltaLk\BDeltaRk^{*}\right)\BRn\left(\BRn^{*}\BRn\right)^{-1}\BSigmas^{1/2}
%\end{align*}
Noticing that
\begin{align}\label{eq:termsplit}
&~\lV\left(\widetilde{\BL}_{k+1}\BQ_{k}-\BLs\right)\BSigmas^{1/2}\rV_{\fro}^2\cr
=&~\lV\bm{T}_1\rV_{\fro}^2
+\lV\bm{T}_3\rV_{\fro}^2+\lV\bm{T}_4\rV_{\fro}^2-2\Re\l\bm{T}_1,\bm{T}_3\r-2\Re\l\bm{T}_1,\bm{T}_4\r+2\Re\l\bm{T}_3,\bm{T}_4\r,
\end{align}
thus we bound $\lV\left(\widetilde{\BL}_{k+1}\BQ_{k}-\BLs\right)\BSigmas^{1/2}\rV_{\fro}$ term by term by the following claims:
\begin{claim}\label{claim:Tboundpartial}
 For $(\BL_{k},\BR_{k})\in \mathcal{E}(\varepsilon_0 \sigmas_r,c \sqrt{\frac{c_s \mu r}{n}}\sigmas_1)$, $c\ge 1+\varepsilon_0$, $\varepsilon_0,\eta\in(0,1)$, provided $m\ge 24c^2(1+c\kappa)^2\varepsilon_0^{-2}c_s^2\mu^2r^2 \log n$, we have
 \begin{align*}
\lv\l\bm{T}_1,\bm{T}_3\r\rv
&\le  \frac{(\eta-\eta^2)c\kappa\sqrt{c_s\mu r}\xi}{2(1-\varepsilon_0)^3}
\left(3\lV \BDeltaLk\BSigmas^{1/2}\rV_\fro^2+\lV \BDeltaRk\BSigmas^{1/2}\rV_\fro^2\right)\cr
&\quad~+\frac{\eta^2\sqrt{c_s\mu r}\xi}{2(1-\varepsilon_0)^2}\left(\lV \BDeltaLk\BSigmas^{1/2}\rV_\fro^2+3\lV \BDeltaRk\BSigmas^{1/2}\rV_\fro^2\right),\cr
\lV\bm{T}_3\rV_{\fro}
&\le \frac{2\eta c\kappa\sqrt{c_s\mu r}\xi}{(1-\varepsilon_0)^3}\left( \lV \BDeltaLk\BSigmas^{1/2}\rV_\fro+\lV \BDeltaRk\BSigmas^{1/2}\rV_\fro\right),\cr
\lv\l\bm{T}_1,\bm{T}_4\r\rv
& \le  \left(\frac{(1-\eta)\eta \varepsilon_0}{1-\varepsilon_0}+\frac{\varepsilon_0\eta(1+\varepsilon_0)}{(1-\varepsilon_0)^4}\right)\lV\BDeltaLk\BSigmas^{1/2}\rV_{\fro}^2 \cr
&\quad~+ \left(\frac{\varepsilon_0\eta^2}{(1-\varepsilon_0)^2}+\frac{\varepsilon_0\eta(1+\varepsilon_0)}{(1-\varepsilon_0)^4}\right)\lV\BSigmas^{1/2} \BDeltaRk^{*}\rV_{\fro}^2,\cr
\lV\bm{T}_4\rV_{\fro}&\le \frac{\varepsilon_0\eta}{(1-\varepsilon_0)^2}\lV\BDeltaLk\BSigmas^{1/2}\rV_{\fro}+\frac{2\eta(1+c\kappa)}{(1-\varepsilon_0)^3}\sqrt{\frac{24c_s^2\mu^2r^2 \log n}{m}}\lV \BDeltaRk\BSigmas^{1/2}\rV_{\fro}.
\end{align*}
\end{claim}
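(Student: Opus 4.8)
The statement bundles four estimates. The bounds on $|\langle\bm{T}_1,\bm{T}_3\rangle|$ and $\lV\bm{T}_3\rV_\fro$ are \emph{verbatim} the estimates for $\bm{T}_2$ already established in the full‑observation \Cref{claim:Tboundfull}: indeed $\bm{T}_3=\frac{\eta}{p}[\G\Pi_\Omega(\bs_{k+1}-\bss)]\BRn(\BRn^*\BRn)^{-1}\BSigmas^{1/2}$ is obtained from $\bm{T}_2$ by replacing $\G(\bs_{k+1}-\bss)$ with $\frac1p\G\Pi_\Omega(\bs_{k+1}-\bss)$, and \Cref{err:sUV} is stated precisely with this $\frac1p\Pi_\Omega$ normalization and for general $m$. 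So I would simply reuse the $\bm{T}_2$‑argument of \Cref{claim:Tboundfull} word for word: introduce the auxiliary pairs $(\BL_{\BA_1},\BR_{\BA_1})$, $(\BL_{\BA_2},\BR_{\BA_2})$ of \eqref{def:LA1}, apply \Cref{err:sUV}, and insert the norm bounds \eqref{boundnorm:c}, \eqref{boundnorm:d} together with the $\BSigmas^{\pm1/2}$‑rebalancing in \eqref{ineq:DeltaL2norm}. The real work is the two genuinely new terms, which carry the subsampling operator $\frac1p\G\Pi_\Omega\G^*-\G\G^*$.

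\textbf{The $\bm{T}_4$ terms.} I would split the core as $\BDeltaLk\BRs^*+\BLn\BDeltaRk^*=(\BDeltaLk\BRs^*+\BLs\BDeltaRk^*)+\BDeltaLk\BDeltaRk^*$, where the bracket lies in the tangent space $T$ (it has the form $\BU_{\star}\bm{C}^*+\bm{D}\BV_{\star}^*$) and the remainder $\BDeltaLk\BDeltaRk^*$ has rank $\le r$; equivalently I keep the two pieces $\BDeltaLk\BRs^*$ (in $T$) and $\BLn\BDeltaRk^*$. The structural point I would exploit is that, under the per‑iteration resampling, the current $\Omega$ is independent of $(\BL_k,\BR_k)$, hence of $\BDeltaLk,\BDeltaRk,\BQ_k$ and of $\BRn(\BRn^*\BRn)^{-1}\BSigmas^{1/2}$. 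This makes \Cref{projectionerr2} applicable to each resulting rank‑$r$ inner product, and also uniformly over the test factors appearing in the variational form of the Frobenius norm (those factors are constrained to the \emph{fixed} $r$‑dimensional column space of $\BRn$, so the situation is exactly parallel to how \Cref{err:sUV} is used uniformly for the outlier terms). For each invocation of \Cref{projectionerr2} I would choose the rank‑$r$ factorizations of the core and of the test matrix with a balanced $\BSigmas^{\pm1/2}$ distribution, so that: (i) the $\BSigmas^{1/2}$‑weighted distances $\lV\BDeltaLk\BSigmas^{1/2}\rV_\fro$, $\lV\BDeltaRk\BSigmas^{1/2}\rV_\fro$ emerge; (ii) the two incoherence factors (one from the core via \Cref{lemma:boundnorms}/\eqref{ineq:DeltaLl2inf}, one from the test factor) multiply to $c_s\mu r/n$, cancelling the $n$ inside $\sqrt{24n\log n/p}$ and producing the clean $\sqrt{24c_s^2\mu^2r^2\log n/m}$ scaling; (iii) the $\kappa$‑dependence stays at the $(1+c\kappa)$ level. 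The preconditioner factor $\BRn(\BRn^*\BRn)^{-1}\BSigmas^{1/2}$ is pulled out in spectral norm by \eqref{boundnorm:d}. Finally, the $\BDeltaLk\BRs^*$‑piece comes with a $\sqrt{24c_s^2\mu^2r^2\log n/m}$ prefactor which, by the hypothesis $m\ge 24c^2(1+c\kappa)^2\varepsilon_0^{-2}c_s^2\mu^2r^2\log n$, is at most $\varepsilon_0/(c(1+c\kappa))$; absorbing it yields the $\frac{\varepsilon_0\eta}{(1-\varepsilon_0)^2}\lV\BDeltaLk\BSigmas^{1/2}\rV_\fro$‑type term, while the $\BLn\BDeltaRk^*$‑piece is kept explicitly, giving the $\sqrt{\cdots/m}$‑term. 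For $|\langle\bm{T}_1,\bm{T}_4\rangle|$ I would first move the fixed map $(\cdot)\BRn(\BRn^*\BRn)^{-1}\BSigmas^{1/2}$ across the inner product and use self‑adjointness of $\frac1p\Pi_\Omega-\I$ (and $\G^*\G=\I$) to rewrite it as a combination of inner products $\langle\G(\frac1p\Pi_\Omega-\I)\G^*\bm{A},\bm{B}\rangle$ in which \emph{both} $\bm{A},\bm{B}$ are $\Omega$‑independent and carry incoherence — on the $\bm{T}_1$ side through \eqref{ineq:DeltaLl2inf}, on the $\bm{T}_4$ side through the core and \Cref{lemma:boundnorms}. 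The same factorization bookkeeping then applies; now \emph{both} factors in each \Cref{projectionerr2} call are incoherent, so every $\sqrt{\cdots/m}$ prefactor is $\le\varepsilon_0$‑scale under the sample complexity, and Cauchy--Schwarz ($ab\le\frac12(a^2+b^2)$) together with \eqref{boundnorm:a}--\eqref{boundnorm:d} convert everything into the displayed combination of $\lV\BDeltaLk\BSigmas^{1/2}\rV_\fro^2$ and $\lV\BSigmas^{1/2}\BDeltaRk^*\rV_\fro^2$ with $\varepsilon_0$‑coefficients.

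\textbf{Main obstacle.} The difficulty is entirely in the $\bm{T}_4$ bookkeeping: one must simultaneously keep track of (a) the tangent vs.\ off‑tangent decomposition, so that either the tangent‑space RIP (\Cref{RIP1}/\Cref{RIP2}) or the incoherence‑based \Cref{projectionerr2} is the right tool for a given piece; (b) the $\BSigmas^{\pm1/2}$ weights, so that the final bound is genuinely measured by $\lV\BDeltaLk\BSigmas^{1/2}\rV_\fro$ and $\lV\BDeltaRk\BSigmas^{1/2}\rV_\fro$ rather than by unweighted or $\kappa$‑inflated norms; and (c) the Newton‑like preconditioner $(\BRn^*\BRn)^{-1}$, whose column space $\mathrm{col}(\BRn)$ is only approximately $\mathrm{col}(\BV_{\star})$ — it is exactly here that the independence of $\Omega$ from $(\BL_k,\BR_k)$ (i.e., the resampling) is indispensable, since it lets \Cref{projectionerr2} treat $\BRn$ as a fixed incoherent factor. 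Once this scaffolding is in place, each of the four displayed inequalities reduces to a routine, if tedious, chain of Cauchy--Schwarz steps.
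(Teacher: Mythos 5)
Your plan matches the paper's proof essentially step for step: the $\bm{T}_3$ bounds are obtained by rerunning the $\bm{T}_2$ argument of \Cref{claim:Tboundfull} via \Cref{err:sUV} with the auxiliary pairs \eqref{def:LA1}, and the $\bm{T}_4$ bounds come from expanding $\langle\bm{T}_1,\bm{T}_4\rangle$ against the two pieces $\BDeltaLk\BRs^{*}$ and $\BLn\BDeltaRk^{*}$ (splitting $\BRn=\BRs+\BDeltaRk$ where needed), treating the tangent-space pieces with \Cref{RIP2} and the remaining cross terms with \Cref{projectionerr2} (justified by the resampled $\Omega$'s independence from $(\BL_k,\BR_k)$), with the same $\BSigmas^{\pm1/2}$ and incoherence bookkeeping via \Cref{lemma:boundnorms}, \eqref{ineq:DeltaLl2inf}, \eqref{boundnorm:d}, and absorption of the $\sqrt{\log n/m}$ prefactors through the stated sample complexity. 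Yours is a strategy-level sketch rather than the explicit four-term computation with constants, but the route is the same as the paper's.
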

Therefore,  for $c\ge 1+\varepsilon_0$, $\varepsilon_0,\eta\in(0,1)$,  provided $m\ge 24c^2(1+c\kappa)^2\varepsilon_0^{-2}c_s^2\mu^2r^2 \log n$ and $\alpha \le \frac{\varepsilon^2_0(\gamma-1)}{64(\gamma+1)(1+c\kappa)^2c_s\mu r}$, by combing all the pieces above together with $\lv \l\bm{T}_3,\bm{T}_4\r\rv \le \lV \bm{T}_3\rV_{\fro}\lV \bm{T}_4\rV_{\fro}$ and \eqref{eq:termsplit} we have
\begin{small}
\begin{align*}
\lV\left(\widetilde{\BL}_{k+1}\BQ_{k}-\BLs\right)\BSigmas^{1/2}\rV_{\fro}^2
&\le C_1(\varepsilon_0,\eta)d_k^2+ C_2(\varepsilon_0,\eta)\lV \BDeltaLk\BSigmas^{1/2}\rV_\fro^2+ C_3(\varepsilon_0,\eta)\lV \BDeltaRk\BSigmas^{1/2}\rV_\fro^2\cr
&\le \left(1-0.6\eta\right)^2d_k^2,\quad \text{for}~ \varepsilon_0=0.01,~ 0< \eta \le 0.6,
\end{align*}
\end{small}%
where in the first inequality we have used the Cauchy-Schwarz inequality, and the constants satisfy  $C_1(\varepsilon_0,\eta)\le (1-\eta)^2+\frac{2\varepsilon_0}{1-\varepsilon_0}\eta(1-\eta)+\frac{2\varepsilon_0+\varepsilon_0^2}{(1-\varepsilon_0)^2}\eta^2~+\frac{8\eta^2 \varepsilon_0^2}{(1-\varepsilon_0)^8}$,  $C_2(\varepsilon_0,\eta)\le \frac{\varepsilon_0^2\eta^2+8\varepsilon_0\eta+2\varepsilon_0^2\eta-5\varepsilon_0\eta^2}{(1-\varepsilon_0)^4}+\frac{2(1-\eta)\eta \varepsilon_0}{1-\varepsilon_0}+\frac{8\eta^2 \varepsilon_0^2}{(1-\varepsilon_0)^7}$,
$C_3(\varepsilon_0,\eta)\le\frac{4\eta\varepsilon_0+\eta^2\varepsilon_0+2\eta\varepsilon_0^2-3\eta^2\varepsilon_0^2}{(1-\varepsilon_0)^4}+\frac{2\varepsilon_0\eta^2}{(1-\varepsilon_0)^2}
+\frac{12\eta^2 \varepsilon_0^2}{(1-\varepsilon_0)^7}$.
Similarly, let $\widetilde{\BR}_{k+1}:=\BR_{k}-\eta\nabla_{\BR} \ell(\BL_{k},\BR_{k};\bs_{k+1})\left(\BR^{*}_k\BR_k\right)^{-1},$
we can use the same strategy to show $\|(\widetilde{\BR}_{k+1}\BQ_{k}^{-*}-\BRs)\BSigmas^{1/2}\|_{\fro}^2\le (1-0.6\eta)^2 d_k^2$.
Thus we have
\begin{equation*}
\mathrm{dist}\left(\widetilde{\BL}_{k+1},\widetilde{\BR}_{k+1};\BLs,\BRs\right) \le \left(1-0.6\eta\right)d_k.
 \end{equation*}
 By \Cref{PC:Non-expansiveness}, we conclude the statement in \Cref{thm:convergence}. Finally, we finish the proof by proving \Cref{claim:Tboundpartial}.
 \end{proof}

\begin{proof}[Proof of \Cref{claim:Tboundpartial}]
The proof of the upper bound for
$\lv \l\bm{T}_1,\bm{T}_3\r\rv$ and $\lV\bm{T}_3\rV_{\fro}$ are similar to $\lv \l\bm{T}_1,\bm{T}_2\r\rv$ and $\lV\bm{T}_2\rV_{\fro}$ as in \Cref{claim:Tboundfull}, respectively.
For $\lv \l\bm{T}_1,\bm{T}_4\r\rv$, we have:
\begin{small}
\begin{align*}
&~\l\bm{T}_1,\bm{T}_4\r 
= \l(1-\eta)\BDeltaLk\BSigmas^{1/2},\eta \left(\frac{1}{p}\G\Pi_\Omega\G^{*}-\G\G^{*}\right)\left(\BDeltaLk\BRs^{*}\right)\BRn\left(\BRn^{*}\BRn\right)^{-1}\BSigmas^{1/2}\r\cr
&~+\l(1-\eta)\BDeltaLk\BSigmas^{1/2},\eta \left(\frac{1}{p}\G\Pi_\Omega\G^{*}-\G\G^{*}\right)\left(\BLn\BDeltaRk^{*}\right)\BRn\left(\BRn^{*}\BRn\right)^{-1}\BSigmas^{1/2}\r\cr
&~-\l\eta\BLs\BDeltaRk^{*}\BRn\left(\BRn^{*}\BRn\right)^{-1}\BSigmas^{1/2},\eta \left(\frac{1}{p}\G\Pi_\Omega\G^{*}-\G\G^{*}\right)\left(\BDeltaLk\BRs^{*}\right)\BRn\left(\BRn^{*}\BRn\right)^{-1}\BSigmas^{1/2}\r\cr
&~-\l\eta\BLs\BDeltaRk^{*}\BRn\left(\BRn^{*}\BRn\right)^{-1}\BSigmas^{1/2},\eta \left(\frac{1}{p}\G\Pi_\Omega\G^{*}-\G\G^{*}\right)\left(\BLn\BDeltaRk^{*}\right)\BRn\left(\BRn^{*}\BRn\right)^{-1}\BSigmas^{1/2}\r
\end{align*}
\end{small}%
Noticing $\BRn= \BRs+\BDeltaRk$, let
\begin{equation}\label{def:LA3}
\begin{split}
\BL_{\BA_3}&:= \BDeltaLk\BSigmas^{1/2},~\BR_{\BA_3}:=\BRs\BSigmas^{-1/2},\cr
\BL_{\BA_4}&:= \BL_{\BA_3},~ \BR_{\BA_4}:=  \BDeltaRk\left(\BRn^{*}\BRn\right)^{-1}\BSigmas^{1/2},
\end{split}
\end{equation}
by \Cref{projectionerr1,projectionerr2} we then have
\begin{small}
\begin{align*}
 &~\lv\l(1-\eta)\BDeltaLk\BSigmas^{1/2},\eta \left(\frac{1}{p}\G\Pi_\Omega\G^{*}-\G\G^{*}\right)\left(\BDeltaLk\BRs^{*}\right)\left(\BRs+\BDeltaRk\right)\left(\BRn^{*}\BRn\right)^{-1}\BSigmas^{1/2}\r\rv\cr
 \le&~(1-\eta)\eta \varepsilon_0\lV\BDeltaLk\BSigmas\left(\BRn^{*}\BRn\right)^{-1} \BRs^{*}\rV_{\fro}\lV\BDeltaLk\BRs^{*} \rV_{\fro}\cr
 &~+(1-\eta)\eta \lv\l \left(\frac{1}{p}\G\Pi_\Omega\G^{*}-\G\G^{*}\right)\left(\BDeltaLk\BSigmas^{1/2}(\BRs\BSigmas^{-1/2})^{*}\right),\BDeltaLk\BSigmas\left(\BRn^{*}\BRn\right)^{-1} \BDeltaRk^{*} \r\rv\cr
 \le &~\frac{(1-\eta)\eta \varepsilon_0}{1-\varepsilon_0}\lV\BDeltaLk\BSigmas^{1/2}\rV_{\fro}^2
 +(1-\eta)\eta\sqrt{\frac{24 n\log n}{p}} \lV\BL_{\BA_3}\rV_{\fro}\lV\BR_{\BA_3}\rV_{2,\infty} \lV\BL_{\BA_4}\rV_{2,\infty} \lV\BR_{\BA_4}\rV_{\fro}\cr
 \le &~\frac{(1-\eta)\eta \varepsilon_0}{1-\varepsilon_0}\lV\BDeltaLk\BSigmas^{\frac{1}{2}}\rV_{\fro}^2+       \frac{(1+c)(\eta-\eta^2)}{(1-\varepsilon_0)^3}    \sqrt{\frac{24 c_s^2\mu^2\kappa^2 r^2\log n}{m}} \lV\BDeltaLk\BSigmas^{\frac{1}{2}}\rV_{\fro}\lV\BDeltaRk\BSigmas^{\frac{1}{2}}\rV_{\fro},
\end{align*}
\end{small}%
where the last step follows from \eqref{boundnorm:d}, \eqref{ineq:DeltaLl2inf}, $\lV\BR_{\BA_3}\rV_{2,\infty} = \lV\BU_{\star}\rV_{2,\infty}$, and \[  \lV\BR_{\BA_4}\rV_{\fro}\le  \|\BDeltaRk\BSigmas^{1/2}\|_{\fro} \|\BSigmas^{-1}\|_{2} \|\BSigmas^{1/2}(\BRn^{*}\BRn)^{-1}\BSigmas^{1/2}\|_{2}\le \frac{1}{\sigmas_r(1-\varepsilon_0)^2}\|\BDeltaRk\BSigmas^{1/2}\|_{\fro}.\]
 Similarly, by \Cref{projectionerr2} and the definition of $\BL_{\BA_2}$, $\BL_{\BA_3}$, $\BR_{\BA_2}$,  $\BR_{\BA_4}$ in \eqref{def:LA1} and \eqref{def:LA3}, we have
\begin{small}
\begin{align*}
&~\lv\l\eta\BLs\BDeltaRk^{*}\BRn\left(\BRn^{*}\BRn\right)^{-1}\BSigmas^{1/2},\eta \left(\frac{1}{p}\G\Pi_\Omega\G^{*}-\G\G^{*}\right)\left(\BDeltaLk\BRs^{*}\right)\BRn\left(\BRn^{*}\BRn\right)^{-1}\BSigmas^{1/2}\r\rv\cr
=&~\lv\l\eta(\BLs\BSigmas^{-\frac{1}{2}})\BSigmas^{\frac{1}{2}}\BDeltaRk^{*}\BRn\left(\BRn^{*}\BRn\right)^{-1}\BSigmas\left(\BRn^{*}\BRn\right)^{-1}\BRn^*,\eta \left(\frac{1}{p}\G\Pi_\Omega\G^{*}-\G\G^{*}\right)\left(\BDeltaLk\BRs^{*}\right)\r\rv\cr
\le &~ \eta^2\sqrt{\frac{24 n\log n}{p}} \lV\BL_{\BA_2}\rV_{2,\infty} \lV\BL_{\BA_3}\rV_{\fro} \lV\BRs\BSigmas^{-1/2}\rV_{2,\infty}    \lV\BR_{\BA_2}\rV_{\fro}\cr
\le &~ \frac{\eta^2}{(1-\varepsilon_0)^2}\sqrt{\frac{24c_s^2\mu^2r^2 \log n}{ m}} \lV\BDeltaLk\BSigmas^{1/2}\rV_{\fro}  \lV \BDeltaRk\BSigmas^{1/2}\rV_{\fro},
\end{align*}
\end{small}%
and
\begin{align*}
 &~\lv\l(1-\eta)\BDeltaLk\BSigmas^{1/2},\eta \left(\frac{1}{p}\G\Pi_\Omega\G^{*}-\G\G^{*}\right)\left(\BLn\BDeltaRk^*\right)\BRn\left(\BRn^{*}\BRn\right)^{-1}\BSigmas^{1/2}\r\rv\cr
  =&~\lv\l(1-\eta)\BDeltaLk\BSigmas\left(\BRn^{*}\BRn\right)^{-1}\BSigmas^{1/2} (\BRn\BSigmas^{-1/2})^{*},\eta \left(\frac{1}{p}\G\Pi_\Omega\G^{*}-\G\G^{*}\right)\left(\BLn\BDeltaRk^*\right)\r\rv\cr
 \le&~(1-\eta)\eta  \sqrt{\frac{24 n\log n}{p}}\lV\BL_{\BA_1}\rV_{\fro} \lV\BLn\BSigmas^{-1/2}\rV_{2,\infty} \lV\BRn\BSigmas^{-1/2}\rV_{2,\infty} \lV\BDeltaRk\BSigmas^{1/2}\rV_{\fro}\cr
 \le &~ \frac{(1-\eta)\eta}{(1-\varepsilon_0)^4}\sqrt{\frac{24c^4c_s^2\kappa^4\mu^2r^2 \log n}{ m}} \lV\BDeltaLk\BSigmas^{1/2}\rV_{\fro}  \lV \BDeltaRk\BSigmas^{1/2}\rV_{\fro}.
\end{align*}
By triangle inequality and \Cref{projectionerr1,projectionerr2} we have
\begin{small}
\begin{align*}
&~\lv\l\eta\BLs\BDeltaRk^{*}\BRn\left(\BRn^{*}\BRn\right)^{-1}\BSigmas^{1/2},\eta \left(\frac{1}{p}\G\Pi_\Omega\G^{*}-\G\G^{*}\right)\left(\BLn\BDeltaRk^{*}\right)\BRn\left(\BRn^{*}\BRn\right)^{-1}\BSigmas^{1/2}\r\rv\cr
\le&~\lv\l\eta \BLs\BSigmas^{-1/2}\BR_{\BA_2}^{*},\eta \left(\frac{1}{p}\G\Pi_\Omega\G^{*}-\G\G^{*}\right)\left(\BLs\BDeltaRk^{*}\right)\r\rv\cr
&~+\lv\l\eta(\BLs\BSigmas^{-1/2})\BR_{\BA_2}^{*},\eta \left(\frac{1}{p}\G\Pi_\Omega\G^{*}-\G\G^{*}\right)\left(\BDeltaLk\BDeltaRk^{*}\right)\r\rv\cr
\le &~ \varepsilon_0\eta^2\lV\BU_{\star}\BR_{\BA_2}^{*}\rV_{\fro} \lV\BLs\BDeltaRk^{*}\rV_{\fro}\cr
&+\eta^2\sqrt{\frac{24 n\log n}{p}} \lV\BLs\BSigmas^{-1/2}\rV_{2,\infty} \lV\BDeltaLk\BSigmas^{1/2}\rV_{\fro} \lV\BDeltaRk\BSigmas^{-1/2}\rV_{2,\infty}
 \lV \BR_{\BA_2} \rV_{\fro}\cr
\le &~\frac{\varepsilon_0\eta^2}{(1-\varepsilon_0)^2}\lV\BSigmas^{1/2} \BDeltaRk^{*}\rV_{\fro}^2
+\frac{\eta^2(1+c\kappa)}{(1-\varepsilon_0)^3}\sqrt{\frac{24c_s^2\mu^2r^r \log n}{m}} \lV\BDeltaLk\BSigmas^{1/2}\rV_{\fro} \lV \BDeltaRk\BSigmas^{1/2}\rV_{\fro},
\end{align*}
\end{small}%
where in the third inequality follows from $\lV\BU_{\star}\BR_{\BA_2}^{*}\rV_{\fro}\le\lV\BR_{\BA_2}\rV_{\fro}$, $\lV\BLs\BDeltaRk^{*}\rV_{\fro}=\lV\BU_{\star}\BSigmas^{1/2}\BDeltaRk^{*}\rV_{\fro}\le \lV\BSigmas^{1/2}\BDeltaRk^{*}\rV_{\fro}$ and \eqref{ineq:DeltaL2norm}, \eqref{ineq:DeltaLl2inf}. %$\lV\BLs\BDeltaRk^{*}\rV_{\fro}=\lV\BU_{\star}\BSigmas^{1/2}\BDeltaRk^{*}\rV_{\fro}= \lV\BSigmas^{1/2}\BDeltaRk^{*}\rV_{\fro}$, and $\lV\BSigmas^{1/2} \BDeltaRk^{*}\BRn\left(\BRn^{*}\BRn\right)^{-1}\BSigmas\left(\BRn^{*}\BRn\right)^{-1}\BRn^{*}\rV_{\fro}\le \lV\BSigmas^{1/2} \BDeltaRk^{*}\rV_{\fro}\lV\BRn\left(\BRn^{*}\BRn\right)^{-1}\BSigmas^{1/2}\rV_{2}^2\le \lV\BSigmas^{1/2} \BDeltaRk^{*}\rV_{\fro}/(1-\varepsilon)^2$.
Then, combing all the pieces, for $c\ge 1+\varepsilon_0$, $\varepsilon_0,\eta\in(0,1)$,  provided $m\ge 24c^2(1+c\kappa)^2\varepsilon_0^{-2}c_s^2\mu^2r^2 \log n$, we then have
\begin{small}
\begin{align*}
&~\lv\l\bm{T}_1,\bm{T}_4\r\rv\le  \frac{(1-\eta)\eta \varepsilon_0}{1-\varepsilon_0}\lV\BDeltaLk\BSigmas^{1/2}\rV_{\fro}^2+ \frac{\varepsilon_0\eta^2}{(1-\varepsilon_0)^2}\lV\BSigmas^{1/2} \BDeltaRk^{*}\rV_{\fro}^2\cr
 &\qquad\qquad\qquad + \frac{2\varepsilon_0\eta(1+\varepsilon_0)}{(1-\varepsilon_0)^4}\lV\BDeltaLk\BSigmas^{1/2}\rV_{\fro}\lV\BDeltaRk\BSigmas^{1/2}\rV_{\fro}\cr
 \le  &\left(\frac{(1-\eta)\eta \varepsilon_0}{1-\varepsilon_0}+\frac{\varepsilon_0\eta(1+\varepsilon_0)}{(1-\varepsilon_0)^4}\right)\lV\BDeltaLk\BSigmas^{1/2}\rV_{\fro}^2+ \left(\frac{\varepsilon_0\eta^2}{(1-\varepsilon_0)^2}+\frac{\varepsilon_0\eta(1+\varepsilon_0)}{(1-\varepsilon_0)^4}\right)\lV\BSigmas^{1/2} \BDeltaRk^{*}\rV_{\fro}^2.
\end{align*}
\end{small}%
For $\lV \bm{T}_4\rV_{\fro}$: Using the variational representation of the Frobenius norm, for some $\BL_{\BA}\in \mathbb{C}^{n_1\times r}$, $\lV\BL_{\BA}\rV_{\fro}=1$ we have
\begin{small}
\begin{align*}
&~\lV\bm{T}_4\rV_{\fro}\cr
 =&~\eta \lv\l \left(\frac{1}{p}\G\Pi_\Omega\G^{*}-\G\G^{*}\right)\left(\BDeltaLk\BRs^{*}+\BLn\BDeltaRk^{*}\right)\BRn\left(\BRn^{*}\BRn\right)^{-1}\BSigmas^{1/2},\BL_{\BA}\r\rv\cr
 \le&~\eta \lv\l \left(\frac{1}{p}\G\Pi_\Omega\G^{*}-\G\G^{*}\right)\left(\BDeltaLk\BRs^{*}\right),\BL_{\BA}\BSigmas^{1/2}\left(\BRn^{*}\BRn\right)^{-1}\BRs^*\r\rv\cr
 &+\eta \lv\l \left(\frac{1}{p}\G\Pi_\Omega\G^{*}-\G\G^{*}\right)\left(\BDeltaLk\BRs^{*}\right),\BL_{\BA}\BSigmas^{1/2}\left(\BRn^{*}\BRn\right)^{-1}\BDeltaRk^*\r\rv\cr
 &+\eta \lv\l \left(\frac{1}{p}\G\Pi_\Omega\G^{*}-\G\G^{*}\right)\left(\BLn\BDeltaRk^{*}\right),\BL_{\BA}\BSigmas^{1/2}\left(\BRn^{*}\BRn\right)^{-1}\BRn^*\r\rv\cr
 \le&~ \varepsilon_0\eta\lV\BDeltaLk\BSigmas^{1/2}\BV_{\star}\rV_{\fro} \lV\BL_{\BA}\BSigmas^{1/2}\left(\BRn^{*}\BRn\right)^{-1}\BSigmas^{1/2}\BV_{\star}\rV_{\fro}\cr
 &+\eta\sqrt{\frac{24 n\log n}{p}} \Big(\lV\BDeltaLk\BSigmas^{1/2}\rV_{2,\infty} \lV\BL_{\BA}\rV_{\fro} \lV\BRs\BSigmas^{-1/2}\rV_{2,\infty}\lV \BDeltaRk\left(\BRn^{*}\BRn\right)^{-1}\BSigmas^{1/2}\rV_{\fro}\cr
 &+%\eta\sqrt{\frac{24 n\log n}{p}} 
 \lV\BLn\BSigmas^{-1/2}\rV_{2,\infty} \lV\BL_{\BA}\BSigmas^{1/2}\left(\BRn^{*}\BRn\right)^{-1}\BSigmas^{1/2}\rV_{\fro} \lV\BDeltaRk\BSigmas^{1/2}\rV_{\fro}\lV \BRs\BSigmas^{-1/2}\rV_{2,\infty}\Big) \cr
% \le&~ \frac{\varepsilon_0\eta}{(1-\varepsilon_0)^2}\lV\BDeltaLk\BSigmas^{1/2}\rV_{\fro}\cr
% &~+\eta\sqrt{\frac{24 n\log n}{p}} \lV\BDeltaLk\BSigmas^{1/2}\rV_{2,\infty} \lV\BSigmas^{1/2}\left(\BRn^{*}\BRn\right)^{-1}\BSigmas^{1/2}\rV_{2} \lV\BRs\BSigmas^{-1/2}\rV_{2,\infty}\lV\BDeltaRk\BSigmas^{1/2}\rV_{\fro}\cr
% &~+\eta\sqrt{\frac{24 n\log n}{p}} \lV\BLn\BSigmas^{-1/2}\rV_{2,\infty} \lV\BSigmas^{1/2}\left(\BRn^{*}\BRn\right)^{-1}\BSigmas^{1/2}\rV_{2} \lV\BDeltaRk\BSigmas^{1/2}\rV_{\fro}\lV \BRs\BSigmas^{-1/2}\rV_{2,\infty} \cr
 \le&~ \frac{\varepsilon_0\eta}{(1-\varepsilon_0)^2}\lV\BDeltaLk\BSigmas^{1/2}\rV_{\fro}+\frac{2\eta(1+c\kappa)}{(1-\varepsilon_0)^3}\sqrt{\frac{24c_s^2\mu^2r^2 \log n}{m}}\lV \BDeltaRk\BSigmas^{1/2}\rV_{\fro}
\end{align*}
\end{small}%
where the second inequality follows from \Cref{projectionerr1,projectionerr2}, and the last inequality follows from \eqref{boundnorm:c}, \eqref{boundnorm:d}, \eqref{ineq:DeltaLl2inf}, \eqref{ineq:DeltaL2norm}, as well as the following inequalities: $\|\BL_{\BA}\BSigmas^{1/2}\left(\BRn^{*}\BRn\right)^{-1}\BSigmas^{1/2}\BV_{\star}^*\|_{\fro}\le \|\BL_{\BA}\BSigmas^{1/2}\left(\BRn^{*}\BRn\right)^{-1}\BSigmas^{1/2}\|_{\fro}\le \frac{1}{(1-\varepsilon_0)^2}$ and $\| \BDeltaRk\left(\BRn^{*}\BRn\right)^{-1}\BSigmas^{1/2}\|_{\fro}\le \frac{1}{(1-\varepsilon_0)^2}\| \BDeltaRk\BSigmas^{-1/2}\|_{\fro}\le \frac{1}{\sigmas_r(1-\varepsilon_0)^2}\| \BDeltaRk\BSigmas^{1/2}\|_{\fro}$.
\end{proof}

\section{Conclusion} 
In this paper, we introduce a novel non-convex algorithm, dubbed Hankel Structured Newton-Like Descent, for accelerating ill-conditioned robust Hankel recovery problems. The linear convergence guarantee has been established for the proposed algorithm; especially, its convergence rate is independent of the condition number of the underlying Hankel matrix. The empirical advantages of the proposed algorithm have been verified via extensive experiments on synthetic and real datasets.

{
\bibliographystyle{plain}
\bibliography{robustHMR}

\begin{thebibliography}{10}

\bibitem{cadzow1988signal}
James~A Cadzow.
\newblock Signal enhancement-a composite property mapping algorithm.
\newblock {\em IEEE Trans. Acoust., Speech, Signal. Process.}, 36(1):49--62,
  1988.

\bibitem{cai2021asap}
HanQin Cai, Jian-Feng Cai, Tianming Wang, and Guojian Yin.
\newblock Accelerated structured alternating projections for robust spectrally
  sparse signal recovery.
\newblock {\em IEEE Trans. Signal Process.}, 69:809--821, 2021.

\bibitem{cai2019accaltproj}
HanQin Cai, Jian-Feng Cai, and Ke~Wei.
\newblock Accelerated alternating projections for robust principal component
  analysis.
\newblock {\em J. Mach. Learn. Res.}, 20(1):685--717, 2019.

\bibitem{cai2023structured}
HanQin Cai, Jian-Feng Cai, and Juntao You.
\newblock Structured gradient descent for fast robust low-rank {Hankel} matrix
  completion.
\newblock {\em SIAM J. Sci. Comput.}, 45(3):A1172--A1198, 2023.

\bibitem{cai2024rtcur}
HanQin Cai, Zehan Chao, Longxiu Huang, and Deanna Needell.
\newblock Robust tensor {CUR} decompositions: Rapid low-tucker-rank tensor
  recovery with sparse corruption.
\newblock {\em SIAM J. Imaging Sci.}, 17(1):225--247, 2024.

\bibitem{cai2021ircur}
HanQin Cai, Keaton Hamm, Longxiu Huang, Jiaqi Li, and Tao Wang.
\newblock Rapid robust principal component analysis: {CUR} accelerated inexact
  low rank estimation.
\newblock {\em IEEE Signal Process. Lett.}, 28:116--120, 2021.

\bibitem{cai2021rcur}
HanQin Cai, Keaton Hamm, Longxiu Huang, and Deanna Needell.
\newblock Robust {CUR} decomposition: Theory and imaging applications.
\newblock {\em SIAM J. Imaging Sci.}, 14(4):1472--1503, 2021.

\bibitem{cai2024rccs}
HanQin Cai, Longxiu Huang, Chandra Kundu, and Bowen Su.
\newblock On the robustness of cross-concentrated sampling for matrix
  completion.
\newblock In {\em 58th Annual Conference on Information Sciences and Systems},
  pages 1--5, 2024.

\bibitem{cai2023ccs}
HanQin Cai, Longxiu Huang, Pengyu Li, and Deanna Needell.
\newblock Matrix completion with cross-concentrated sampling: Bridging uniform
  sampling and {CUR} sampling.
\newblock {\em IEEE Trans. Pattern Anal. Mach. Intell.}, 45(8):10100--10113,
  2023.

\bibitem{cai2021lrpca}
HanQin Cai, Jialin Liu, and Wotao Yin.
\newblock Learned robust {PCA}: A scalable deep unfolding approach for
  high-dimensional outlier detection.
\newblock In {\em Advances in Neural Information Processing Systems},
  volume~34, pages 16977--16989, 2021.

\bibitem{cai2018spectral}
Jian-Feng Cai, Tianming Wang, and Ke~Wei.
\newblock Spectral compressed sensing via projected gradient descent.
\newblock {\em SIAM J. Optim.}, 28(3):2625--2653, 2018.

\bibitem{cai2019fast}
Jian-Feng Cai, Tianming Wang, and Ke~Wei.
\newblock Fast and provable algorithms for spectrally sparse signal
  reconstruction via low-rank {H}ankel matrix completion.
\newblock {\em Appl. Comput. Harmon. Anal.}, 46(1):94--121, 2019.

\bibitem{candes2011robust}
Emmanuel~J Cand{\`e}s, Xiaodong Li, Yi~Ma, and John Wright.
\newblock Robust principal component analysis?
\newblock {\em Journal of the ACM}, 58(3):1--37, 2011.

\bibitem{candes2009exact}
Emmanuel~J Cand{\`e}s and Benjamin Recht.
\newblock Exact matrix completion via convex optimization.
\newblock {\em Found. Comut. Math.}, 9(6):717--772, 2009.

\bibitem{chandrasekaran2011rank}
Venkat Chandrasekaran, Sujay Sanghavi, Pablo~A Parrilo, and Alan~S Willsky.
\newblock Rank-sparsity incoherence for matrix decomposition.
\newblock {\em SIAM J. Optim.}, 21(2):572--596, 2011.

\bibitem{chen2015robust}
Ke~Chen and Mauricio~D Sacchi.
\newblock Robust reduced-rank filtering for erratic seismic noise attenuation.
\newblock {\em Geophysics}, 80(1):V1--V11, 2015.

\bibitem{chen2024laplacian}
Xinyu Chen, Zhanhong Cheng, HanQin Cai, Nicolas Saunier, and Lijun Sun.
\newblock Laplacian convolutional representation for traffic time series
  imputation.
\newblock {\em IEEE Transactions on Knowledge and Data Engineering},
  36(11):6490--6502, 2024.

\bibitem{chen2013low}
Yudong Chen, Ali Jalali, Sujay Sanghavi, and Constantine Caramanis.
\newblock Low-rank matrix recovery from errors and erasures.
\newblock {\em IEEE Trans. Inf. Theory}, 59(7):4324--4337, 2013.

\bibitem{chen2014robust}
Yuxin Chen and Yuejie Chi.
\newblock Robust spectral compressed sensing via structured matrix completion.
\newblock {\em IEEE Trans. Inf. Theory}, 60(10):6576--6601, 2014.

\bibitem{cherapanamjeri2017nearly}
Yeshwanth Cherapanamjeri, Kartik Gupta, and Prateek Jain.
\newblock Nearly optimal robust matrix completion.
\newblock In {\em International Conference on Machine Learning}, pages
  797--805, 2017.

\bibitem{fazel2013hankel}
Maryam Fazel, Ting~Kei Pong, Defeng Sun, and Paul Tseng.
\newblock {Hankel} matrix rank minimization with applications to system
  identification and realization.
\newblock {\em SIAM J. Matrix Anal. Appl.}, 34(3):946--977, 2013.

\bibitem{giampouras2024guarantees}
Paris Giampouras, HanQin Cai, and Rene Vidal.
\newblock Guarantees of a preconditioned subgradient algorithm for
  overparameterized asymmetric low-rank matrix recovery.
\newblock {\em arXiv preprint arXiv:2410.16826}, 2024.

\bibitem{gillard2018structured}
Jonathan Gillard and Konstantin Usevich.
\newblock Structured low-rank matrix completion for forecasting in time series
  analysis.
\newblock {\em Int. J. Forecast}, 34(4):582--597, 2018.

\bibitem{hamm2022RieCUR}
Keaton Hamm, Mohamed Meskini, and HanQin Cai.
\newblock Riemannian {CUR} decompositions for robust principal component
  analysis.
\newblock In {\em Topological, Algebraic and Geometric Learning Workshops
  2022}, pages 152--160. PMLR, 2022.

\bibitem{holland2011fast}
Daniel~J Holland, Mark~J Bostock, Lynn~F Gladden, and Daniel Nietlispach.
\newblock Fast multidimensional {NMR} spectroscopy using compressed sensing.
\newblock {\em Angew. Chem. Int. Ed.}, 50(29):6548--6551, 2011.

\bibitem{jacob2020structured}
Mathews Jacob, Merry~P Mani, and Jong~Chul Ye.
\newblock Structured low-rank algorithms: Theory, magnetic resonance
  applications, and links to machine learning.
\newblock {\em IEEE Signal Process. Mag.}, 37(1):54--68, 2020.

\bibitem{jin2016general}
Kyong~Hwan Jin, Dongwook Lee, and Jong~Chul Ye.
\newblock A general framework for compressed sensing and parallel {MRI} using
  annihilating filter based low-rank {Hankel} matrix.
\newblock {\em IEEE Trans. Comput. Imaging}, 2(4):480--495, 2016.

\bibitem{li2020robust}
Ji~Li, Jian-Feng Cai, and Hongkai Zhao.
\newblock Robust inexact alternating optimization for matrix completion with
  outliers.
\newblock {\em J. Comput. Math.}, 38(2):337--354, 2020.

\bibitem{li2021stable}
Weilin Li and Wenjing Liao.
\newblock Stable super-resolution limit and smallest singular value of
  restricted fourier matrices.
\newblock {\em Appl. Comput. Harmon. Anal.}, 51:118--156, 2021.

\bibitem{liao2016music}
Wenjing Liao and Albert Fannjiang.
\newblock Music for single-snapshot spectral estimation: Stability and
  super-resolution.
\newblock {\em Appl. Comput. Harmon. Anal.}, 40(1):33--67, 2016.

\bibitem{netrapalli2014non}
Praneeth Netrapalli, UN~Niranjan, Sujay Sanghavi, Animashree Anandkumar, and
  Prateek Jain.
\newblock Non-convex robust {PCA}.
\newblock In {\em Advances in Neural Information Processing Systems}, pages
  1107--1115, 2014.

\bibitem{qu2015accelerated}
Xiaobo Qu, Maxim Mayzel, Jian-Feng Cai, Zhong Chen, and Vladislav Orekhov.
\newblock Accelerated {NMR} spectroscopy with low-rank reconstruction.
\newblock {\em Angew. Chem. Int. Ed.}, 54(3):852--854, 2015.

\bibitem{recht2011simpler}
Benjamin Recht.
\newblock A simpler approach to matrix completion.
\newblock {\em J. Mach. Learn. Res.}, 12(12), 2011.

\bibitem{smith2014frequency}
Roy~S Smith.
\newblock Frequency domain subspace identification using nuclear norm
  minimization and {H}ankel matrix realizations.
\newblock {\em IEEE Trans. Automat. Contr.}, 59(11):2886--2896, 2014.

\bibitem{sun2021hankel}
Weifang Sun, Yuqing Zhou, Jiawei Xiang, Binqiang Chen, and Wei Feng.
\newblock {Hankel} matrix-based condition monitoring of rolling element
  bearings: an enhanced framework for time-series analysis.
\newblock {\em IEEE Trans. Instrum. Meas.}, 70:1--10, 2021.

\bibitem{tong2021accelerating}
Tian Tong, Cong Ma, and Yuejie Chi.
\newblock Accelerating ill-conditioned low-rank matrix estimation via scaled
  gradient descent.
\newblock {\em J. Mach. Learn. Res.}, 22:150--1, 2021.

\bibitem{wang2018hankel}
Chong Wang, Zhihui Zhu, Hanming Gu, Xinming Wu, and Shuaiqi Liu.
\newblock {Hankel} low-rank approximation for seismic noise attenuation.
\newblock {\em IEEE Trans. Geosci. Remote Sens.}, 57(1):561--573, 2018.

\bibitem{wei2020guarantees}
Ke~Wei, Jian-Feng Cai, Tony~F Chan, and Shingyu Leung.
\newblock Guarantees of {R}iemannian optimization for low rank matrix
  completion.
\newblock {\em Inverse Probl. Imaging}, 14(2):233, 2020.

\bibitem{yang2018sparse}
Zai Yang, Jian Li, Petre Stoica, and Lihua Xie.
\newblock Sparse methods for direction-of-arrival estimation.
\newblock In {\em Academic Press Library in Signal Processing, Volume 7}, pages
  509--581. Elsevier, 2018.

\bibitem{yi2016fast}
Xinyang Yi, Dohyung Park, Yudong Chen, and Constantine Caramanis.
\newblock Fast algorithms for robust {PCA} via gradient descent.
\newblock In {\em Advances in neural information processing systems}, pages
  4152--4160, 2016.

\bibitem{zhang2019correction}
Shuai Zhang and Meng Wang.
\newblock Correction of corrupted columns through fast robust {Hankel} matrix
  completion.
\newblock {\em IEEE Trans. Signal Process.}, 67(10):2580--2594, 2019.

\end{thebibliography}
}

\end{document}